\let\oldnl\nl
\newcommand{\nonl}{\renewcommand{\nl}{\let\nl\oldnl}}
\newtheorem{theorem}{Theorem}[section]
\newtheorem{corollary}[theorem]{Corollary}
\newtheorem{lemma}[theorem]{Lemma}
\newtheorem{definition}[theorem]{Definition}
\newtheorem{example}[theorem]{Example}
\newtheorem{observation}[theorem]{Observation}
\newcommand{\R}{\mathbb{R}}
\newcommand{\Pbb}{\mathbb{P}}
\newcommand{\np}{\lambda}
\newcommand{\us}{ S}
\newcommand{\zo}{ \mathbf{0}}
\newcommand{\bigo}{\mathcal{O}}
\newcommand{\Hcal}{\mathcal{H}}
\newcommand{\id}[1]{\mathbbm{1}_{#1}}
\DeclareMathOperator{\ConvHull}{ConvHull}
\DeclareMathOperator{\vol}{Vol}
\DeclareMathOperator{\cone}{Cone}
\DeclareMathOperator{\sign}{sign}
\DeclareMathOperator{\poly}{poly}
\DeclareMathOperator{\polylog}{polylog}
\DeclareMathOperator{\gtnc}{GTNC}
\DeclareMathOperator{\tnc}{TNC}
\DeclareMathOperator{\bin}{Bin}
\newcommand\underrel[2]{\mathrel{\mathop{#2}\limits_{#1}}}
\newcommand{\yields}[1]{\underrel{#1}{\implies}}
\newcommand{\euler}{e}
\newcommand{\q}[1]{``{#1}''}
\newcommand{\E}{\mathbb{E}}
\title{Noise-tolerant, Reliable Active Classification with Comparison Queries}
\author{%
  Max~Hopkins\thanks{Department of Computer Science and Engineering, UCSD, California, CA 92092. Email: \texttt{nmhopkin@eng.ucsd.edu}. Supported by NSF Award DGE-1650112.},
  Daniel~Kane\thanks{Department of Computer Science and Engineering / Department of Mathematics, UCSD, California, CA 92092. Email: \texttt{dakane@eng.ucsd.edu}. Supported by NSF Award CCF-1553288 (CAREER) and a Sloan
Research Fellowship.},
    Shachar~Lovett\thanks{Department of Computer Science and Engineering, UCSD, California, CA 92092. Email: \texttt{slovett@cs.ucsd.edu}. Supported by NSF Award CCF-1909634.},
    Gaurav Mahajan\thanks{Department of Computer Science and Engineering, UCSD, California, CA 92092. Email: \texttt{gmahajan@eng.ucsd.edu}.}
}
\begin{document}

\maketitle

\begin{abstract}
With the explosion of massive, widely available unlabeled data in the past years, finding label and time efficient, robust learning algorithms has become ever more important in theory and in practice. We study the paradigm of active learning, in which algorithms with access to large pools of data may adaptively choose what samples to label in the hope of exponentially increasing efficiency. By introducing comparisons, an additional type of query comparing two points, we provide the first time and query efficient algorithms for learning non-homogeneous linear separators robust to bounded (Massart) noise. We further provide algorithms for a generalization of the popular Tsybakov low noise condition, and show how comparisons provide a strong reliability guarantee that is often impractical or impossible with only labels - returning a classifier that makes no errors with high probability.
\end{abstract}
\newpage
\section{Introduction}
Due to the now ubiquitous presence of massive unlabeled datasets, recent years have seen an explosion in the search for computationally efficient, noise tolerant learning strategies that minimize the required amount of labeled data to learn a classifier. \textit{Active learning} is a formalization of the PAC-learning paradigm for unlabeled data. In active learning, the learning algorithm has access both to either a stream or pool of unlabeled data, and an oracle which can label the data on request. The complexity of learning certain classes is then defined by their query complexity, the number of oracle calls required to almost learn the classifier with high probability. The goal in active learning is to adaptively choose data to send to the oracle in such a way that one uses much fewer queries than in the labeled case.
\\
\\
While active learning saw initial success in the noise-free regime with simple concept classes such as thresholds in one dimension, lower bounds \cite{dasgupta2005analysis} soon showed that important classes such as linear separators gave no improvement over PAC-learning, even in only two dimensions. However, subsequent work showed that slight tweaks to the model could overcome this barrier. Balcan and Long \cite{balcan2013active} showed that by assuming that the data was drawn from a log-concave distribution -- a wide set of distributions including Gaussian distributions and uniform distributions over convex sets, learning homogeneous (through the origin) linear separators could be done in exponentially fewer queries than in the PAC model. Later, Balcan and Zhang \cite{s-concave} extended this to the more general class of s-concave distributions, a generalization of log-concavity that includes fat-tailed distributions as well. Rather than restricting the power of the adversary, Kane, Lovett, Moran, and Zhang \cite{kane2017active} studied the effect on query complexity of empowering the learner. By allowing the learner to ask more complicated questions of the oracle, such as comparing two points, Kane {et al.} \cite{kane2017active} showed that non-homogeneous linear separators in two-dimensions can be learned in exponentially fewer labeled samples than the PAC case. Later, Kane, Lovett, and Moran \cite{kane2018generalized} extended this to higher dimensions using a complicated set of queries, and Hopkins, Kane, and Lovett \cite{AID} did the same by assuming weak concentration and anti-concentration on the distribution -- conditions once again satisfied by s-concave distributions. 
\\
\\
While query efficient algorithms in high dimensions are an important step towards the use of active learning on real world data, it is equally important that algorithms be computationally efficient and noise tolerant. In an early work, Castro and Nowak \cite{castro2006upper} provided query efficient algorithms for thresholding in one dimension in the presence of bounded (Massart \cite{massart2006risk}) and unbounded (Tsybakov \cite{mammen1999smooth}) noise under the uniform distribution on $[0,1]$. Soon after, Balcan, Broder, and Zhang \cite{balcan2007margin} extended these results to $d$-dimensional homogeneous hyperplanes over a uniform distribution on a ball. Years later, Hanneke \cite{hanneke2011rates} offered a more general analysis for Tsybakov noise based off of the distributional complexity measure the disagreement coefficient, and later with Yang provided a distribution-free analysis \cite{hanneke2015minimax}. In another vein of work, Balcan and Long \cite{balcan2013active} provided an algorithm for learning $d$-dimensional homogeneous hyperplanes over nearly isotropic log-concave distributions with optimal query complexity for Tsybakov noise \cite{wang2016noise}, a result which was later extended by Awasthi, Balcan, Haghtalab and Urner \cite{awasthi2015efficient} to be computationally efficient for Massart noise when the distribution is restricted to uniform over the unit ball. Similarly, Balcan and Zhang \cite{s-concave} gave a computationally efficient algorithm for learning the more difficult adversarial noise model over s-concave distributions. Concurrently, Xu {et al.} \cite{xu2017noise} proposed using comparison queries as a sub-routine in previous algorithms to deal with noise in a computationally efficient manner, improving the overall query complexity along the way.
\\
\\
The comparison based methods of Xu {et al.} \cite{xu2017noise}, however, do not carry over to the algorithmic technique proposed by Kane {et al.} \cite{kane2017active} for learning non-homogeneous linear separators. Kane et al.'s technique is based upon logical inference. Viewing concept classes as the sign of an underlying family of functions, they build a learner via a linear program with constraints given by query solutions. As a result, the learners created by Kane et al.'s method actually fall into a stronger model than PAC-learning called Reliably and Probably Useful (RPU)-learning \cite{rivest1988learning}, variants of which have been studied more recently under a variety of names (e.g. KWIK learning \cite{li2011knows}, perfect selective classification \cite{el2012active}, or confident learning \cite{kane2017active}). In this model, the learner is not allowed to err, but may instead output \q{I don't know} a small fraction of the time. While Kane et al.'s RPU-learner is computationally efficient, it is not tolerant to noise -- the linear program is sensitive to errors in both labels and comparisons. This raises a natural question: can the inference based algorithms of Kane {et al.} be extended to noisy scenarios, and if so, does a strong reliability guarantee remain?
\\
\\
In this work we answer these questions in the positive for Massart and Tsybakov noise. In both cases our algorithms satisfy a noisy version of RPU-learning: with high probability the learner makes no errors at all. Due to their similarity to RPU-learners, we call learners that satisfy this property \textit{Almost Reliable and Probably Useful} (ARPU). Indeed, taking the limit of our reliability condition returns exactly the RPU model. Our work provides the first query and computationally efficient algorithm for PAC or ARPU-learning non-homogeneous linear separators in the presence of Massart noise over s-concave distributions, as well as more generally for hypothesis classes with finite inference dimension or small average inference dimension. In addition, we provide the first algorithm for ARPU-learning non-homogeneous linear separators under the Tsybakov Low Noise Condition.
\\
\\
Similar to how Xu {et al.} \cite{xu2017noise} use comparisons as a subroutine for correcting label errors, we use an approximate sorting scheme (modified from a seminal work from Braverman and Mossel \cite{braverman2009sorting} on sorting with noisy comparisons) to create a small set of points whose labels \textit{and} comparisons are correct with high probability. We then feed this cleaned set into an inference LP, and repeat the process in a boosting style algorithm based off of the framework of \cite{kane2017active}. By carefully curating the cleaned set at each step, we are able to use a symmetry argument from \cite{kane2017active} to prove that our learners have good coverage, while the guarantees of \cite{braverman2009sorting} and the inference framework give reliability.
\\
\\
Our algorithms require the use of comparison queries, an addition which we show is necessary in many cases for active PAC and ARPU-learning. Along with recalling lower bounds from \cite{AID} which show comparisons are necessary for efficiently active learning non-homogeneous hyperplanes, we show that in the noiseless case it is impossible to ARPU-learn the uniform distribution over $S^1$ in a finite number of label queries. Further, even with the addition of a margin assumption we show the existence of simple distributions which require a number of label queries that is exponential in dimension. Because Massart noise and certain instantiations of Tsybakov noise subsume the noiseless case, these results prove the existence of a large gap between labels and comparisons for noisy ARPU-learning.
\\
\\
Our paper proceeds as follows. In Sections \ref{prelims}, \ref{intro-results}, and \ref{techniques} we cover preliminaries, our main results, and our main techniques respectively. In Section~\ref{Sec:massart} we present query and computationally efficient algorithms for ARPU-learning hypothesis classes with finite inference dimension or super exponential average inference dimension under the Massart noise model, as well as a lower bound for ARPU-learning $S^1$ using only labels. In Section~\ref{Sec:GTNC} we present algorithms for ARPU-learning linear separators with margin and finite inference dimension or over distributions with weak distributional conditions under the Tsybakov Low Noise Condition, as well as a lower bound for ARPU-learning a corresponding distribution with margin using only labels

\subsection{Preliminaries}\label{prelims}
\subsubsection{Basic definitions} A hypothesis class is a pair $(X,\Hcal)$, where $X$ is a set, and $\Hcal$ is a class of functions $h\colon X \to \R$. Each function $h\in \Hcal$ is called a \textit{hypothesis}. We refer to $C_\Hcal = \{\sign(h)\colon h\in \Hcal\}$ as the associated \textit{concept} class. For example, when $\Hcal$ is the class of $\R^d \to \R$ affine functions, then the associated concept class $C_\Hcal$ is the class of $d$-dimensional half-spaces.

We consider the binary classification problem, where we want to predict the binary label $y$ for each instance $x$. We assume access to an underlying unknown distribution $D_X$ over $X$ and a label oracle $Q_L$. Querying $Q_L$ with unlabeled $x \in X$ generates a label $Q_L(x)$, drawn from unknown distribution $\Pbb(Q_L(x)|x)$. Note that querying $Q_L$ on the same point again would generate the same answer. We use notation $D_L$ to denote the joint distribution over examples $x$ and labels from $Q_L$:
\[
    \Pbb_{D_L}(x,y) = \Pbb_{D_X}(x) \Pbb(Q_L(x)=y|x)
\]

\subsubsection{PAC-Learning}
Probably Approximately Correct (PAC) learning is a probabilistic framework due to Valiant \cite{valiant1984theory} and Vapnik and Chervonenkis \cite{vapnik1974theory} for learning adversarially chosen classifiers and input distributions. In this model, given a set $X$ and a set $\Hcal$ of hypotheses $h: X \to \R$, an adversary first chooses distribution $D_L$ over $X\times Y$ with the marginal distribution $D_X$ over $X$. If $Y=\sign(h^\star(X))$ for some $h^\star\in \Hcal$, we call this realizable case learning. With no knowledge of the choice of distribution the learner draws labeled samples from $D_L$ with the goal of outputting $c = \sign(h)$ for some hypothesis $h \in \Hcal$ which minimizes loss over $D_L$:
\begin{equation*}
    L_{D_L}(c) \triangleq \mathbb{E}_{(x,y)\sim D_L}[\id{c(x) \neq y}].
\end{equation*}
In the realizable case, a hypothesis class $(X,\Hcal)$ is called PAC-learnable if $\forall \varepsilon, \delta$, there exists a learner $A$, where no matter the choice of the adversary, outputs a concept $A(S)$ such that:
\begin{equation*}
    \Pr_{S\sim D_X^n} [L_{D_L}(A(S)) \ge \varepsilon] \leq \delta.
\end{equation*}
Here $n=n(\varepsilon,\delta)$ is called the sample complexity, and must be $\poly(\frac{1}{\varepsilon},\frac{1}{\delta})$ for $(X,\Hcal)$ to be PAC-learnable.
\subsubsection{RPU-Learning}
Reliable and Probably Useful (RPU) learning is an alternative learning framework in which the learner is not allowed to make errors, but may instead respond \q{I don't know}, notated by \q{$\bot$}. Introduced by Rivest and Sloan \cite{rivest1988learning}, RPU learning was later studied under the name of Perfect Selective Classification by El-Yaniv and Weiner \cite{el2012active}, and confident learning by Kane, Lovett, Moran, and Zhang \cite{kane2017active}. Since it is easy to make a reliable learner by simply always outputting \q{$\bot$}, our learner must be useful, and with high probability cannot output \q{$\bot$} more than a small fraction of the time. Let $A$ be a reliable learner and $A(S)$ be the concept returned by the learner $A$ on training sample $S$, then we define the loss of $A(S)$ as the measure of unlearned samples:
\begin{equation*}
    L_{D_L}(A(S)) \triangleq \mathbb{E}_{(x,y)\sim D_L}[A(S)(x) = \bot].
\end{equation*}
We will commonly refer to $1-L_{D_L}(A(S))$ as the \textit{coverage} of $A(S)$. Sample complexity and learnability are then defined analogously to PAC-learning. Note that any point which is not labeled \q{$\bot$} by an RPU-learner is labeled correctly. 

\subsubsection{Comparison Queries}
Following the framework of \cite{kane2017active}, our learner will have access to more information than just the label of a point. We focus on one particularly natural additional query, the ability to compare points. A comparison query measures the relative distance of two points to the decision boundary. In other words, say that our goal is to identify photographs of diseased vs healthy patients. A comparison query asks: \q{which patient looks healthier?}. %let our concept class $C$ be given by the sign of an underlying family of functions $\Hcal$.
Formally, given an underlying function $h \in \Hcal$ and two points $x_1,x_2$, a comparison query asks which one of $h(x_1),h(x_2)$ is bigger. Equivalently:
\[
\sign(h(x_1) - h(x_2)) \geq 0?
\]
Similar to our label oracle $Q_L$, we define a comparison oracle $Q_C$. Querying $Q_C$ with two points $x_1,x_2 \in X$ generates a comparison result $Q_C(x_1, x_2)$, which is drawn from an unknown distribution $\Pbb(Q_C(x_1, x_2)| x_1,x_2)$. Along with their added theoretical power \cite{kane2017active,AID}, comparison queries are already used in practice in recommender systems \cite{rec} and ranking systems \cite{braverman2009sorting}, and in some scenarios have better accuracy than label queries \cite{xu2017noise}.

\subsubsection{Inference Dimension}
Inference dimension is a combinatorial complexity measure introduced by Kane {et al.} \cite{kane2017active} to characterize the query complexity in active learning when the learner is allowed to ask a more complicated set of questions. Given a set of binary queries $Q$, let $Q(S)$ denote the answers to all such queries on the sample $\us$. Let $S\subseteq X$ be an unlabeled sample. For $x\in X$ and $h\in \Hcal$, let
\[
    Q(\us) \yields{h} x
\] 
denote the statement that answers to binary queries from $Q$ on the sample $\us$ determine the label of $x$, when the learned concept is $\sign(h(x))$, corresponding to an hypothesis $h$. We will often say for shorthand that $S$ ``infers'' $x$, and sometimes drop the underlying classifier $h$. In this case the underlying function is assumed to be the Bayes optimal classifier. Inference dimension with respect to some query set $Q$ is defined as follows.
\begin{definition}[Inference dimension]
\label{def:ID}
The inference dimension of $(X,\Hcal)$
is the minimal number $k$ such that
for every $\us \subseteq X$ of size $k$,
and every $h\in \Hcal$ there exists $x\in \us$ such that
\[Q(\us\setminus\{x\}) \yields{h} x.\]
If no such $k$ exists then the inference dimension of $(X,\Hcal)$ is defined as $\infty$.
\end{definition}
Inference dimension is a worst case measure. Since we will be dealing with varying levels of distribution dependence, we will also take advantage of an average case version of inference dimension introduced in \cite{AID}. 
\begin{definition}[Average Inference Dimension]
\label{def:AID}
We say $(D_X, X, \Hcal)$ has average inference dimension $g(n)$, if: 
\[
\forall h\in \Hcal, Pr_{S \sim D_X^n}[ \nexists x\in S \ \text{s.t.} \ Q(S \setminus \{x\}) \yields{h} x] \leq g(n)
\]
\end{definition}
Average inference dimension is used to prove that the inference dimension of a finite sample drawn from $D_X$ cannot be too large with high probability. This allows us to build query efficient algorithms for hypothesis class with infinite inference dimension by proving that large finite samples do not take too many queries to learn with high probability.

\subsubsection{Noisy Learning}
Before we discuss our relaxation of RPU-learning, we formalize the presence of noise in our distributions. Given a hypothesis class $(X,\Hcal)$, we assume the Bayes optimal classifier is some hypothesis $h^\star \in \Hcal$ with decision boundary $h^\star(x) = 0$. Note that $h^\star$ itself can have non-zero error. To measure the noise in our model we define the conditional probability distributions $\beta_L$ and $\beta_C$:
\begin{align*}
    \beta_L(x) &= \Pr[Q_L(x) = \sign(h^\star(x)) | x]\\
    \beta_C(x_1,x_2) &= \Pr[Q_C(x_1,x_2) = \sign(h^\star(x_1) - h^\star(x_2))| x_1,  x_2]
\end{align*}
Note that for all the noise models discussed below, querying $Q_L$ on the same point again (and similarly querying $Q_C$ with the same pair of points again) would generate the same answer. This is a realistic model for the case where the oracle is a human expert who may err with some probability across different inputs, but will always return the same answer on the same input.

\paragraph{Massart Noise} Massart, or bounded noise, is a well studied model of noise throughout statistics and learning theory \cite{massart2006risk,balcan2013active,xu2017noise}. Massart noise is a tractable and realistic generalization of the standard random classification noise model \cite{angluin1988learning}, where the oracle flips its response with probability $p<1/2$. Similar to \cite{awasthi2015efficient,xu2017noise}, we say ``noisy'' oracles $Q_L$ and $Q_C$ satisfy Massart noise with parameter $\np>0$ if the conditional label and comparison distributions are such that 
\begin{align*}
    %&\frac{1}{2} |\Pr[Q_C(x_1,x_2)= \sign(f(x_1) - f(x_2))|x_1,x_2] - \Pr[Q_C(x_1,x_2)|x_1,x_2] \neq \sign(f(x_1) - f(x_2))| \geq \np\\
    \beta_L(x) &\geq \frac{1}{2} + \np ~\text{for all}~ x \in X\\
    \beta_C(x_1,x_2) &\geq \frac{1}{2} + \np ~\text{for all}~ x_1, x_2 \in X
\end{align*}
Equivalently, we say that $Q_L$ (resp. $Q_C$) satisfies Massart noise with parameter $\np$, if an adversary constructs $Q_L$ (resp. $Q_C$) by first taking the ``clean'' oracle $\bar Q_L$ (resp. $\bar Q_C$) and then flipping the result of the oracle with probability at most $\frac{1}{2} - \np$.

\paragraph{Tsybakov Low Noise Condition}
Massart error is restrictive in that the distributions $\beta_L$ and $\beta_C$ are bounded away from $\frac{1}{2}$ -- in reality, this may not be the case as examples approach the decision boundary. The Tsybakov Low Noise Condition (TNC) \cite{mammen1999smooth} offers an alternative: the closer an example is to the decision boundary, the closer its error to $1/2$. There is a natural extension of this intuition to comparison queries as well: comparisons made between arbitrarily close points should be arbitrarily noisy. A number of variants of TNC have been studied in the literature. Here we will follow the variant studied in \cite{castro2006upper,ramdas2013optimal}. Let $h^\star$ be the Bayes optimal classifier. We say $Q_L$ satisfies the Tsybakov Low Noise Condition with parameters, $m<M,$ $\varepsilon_0>0$, and $\kappa \geq 1$ $(\tnc(m,M,\kappa,\varepsilon_0))$ if $\forall x$:
\begin{align*}
    &\text{if} \ |h^\star(x)| \leq \varepsilon_0: &\frac{1}{2} + m|h^\star(x)|^{\kappa - 1}\leq \beta_L(x) &\leq \frac{1}{2} + M|h^\star(x)|^{\kappa - 1}\\
    &\text{else:}&\beta_L(x) &\geq \frac{1}{2} + m\varepsilon^{\kappa - 1}_0.
\end{align*} 
In other words, far away from the decision boundary $\beta_L(x)$ satisfies Massart noise, but approaches $1/2$ at a polynomial rate as $x$ approaches the decision boundary.
%i.e. we assume that up to distance $\varepsilon_0$ from the decision boundary, given by $h^\star(x)$, $\beta_L(x)$ increases polynomially and then is fixed. 
Similarly, $Q_C$ satisfies the Tsybakov Low Noise Condition with parameters, $m<M,$ $\varepsilon_0>0$, and $\kappa \geq 1$ $(\tnc(m,M,\kappa,\varepsilon_0))$ if $\forall x_1,x_2$:
\begin{align*}
    &\text{if} \ |h^\star(x_1)-h^\star(x_2)| \leq \varepsilon_0: &  \frac{1}{2} + m|h^\star(x_1) - h^\star(x_2)|^{\kappa - 1}\leq  \beta_C(x_1,x_2) &\leq \frac{1}{2} + M|h^\star(x_1) - h^\star(x_2)|^{\kappa - 1}\\
    &\text{else:} &\beta_C(x_1,x_2) &\geq \frac{1}{2} + m\varepsilon_0^{\kappa-1}.
\end{align*} 
Similar to the label case, $\beta_C(x_1,x_2)$ satisfies Massart noise for pairs of points $x_1,x_2$ which differ greatly with respect to $h^*$ and approaches 1/2 at a polynomial rate as $h^*(x_1)-h^*(x_2)$ approaches $0$.
%i.e. we assume that up to distance $\varepsilon_0$ between two points $x_1$ and $x_2$ measured w.r.t the decision boundary, given by $h^\star(x_1) - h^\star(x_2)$, $\beta_C(x_1,x_2)$ increases polynomially and then is fixed.
\paragraph{Generalized Tsybakov Low Noise Condition} The Tsybakov Low Noise Condition upper and lower bounds correctness by a particular function of distance. We will consider the direct generalization of this model where these bounds are replaced with arbitrary monotone increasing functions $g_L \leq g_U: [0,\varepsilon_0] \to [0,1/2]$. We say $Q_L$ satisfies the Generalized Tsybakov Low Noise Condition with parameters $(g_L,g_U,\varepsilon_0)$ if $\forall x$:
\begin{align}
\label{eqn:gtnc-label-1}
    &\text{if} \ |h^\star(x)| \leq \varepsilon_0:&\frac{1}{2} + g_L(|h^\star(x)|) \leq \beta_L(x) &\leq \frac{1}{2} + g_U(|h^\star(x)|)\\
    &\text{else:} &\beta_L(x) &\geq \frac{1}{2} + g_L(\varepsilon_0)
\end{align}
Similarly, we say $Q_C$ satisfies the Generalized Tsybakov Low Noise Condition with parameters $(g_L,g_U,\varepsilon_0)$ if $\forall x_1,x_2$:
\begin{align}
\label{eqn:gtnc-1}
    &\text{if} \ |h^\star(x_1)-h^\star(x_2)| \leq \varepsilon_0: &\frac{1}{2} + g_L(|h^\star(x_1) - h^\star(x_2)|) \leq \beta_C(x_1,x_2) &\leq \frac{1}{2} + g_U(|h^\star(x_1) - h^\star(x_2)|)\\
    &\text{else:} &\beta_C(x_1,x_2) &\geq \frac{1}{2} + g_L(\varepsilon_0)
\end{align}
For notational convenience, we will sometimes write $g_L(x)=g_L(\varepsilon_0)$ for $x>\varepsilon_0$. In addition, since we will often need to compose $g_L$ and $g_U^{-1}$, we will use the simplified notation: 
\[
G_c(x)=g_U^{-1}\left( \frac{g_L(x)}{c}\right )
\]
where $c$ is some constant.

\subsubsection{ARPU-Learning}
\label{section:arpu}
RPU learning suffers from an inability to deal with noise. We introduce the learning framework \textit{Almost Reliable and Probably Useful Learning} (ARPU-Learning), a relaxation of RPU-learning that allows for noise, but keeps stronger reliability guarantees than PAC-learning. Recall that given a distribution $D_L$ over $X\times Y$, for a reliable learner $A$ and sample $S$, we define the loss of $A(S)$ as the measure of unlearned samples:
\begin{equation*}
    L_{D_L}(A(S)) \triangleq \mathbb{E}_{(x,y)\sim D_L}[A(S)(x) = \bot].
\end{equation*} We will commonly refer to $1-L_{D_L}(A(S))$ as the \textit{coverage} of $A(S)$. A model is a pair $(\mathcal{Q},\mathcal D_X)$ where $\mathcal{Q}$ is a set of oracles $(Q_L,Q_C)$ and $\mathcal D_X$ is a set of distributions over $X$. In ARPU-Learning, given a hypothesis class $(X, \Hcal)$ and a model $(\mathcal{Q}, \mathcal D_X)$, an adversary chooses a distribution $D_X$ from $\mathcal D_X$ and the \q{noisy} oracles $(Q_L, Q_C)$ from $\mathcal{Q}$, which induces a distribution $\tilde D_L$ over $X\times Y$ given by:\[
  \Pbb_{\tilde D_L}(x,y) = \Pbb_{D_X}(x) \Pbb(Q_L(x)= y|x).  
\] 
\begin{definition}[ARPU-Learnable] 
\label{def:arpu-learnable}
We say that a hypothesis class $(X,\Hcal)$ is ARPU-learnable under model $(\mathcal{Q}, \mathcal D_X)$ if $\forall \delta_r, \delta_u, \varepsilon > 0$, there exists a learner $A$ which is 
\begin{enumerate}
    \item Probably useful: with high probability, the learner will have large coverage: \begin{align}
    \Pr_{S\sim D_X^n} [L_{\tilde D_L}(A(S)) < \varepsilon] \geq 1- \delta_u,
    \label{eqn:useful}
\end{align} 
\item Reliable: with high probability, the learner will not make a mistake:
\begin{align}
        \Pr_{S\sim {D}_X^n}[\forall x \in X, ~ A(S)(x) \in \{ h^\star(x), \bot\}] \geq 1-\delta_r.
        \label{eqn:reliable}
\end{align} where $h^\star$ is the Bayes optimal classifier and $n=n(\varepsilon,\delta_r,\delta_u)$ is $\text{poly}\left(\frac{1}{\varepsilon},\frac{1}{\delta_r},\frac{1}{\delta_u}\right)$.
\end{enumerate}
\end{definition}
Note that in both Equations \eqref{eqn:useful} and \eqref{eqn:reliable} the probability is over the randomness of the algorithm, sample $S$, and noisy oracles $Q_L$, $Q_C$ chosen by the adversary. Also, in comparison to PAC learning, all point which are not labeled \q{$\bot$} by an ARPU-learner are labeled correctly with high probability and setting $\delta_r=0$ reduces exactly to RPU learning. Sample complexity and learnability are then defined equivalently to PAC-learning. Finally, we will refer to learners that satisfy condition \eqref{eqn:useful} as $\delta_u$-useful, and learners that satisfy condition \eqref{eqn:reliable} as $\delta_r$-reliable.  While the logical inference technique previously used to build RPU learners \cite{kane2017active,AID} are very sensitive to noise, we show in later sections how to modify those techniques to build ARPU-learners. 

\subsubsection{Passive vs Active learning}
PAC-learning traditionally is applied to supervised learning, where the learning algorithm receives pre-labeled samples. We call this paradigm passive learning. In contrast, active learning refers to the case where the learner receives unlabeled samples and may adaptively query a labeling or comparison oracle. Similar to the passive case, for active learning we study the query complexity as the minimum number of queries to learn some pair $(X, \Hcal)$ in either the PAC, RPU or ARPU-learning model. In general, passive learners learn concept classes up to error $\varepsilon$ in $\Theta(1/\varepsilon)$ samples. We add to a long line of work \cite{castro2006upper,balcan2013active,awasthi2015efficient,s-concave,kane2017active,kane2018generalized} showing that active learning can achieve such learning in only $\text{polylog}(1/\varepsilon)$ queries on important concept classes.

\subsection{Our Results}\label{intro-results}
In this work, we study ARPU-learning (Section \ref{section:arpu}) under two widely studied noise models: Massart Noise and the Generalized Tsybakov Low Noise Condition.

\subsubsection{Notation}\label{sec:notation}
We use notation where $X$ is the instance space, $\Hcal$ is the set of hypothesis from $X\to \R$, $H_d$ is the class of linear separators in $\R^d$ (corresponding to affine functions $h:\R^d \to \R$), and $H_{d,\gamma}$ is the class of linear separators in $\R^d$ with margin $\gamma$ from $X$. Since previous work \cite{balcan2013active,awasthi2015efficient} refers to the class of homogeneous linear separators as simply ``linear separators,'' we will often refer to $H_d$ as ``non-homogeneous linear separators'' to differentiate our results. For noise models, $M(\np)$ is the set of all oracles which satisfy Massart noise with parameter $\np$, $\gtnc(g_L,g_U,\varepsilon_0)$ is the set of all oracles which satisfy the Generalized Tsybakov Low Noise Condition with parameters $(g_L,g_U,\varepsilon_0)$, and $\tnc(m,M,\kappa,\varepsilon_0)$ is the set of all oracles satisfying the Tsybakov Low Noise Condition with parameters ($m,M,\kappa,\varepsilon_0$). A model is a pair $(\mathcal{Q},\mathcal D_X)$ where $\mathcal{Q}$ is a set of oracles $(Q_L,Q_C)$ and $\mathcal D_X$ is a set of distributions  over $X$. For distributions over instance space $X$ or $\R^d$, \begin{enumerate}
    \item $\mathcal C_X$ is the class of all continuous distributions over $X$,
    \item $\mathcal{LC}_d$ is the class of all log-concave distribution on $\mathbb{R}^d$,
    \item $\mathcal{SC}_d$ is the class of all s-concave distributions on $\mathbb{R}^d$ for $s\geq -\frac{1}{2d+3}$,
    \item $\mathcal{ISC}_d$ is the class of all isotropic s-concave distributions on $\mathbb{R}^d$ for $s\geq -\frac{1}{2d+3}$,
    \item $\mathcal{ACC}_{d,c_1,c_2}$ is the class of all continuous distributions $D$ which satisfy the following concentration and anti-concentration inequalities:
\begin{enumerate}
\item $\forall \alpha > 0$, $Pr_{x \sim D}[||x|| > d\alpha] \leq \frac{c_1}{\alpha}$
\item $\forall \alpha > 0, v \in \R^d, \|v\|=1, b\in \R$, $Pr_{x \sim D}[|\langle x,v \rangle + b| \leq \alpha] \leq c_2\alpha$
\end{enumerate}
    \item For hypothesis class $(X,\Hcal)$, $\mathcal A_{(X,\Hcal),a,f(d)}$ is the class of all continuous distributions $D_X$ over $X$ such that $(D_X, X, \Hcal)$ has average inference dimension $g(n) \leq 2^{-\Omega \left(\frac{n^{1+a}}{f(d)}\right)}$.
\end{enumerate}
We will call an algorithm sample (respectively time) efficient if it uses $\poly(d,\frac{1}{\varepsilon},\frac{1}{\delta_r},\frac{1}{\delta_u})$ samples (respectively time), and query efficient if it uses $\poly(d,\log \frac{1}{\varepsilon}, \log \frac{1}{\delta_r}, \log \frac{1}{\delta_u} )$ queries. Finally, for some parameter $n$ (e.g. dimension, error) and function $f: \mathbb{R} \to \mathbb{R}$, for the sake of readability we will often use the notation $\tilde{\bigo}(f(n))$ to ignore multiplicative factors that are logarithmic in $f(n)$.
\subsubsection{Massart Noise}
To begin, we show that under the Massart noise model, finite inference dimension (Definition \ref{def:ID}) implies computationally efficient ARPU-learning with exponentially better query complexity than any passive PAC-learner\footnote{Computational efficiency holds for $\lambda^{-1} = \tilde{\bigo}(\log^{1/5}(1/\varepsilon))$, query efficiency for $\lambda^{-1} = \polylog(1/\varepsilon)$.}. Recall $M(\np)$ is the set of all oracles which satisfy Massart noise with parameter $\np$, $\mathcal C_X$ is the class of all continuous distributions over $X$, and a model is a pair $(\mathcal{Q},\mathcal D_X)$ where $\mathcal{Q}$ is a set of oracles $(Q_L,Q_C)$ and $\mathcal D_X$ is a set of distributions over $X$. Note that in the ARPU-Learning model (Definition \ref{def:arpu-learnable}), given a hypothesis class $(X, \Hcal)$ and a model $(\mathcal{Q}, \mathcal D_X)$, an adversary chooses a distribution $D_X$ from $\mathcal D_X$ and the \q{noisy} oracles $(Q_L, Q_C)$ from $\mathcal{Q}$.
\begin{theorem}[Finite Inference Dimension $\implies$ ARPU-Learning under Massart Noise]
\label{intro:massart}
Let the hypothesis class $(X,\Hcal)$, $X \subseteq \mathbb{R}^d$, have inference dimension $k$ with respect to comparison queries. Then, $(X,\Hcal)$ is ARPU-learnable under model $(M(\np), \mathcal C_X)$ in time $\poly\left(d,k,\frac{1}{\delta_r},\frac{1}{\varepsilon},\log(\frac{1}{\delta_u})\right)^{\tilde{O}\left (\frac{1}{\lambda^5} \right )}$, uses only $\poly\left(k,\frac{1}{\np},\frac{1}{\varepsilon},\log(\frac{1}{\delta_r}),\log(\frac{1}{\delta_u}))\right)$ unlabeled samples, and has a query complexity of
\begin{align*}
    q(\varepsilon,\delta_r,\delta_u) = \tilde \bigo \left(k \frac{1}{\np^{10}} \log\frac{1}{\varepsilon} \log^2 \frac{1}{\delta_r} \log \frac{1}{\delta_u}\right)
\end{align*}
for $\delta_r \leq 1/2$.
\end{theorem}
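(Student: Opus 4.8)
The plan is to follow the inference-LP framework of Kane et al.\ \cite{kane2017active}, but to precede each application of the LP with a \emph{cleaning} subroutine that produces a small sample whose labels \emph{and} comparisons are all correct with high probability, and to wrap the whole thing in a boosting loop. The central difficulty is that the noise is \emph{persistent}: re-querying $Q_L$ on a point, or $Q_C$ on a pair, returns the same (possibly wrong) answer, so one cannot clean a \emph{fixed} query by repetition. Instead, the correctness of a comparison between two fixed points is certified indirectly, by their relative ranks inside a large, approximately sorted pool. Concretely: draw an unlabeled pool $P$ of size $\poly(k,1/\np)$ from $D_X$, approximately sort it with $\tilde{\bigo}(|P|\cdot\poly(1/\np))$ comparison queries using a Massart-tolerant adaptation of the Braverman--Mossel noisy-sorting algorithm \cite{braverman2009sorting} (yielding an order $\pi$ with few inversions with respect to $h^\star$, and -- crucially -- inversions that are \emph{localized} between close ranks), query the pool's labels, and robustly estimate the rank at which $\sign(h^\star)$ changes. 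Then pick a subsample $S' \subseteq P$ of $\poly(k)$ points that are far apart in $\pi$ and far from the estimated sign-change rank. Because the inversions are few and local, with probability $1-\delta$ each point of $S'$ gets its true label and each pair in $S'$ is $\pi$-ordered in agreement with $h^\star$; that is, $S'$ arrives with labels and comparisons consistent with $h^\star$. Forcing \emph{all} of $S'$'s labels and \emph{all} $\binom{|S'|}{2}$ of its comparisons to be simultaneously correct is what blows up the pool size and sorting effort polynomially in $1/\np$, and is the ultimate source of the $\np^{-10}$ in the query bound and of the $1/\np^5$ in the exponent of the running time.

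\paragraph{Inference and reliability.} Feed $S'$ with its trusted labels and comparisons into the inference procedure: regard $h \in \Hcal$ as unknown, impose the sign constraints from $S'$ (for $\Hcal$ affine these are linear -- the offset cancels in comparison constraints -- so this is an LP feasibility test, exactly as in \cite{kane2017active}), and on a fresh point $x$ output $+$ (resp.\ $-$) iff adding ``$h(x)\le 0$'' (resp.\ ``$h(x)\ge 0$'') is infeasible, and $\bot$ otherwise. Since $h^\star$ satisfies every constraint, any non-$\bot$ output agrees with $\sign(h^\star(x))$; this is the per-round reliability. For coverage, finite inference dimension $k$ gives -- via the peeling argument on $S'$ plus a fresh exchangeable test point $x\sim D_X$ -- that $x$ is left $\bot$ with only constant probability once $|S'| = \Theta(k)$. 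Because the cleaning subroutine treats the pool points symmetrically, exchangeability of $(P,x)$ is preserved, so the symmetry argument of \cite{kane2017active} still applies despite the near-boundary points discarded during cleaning; this is the delicate point that keeps coverage from being spoiled by the cleaning bias.

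\paragraph{Boosting and accounting.} Run $T = O(\log(1/\varepsilon))$ rounds. In round $i$, draw the pool from $D_X$ conditioned on the current $\bot$-region (rejection sampling, free since it uses only unlabeled data), clean it, run the inference procedure, and commit to the label of $x$ as soon as any round does. Each round kills a constant fraction of the uncovered mass, so after $O(\log(1/\varepsilon))$ rounds the coverage is $\ge 1-\varepsilon$; a Chernoff bound over the independent rounds upgrades this to the $1-\delta_u$ guarantee, contributing the $\log(1/\delta_u)$ factor. Setting each round's cleaning to fail with probability at most $\delta_r/T$ and union-bounding over rounds (with the nested high-probability events inside a single cleaning) gives $\delta_r$-reliability and the $\log^2(1/\delta_r)$ factor. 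Summing the pool sizes over rounds gives the stated $\poly(k,1/\np,1/\varepsilon,\log(1/\delta_r),\log(1/\delta_u))$ unlabeled-sample bound and the $\tilde{\bigo}(k\,\np^{-10}\log(1/\varepsilon)\log^2(1/\delta_r)\log(1/\delta_u))$ query bound, while the per-round cost of the cleaning subroutine and the LP -- polynomial in $d$, $k$, $1/\delta_r$, $1/\varepsilon$, $\log(1/\delta_u)$, but raised to a power governed by the size of the objects the cleaning must certify -- gives the $\poly(\cdot)^{\tilde{\bigo}(1/\np^5)}$ running time.

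\paragraph{Main obstacle.} The crux is the cleaning step. Under persistent Massart noise, neither labels nor comparisons can be cleaned by repetition, so the entire reliability guarantee rests on porting the Braverman--Mossel sorting machinery to the $\tfrac12+\np$ regime and then proving the much stronger statement that a carefully chosen, well-separated subsample has \emph{all} its labels and \emph{all} its pairwise comparisons correct at once -- not merely ``most comparisons correct.'' This is exactly where the $\np$-dependence enters and where comparison queries (as opposed to label queries alone) become indispensable. A secondary, more technical hurdle is verifying that discarding near-boundary pool points during cleaning does not destroy the exchangeability on which the coverage argument of \cite{kane2017active} relies.
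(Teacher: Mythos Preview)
Your proposal is essentially the paper's approach: Braverman--Mossel sorting to obtain an order with only local inversions, selecting well-separated points so that all their labels (via comparison to a fictitious zero element) and pairwise comparisons are simultaneously correct, feeding these into the inference LP, and boosting via rejection sampling on the un-inferred region. The accounting for the $\lambda^{-10}$, the $\log^2(1/\delta_r)$, and the $\tilde O(1/\lambda^5)$ time exponent is also right.

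The one real gap is the exchangeability step you yourself flag as ``delicate.'' You take the clean set $S'$ as a well-separated subset of the sorted pool $P$ and assert symmetry survives because the cleaning treats pool points symmetrically. But the MLE order is a \emph{joint} function of all of $P$: adding the test point $x$ to $P$ before sorting can change the entire order and hence which points end up in $S'$, so the identity $\Pr[\text{fresh }x\text{ inferred by }S'(P)] = \E[\text{fraction of }P\cup\{x\}\text{ inferred by the rest}]$ does not follow from your description. The paper does not repair this argument; it sidesteps it. After sorting the pool $S$, it draws a \emph{fresh} i.i.d.\ set $S'$ of size $32k+16$ and slots each $S'$-point into the already-fixed order on $S$ \emph{independently of the other $S'$-points} (Lemma~\ref{massart:slot}). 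Because slotting one $S'$-point does not affect another, the symmetry argument of \cite{kane2017active} applies directly to $S'\cup\{x\}$. A separate Chebyshev argument on the uniform random interleaving of $S$ and $S'$ (Lemma~\ref{slot-bound}) then shows that with constant probability at least $4k$ of the $S'$-points land in pairwise non-adjacent blocks of $S$ and away from the zero element, yielding the clean subsample on which Observation~\ref{ID:total} is invoked. This two-pool device---sort one pool, independently slot a second---is the missing idea in your write-up.
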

To put this result into context, we note two lower bounds which together with Theorem \ref{intro:massart} show a separation between passive and active learning, and label only and comparison based ARPU-learning. In the case of passive, comparison based PAC-learning, we recall the  $\Omega\left( \frac{1}{\varepsilon} \right)$ lower bound from \cite{AID}. For label only APRU-learning, we present a lower bound novel to this work:
\begin{lemma}\label{intro:Massart:lb}
The query complexity of $1/4$-reliably, $1/8$-usefully ARPU learning $(S^1,H_2)$ with $1/2$-coverage under model $(M(\np), \mathcal C_X)$ is infinite:
\[
q(1/2,1/4,1/8) = \infty
\]
\end{lemma}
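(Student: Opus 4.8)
The plan is to exhibit an adversary strategy that, using only label queries, can always force the learner into a situation where reliability forbids any useful coverage. The key point is that with label queries alone, on the circle $S^1$ under Massart noise, the learner gains essentially no logical certainty about the Bayes optimal classifier: every label it receives is consistent with a large family of candidate halfspaces. I would set up the argument by first observing that any ARPU-learner must make its prediction $A(S)(x)$ without ever being ``wrong,'' so on any point $x$ where there remain two consistent hypotheses $h_1, h_2 \in H_2$ with $\sign(h_1(x)) \neq \sign(h_2(x))$, the learner is forced to output $\bot$. The task therefore reduces to showing the adversary can keep the version space ``ambiguous'' on a set of $D_X$-measure at least $1/2$, no matter which finitely many points the learner queries.

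Concretely, I would take $D_X$ to be the uniform distribution on $S^1$ and argue as follows. Since labels on individual points under Massart noise only tell the learner a possibly-flipped bit, and (crucially) the learner cannot resolve noise because re-querying the same point returns the same answer, the adversary can always answer queries so that the observed label pattern is simultaneously realizable by two genuine halfspaces whose decision regions differ on a large arc. The cleanest way to formalize this: after the learner commits to its (finite, possibly adaptive) query set $q_1,\dots,q_m$, note these points partition $S^1$ into at most $m$ arcs; the adversary picks two linear separators $h_1,h_2$ that agree on all queried points (e.g.\ both classifying all $q_i$ the same way — achievable since $m$ points on the circle can be shattered appropriately, or more simply by choosing the separators' boundary chords to miss all $q_i$ and have the same sign on them) but whose symmetric difference on $S^1$ has measure at least $1/2$. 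The adversary answers all label queries consistently with both (say, using the common clean labels, which trivially satisfies Massart noise with any $\np$, as the clean oracle has $\beta_L \equiv 1$). Then on the symmetric-difference region, the learner must output $\bot$ to stay $1/4$-reliable against the adversary who could have picked either $h_1$ or $h_2$; hence coverage is at most $1/2 < 1/2$ is impossible to beat — more precisely, the learner cannot achieve coverage $1/2$ with failure probability below the stated thresholds, so $q(1/2,1/4,1/8)=\infty$.

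The main obstacle is the geometric claim that for any $m$ points on $S^1$ one can find two halfspaces agreeing on those points whose decision regions on $S^1$ differ by measure at least $1/2$. I would handle this by a direct construction: among the $m$ query points there is always a pair of adjacent points (in cyclic order) subtending an arc of length at least $2\pi/m$, but we actually want a fixed large gap, so instead I would fix two antipodal-ish chords — take $h_1$ to be a separator whose boundary line barely cuts off a tiny cap containing none of the $q_i$ on one side (all $q_i$ on the ``$+$'' side), and $h_2$ whose boundary cuts off a cap of measure just under $1/2$ again containing no $q_i$ (possible: a half-circle's worth of the $q_i$-complement always contains an open half-arc free of query points only if... ). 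Cleaner: since there are finitely many $q_i$, a random rotation of a fixed separator that cuts $S^1$ into a $(1/2,1/2)$ split has, with positive probability, its boundary avoiding all $q_i$; fixing such a split as $h_2$ and a trivial split as $h_1$, both classify all $q_i$ as ``$+$'' after possibly flipping $h_2$'s orientation — this requires a short argument that we can orient things so the $q_i$-labels coincide, which holds because we may first pick $h_1$ with all $q_i$ positive, then pick $h_2$ as any rotation avoiding the $q_i$ with the half containing more mass oriented negative, and if that conflicts with $q_i$ positivity we instead place that mass on an arc disjoint from all $q_i$, again possible by the finiteness of the query set. I would also need to confirm the lower bound is genuinely about query complexity (not sample complexity): since $D_X$ is continuous the unlabeled sample reveals nothing about $h^\star$, so the argument goes through for any number of unlabeled samples and the obstruction is purely the finiteness of label queries.
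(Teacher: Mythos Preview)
Your approach has a genuine gap. The central geometric claim --- that for any finite query set $q_1,\ldots,q_m$ on $S^1$ one can find two halfspaces $h_1,h_2$ agreeing on all $q_i$ whose decision regions on $S^1$ differ on a set of measure at least $1/2$ --- is false. A halfspace meets $S^1$ in an arc, so the symmetric difference of the positive regions of $h_1$ and $h_2$ is a union of at most two arcs, and that symmetric difference must be disjoint from $\{q_1,\ldots,q_m\}$. If the learner places its $m$ queries roughly evenly around the circle, every arc avoiding all $q_i$ has normalized length at most $1/m$, so the symmetric difference has measure at most $2/m \ll 1/2$. Your attempted patches (random rotation, choosing orientation) guarantee only that the \emph{boundary} of $h_2$ misses the $q_i$, not that all $q_i$ lie on one side of it --- and for well-spread $q_i$ no halfspace puts them all on one side. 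There is also a structural problem: you let the adversary pick $h_1,h_2$ \emph{after} observing the query set, whereas the adversary must commit (via Yao, to a distribution over hypotheses) \emph{before} the learner acts. With only two fixed hypotheses in the support, a learner who knows the adversary's strategy simply queries a point in their symmetric difference and distinguishes them in one step.

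The paper sidesteps both issues by having the adversary put a \emph{continuous} prior on hypotheses: the uniform distribution over tangent hyperplanes to $S^1$, each of which makes exactly one point of the circle negative and the rest positive. Any finite query set then misses the (random) negative point with probability $1$, so the deterministic learner almost surely sees all-positive answers and outputs a single fixed classifier. That classifier must assign a non-$\bot$ label to a set $R$ of measure at least $1/2$; labeling any point of $R$ as ``$-$'' is wrong with probability $1$, and labeling all of $R$ as ``$+$'' is wrong whenever the negative point falls in $R$, i.e.\ with probability at least $1/2$. Either way $1/4$-reliability is violated. The idea you are missing is that two hypotheses can never suffice here: you need a family rich enough that finitely many label queries cannot separate its members, which the tangent-hyperplane family achieves because any two of its members agree except on a measure-zero set.
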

Together, these bounds show that comparison based active learning provides not only an exponential improvement in query complexity over any passive PAC-learner, but also an infinite improvement over any active ARPU-learner using only labels. Further, Theorem~\ref{intro:massart} provides the first algorithm for learning noisy non-homogeneous linear separators in two dimensions which is time, sample, and query efficient in the sense of Section~\ref{sec:notation}, since the inference dimension of $(\mathbb{R}_2,H_2)$ is 5 \cite{kane2017active}. If the instance space has bounded bit-complexity or minimal-ratio, the result also implies an efficient learner for higher dimensional non-homogeneous linear separators. 
\\
\\
Bounded bit-complexity and minimal-ratio, however, are assumptions that may not hold on real-world data. Instead, we will take a path inspired by the recent explosion of work in data science \cite{chapelle2009semi} that focuses on weakly restricting the distribution over data to beat lower bounds based off of improbable adversarial examples. While inference dimension itself is not applicable in this scenario, we will employ its average case variant, average inference dimension (Definition \ref{def:AID}). In particular, we provide a computationally efficient algorithm for learning under Massart noise under the assumption that the hypothesis class and distribution have super-exponential average inference dimension, a fact true for non-homogeneous linear separators and comparison queries across a wide range of distributions \cite{AID}. Given a hypothesis class $(X,\Hcal)$, recall $\mathcal A_{(X,\Hcal),a,f(d)}$ is the class of all continuous distributions $D_X$ over $X$ such that $(D_X, X, \Hcal)$ has average inference dimension $g(n) \leq 2^{-\Omega \left(\frac{n^{1+a}}{f(d)} \right)}$ for some $a>0$ and function of dimension $f(d)$. Then,
\begin{theorem}[Average Inference Dimension $\implies$ ARPU-Learning under Massart Noise]
\label{intro:massart:aid}
Consider any hypothesis class $(X,\Hcal)$, $X \subseteq \mathbb{R}^d$, and corresponding class of distributions $\mathcal A_{(X,\Hcal),a,f(d)}$. Then, $(X,\Hcal)$ is ARPU-learnable under model $(M(\np), \mathcal A_{(X,\Hcal),a,f(d)})$ in time $\poly\left(f(d), \frac{1}{\delta_r}, \frac{1}{\varepsilon}, \log(\frac{1}{\delta_u})\right)^{\tilde{O}\left (\frac{1}{\lambda^5} \right )}$, uses only $\poly\left(f(d),\frac{1}{\np},\log(\frac{1}{\varepsilon}),\log(\frac{1}{\delta_r}),\log(\frac{1}{\delta_u}))\right)$ unlabeled samples, and has a query complexity of
\[
    q(\varepsilon,\delta_r,\delta_u) = \tilde \bigo \left( \frac{f(d)^{1/a}}{\np^{10}} \log^{2+1/a}\frac{1}{\varepsilon} \log^2 \frac{1}{\delta_r} \log \frac{1}{\delta_u}\right),
\]
for small enough $\delta_r$.
\end{theorem}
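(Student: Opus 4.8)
The plan is to reduce this statement to Theorem~\ref{intro:massart}: although $(X,\Hcal)$ may have infinite inference dimension, every sufficiently large i.i.d.\ sample from a distribution in $\mathcal A_{(X,\Hcal),a,f(d)}$ has small inference dimension with overwhelming probability, so the finite-inference-dimension machinery can be applied batch-by-batch inside the boosting loop. Concretely, I would first prove a de-randomization of average inference dimension. Observe that the predicate ``every $k$-element subset of $S$ contains a point inferable from the rest of the subset'' is monotone in $k$ (inference only gains power when more premises are added), and that any fixed coordinate subset of an i.i.d.\ sample is itself i.i.d.; a union bound over the $\binom{n}{k}$ subsets therefore gives
\[
\Pr_{S\sim D_X^{n}}\!\left[\mathrm{ID}(S,\Hcal) > k\right]\ \le\ \binom{n}{k}\, g(k)\ \le\ n^{k}\,2^{-\Omega(k^{1+a}/f(d))}.
\]
(When $\Hcal$ is infinite the union bound must also range over hypotheses; as in~\cite{AID} one handles this by bounding the number of distinct inference patterns realizable on a finite set via the VC dimension of the comparison-query class, so that only finitely many hypotheses are relevant.) Choosing $k = \tilde{\bigo}\!\left((f(d)\log(n/\delta))^{1/a}\right)$ drives this probability below any prescribed $\delta$.

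Next I would run the boosting-style algorithm underlying Theorem~\ref{intro:massart} essentially verbatim. That algorithm proceeds in $O(\log(1/\varepsilon))$ rounds; each round draws a fresh unlabeled batch from $D_X$, cleans its labels and comparisons with the approximate-sorting procedure adapted from~\cite{braverman2009sorting}, feeds the cleaned batch to the inference LP to extract a weak learner, and relies on the symmetry argument of~\cite{kane2017active} for coverage --- an argument that needs only the current batch to have bounded inference dimension. The single modification is to fix the batch size $n$ to the value required by that analysis (polynomial in $k$, $1/\np$, and the logarithms of $1/\varepsilon,1/\delta_r,1/\delta_u$) and to invoke the de-randomization above, so that after a union bound over the $O(\log(1/\varepsilon))$ rounds every batch has inference dimension at most $k$ with probability $1-\delta_u$. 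Since $k$ enters the batch size only through $\log n$ while $n$ is polynomial in $k$ and in the remaining log-scale parameters, the fixed-point equation $k = \tilde{\bigo}\!\left((f(d)\,\polylog(1/\varepsilon,1/\delta_r,1/\delta_u,1/\np))^{1/a}\right)$ is self-consistent and solved by $k = \tilde{\bigo}(f(d)^{1/a})$ up to polylogarithmic factors.

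Reliability then transfers for free: the inference LP outputs $\bot$ on a point unless the (possibly erroneous) cleaned constraints logically force its label, so the learner can err only if the approximate-sorting step mislabels or miscompares some point in some batch --- precisely the event already charged to $\delta_r$ in Theorem~\ref{intro:massart}. In particular, the event ``some batch has large inference dimension'' affects only coverage, i.e.\ it is a $\delta_u$-event, so the $\delta_r$-reliability guarantee carries over unchanged (still under the mild requirement that $\delta_r$ be small enough). It remains to substitute $k = \tilde{\bigo}(f(d)^{1/a}\cdot\polylog)$ into the query, time, and unlabeled-sample bounds of Theorem~\ref{intro:massart}: the per-round query cost $\tilde{\bigo}(k\,\np^{-10}\log^2(1/\delta_r)\log(1/\delta_u))$ times $O(\log(1/\varepsilon))$ rounds --- the extra $\log(1/\varepsilon)$ factor arising from the round count compounding with the $k$-dependence --- gives $q(\varepsilon,\delta_r,\delta_u) = \tilde{\bigo}(f(d)^{1/a}\,\np^{-10}\,\log^{2+1/a}(1/\varepsilon)\,\log^2(1/\delta_r)\,\log(1/\delta_u))$, while the time bound follows identically and the unlabeled-sample bound follows since the boosting loop draws only $\poly(k)$ fresh unlabeled points per round, yielding a polylogarithmic total dependence on $1/\varepsilon$.

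The main obstacle is the interaction of two bookkeeping-heavy pieces: (i) making the de-randomization lemma fully rigorous for infinite $\Hcal$ --- one must bound the number of inference patterns on a finite set so that the union bound over hypotheses is finite, which forces a careful invocation of the bounded VC/dual-VC structure of the comparison-query class exactly as in~\cite{AID}; and (ii) resolving the fixed point for $k$ so that it hides no super-polylogarithmic dependence, and threading the per-round union bound and batch sizes so that the final exponents come out as $f(d)^{1/a}$ and $\log^{2+1/a}(1/\varepsilon)$ rather than something weaker. The noise-specific ingredients --- Massart robustness of the cleaning step and of the weak-to-strong aggregation --- are used entirely as black boxes inherited from the proof of Theorem~\ref{intro:massart}, so the novelty here is purely the sample-to-distribution reduction for inference dimension.
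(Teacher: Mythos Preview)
Your high-level idea---de-randomize average inference dimension to a per-sample bound, then invoke the finite-inference-dimension machinery---is right, but there is a genuine gap in how you thread the boosting loop. The algorithm underlying Theorem~\ref{intro:massart} does \emph{not} draw its batches from $D_X$: after the first round it restricts, via rejection sampling, to the conditional distribution on points not yet inferred by the running LP. The hypothesis $D_X\in\mathcal A_{(X,\Hcal),a,f(d)}$ says nothing about these conditionals, so your per-batch de-randomization cannot be applied to later rounds as written. This is precisely the obstruction the paper's proof is built to handle.

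The paper's fix is structurally different from yours. Rather than bounding the inference dimension round by round, it draws a single large pool $S\sim D_X^{|S|}$ with $|S|=\Theta\bigl(n\log^2(1/\varepsilon)/\varepsilon\bigr)$, shows once via Observation~\ref{avg-worst} that $(S,\Hcal)$ has inference dimension $k=\tilde\Theta\bigl(f(d)^{1/a}\log^{1/a}|S|\bigr)$ with probability $1-\varepsilon$, and then runs the entire boosting loop \emph{inside} $S$, sampling uniformly without replacement from the un-inferred portion of $S$. Because $S$ is fixed and has bounded inference dimension globally, every restriction to a subset of $S$ inherits the bound for free. Transferring the resulting coverage on $S$ back to $D_X$ is done by a separate symmetry lemma (Lemma~\ref{sample}): an inference-based learner that learns a $1-\varepsilon_1$ fraction of a random sample with probability $1-\delta$ while querying at most an $\varepsilon_2$ fraction has expected coverage $\ge 1-\delta-\varepsilon_1-\varepsilon_2$ on $D_X$. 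It is this lemma, not a per-round de-randomization, that carries the weight. (Incidentally, Observation~\ref{avg-worst} is a pure union bound over $k$-subsets; no union over $\Hcal$ is needed because the algorithm only ever uses inference under the single Bayes-optimal $h^\star$, and the $\forall h$ in the definition of average inference dimension merely asserts uniformity of the bound. Your VC/dual-VC detour is unnecessary.)

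Your route is repairable---the $n$ accepted points in any round are a subset of the $\bigo(n/\varepsilon)$ raw $D_X$-draws made during rejection sampling, and inference dimension is monotone under taking subsets---but making this precise essentially rebuilds the single-pool argument and reintroduces the $\log^{1/a}(1/\varepsilon)$ factor into $k$ that your fixed-point omits. As written, your arithmetic does not close: a $k$ with no $\varepsilon$-dependence times $O(\log(1/\varepsilon))$ rounds cannot produce $\log^{2+1/a}(1/\varepsilon)$.
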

To see the applicability of Theorem~\ref{massart:aid}, we note that Hopkins et al.\ \cite{AID} proved that a wide range of distributions lie in $\mathcal A_{(\mathbb{R}^d,H_d),1,d\log(d)}$. In particular, following \cite{AID}, we say two distributions $D$, $D'$ over $\R^d$ are affinely equivalent if there is an invertible affine map $f\colon \R^d \to \R^d$ such that $D(x) = D'(f(x))$. Hopkins et al.\ \cite{AID} proved that distributions which may be affinely transformed to a distribution with anti-concentration and concentration (i.e.\ to a distribution in $\mathcal{ACC}_{d,c_1,c_2}$) lie in $\mathcal A_{(\mathbb{R}^d,H_d),1,d\log(d)}$, a condition satisfied by s-concave distributions\footnote{As noted in \cite{AID}, s-concavity needs $\alpha>16$ in condition 1, but this does not affect our proofs.}. Recall that $\mathcal{SC}_d$ is the class of all s-concave distribution, $s\geq -\frac{1}{2d+3}$, on $\mathbb{R}^d$ and $H_d$ is the class of both homogeneous and non-homogeneous linear separators in $\R^d$. Then, as a direct corollary to Theorem~\ref{intro:massart:aid}, we have
\begin{corollary}
The hypothesis class $(\R^d,H_d)$ is ARPU-learnable under model $(M(\np), \mathcal{SC}_d)$ in time
\\
$\poly\left(d, \frac{1}{\delta_r}, \frac{1}{\varepsilon}, \log(\frac{1}{\delta_u})\right)^{\tilde{O}\left (\frac{1}{\lambda^5} \right )}$, uses only $\poly\left(d,\frac{1}{\np},\log(\frac{1}{\varepsilon}),\log(\frac{1}{\delta_r}),\log(\frac{1}{\delta_u}))\right)$ unlabeled samples, and has a query complexity of:
\[
    q(\varepsilon,\delta_r,\delta_u) = \tilde \bigo \left( d\frac{1}{\np^{10}} \log^{3}\frac{1}{\varepsilon} \log^2 \frac{1}{\delta_r} \log \frac{1}{\delta_u}\right).
\] for small enough $\delta_r$.
\end{corollary}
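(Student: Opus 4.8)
The plan is to obtain the corollary as an immediate consequence of Theorem~\ref{intro:massart:aid}, by establishing the set containment $\mathcal{SC}_d \subseteq \mathcal{A}_{(\R^d,H_d),1,d\log(d)}$ and then substituting the parameters $a = 1$ and $f(d) = d\log(d)$ into the bounds of that theorem. In other words, once we know that every s-concave distribution on $\R^d$ has average inference dimension $g(n) \le 2^{-\Omega(n^2/(d\log d))}$ with respect to comparison queries against $H_d$, the adversary in the model $(M(\np),\mathcal{SC}_d)$ is a special case of the adversary in the model $(M(\np),\mathcal{A}_{(\R^d,H_d),1,d\log(d)})$, and the learner of Theorem~\ref{intro:massart:aid} applies verbatim.

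First I would recall the structural result of Hopkins et al.\ \cite{AID}: any distribution that is affinely equivalent to a distribution in $\mathcal{ACC}_{d,c_1,c_2}$ lies in $\mathcal{A}_{(\R^d,H_d),1,d\log(d)}$. The two ingredients are (i) the affine-invariance of the inference relation --- the concept class $H_d$ of (non-homogeneous) linear separators and comparison queries are both determined by affine functionals evaluated at the sample points, so the inference relation $Q(S\setminus\{x\}) \yields{h} x$ transforms covariantly under an invertible affine change of coordinates on $\R^d$, and hence the probability that a sample of size $n$ fails to infer one of its points is the same under $D_X$ and under any affine push-forward of $D_X$; and (ii) the fact, proved in \cite{AID}, that isotropic s-concave distributions satisfy the concentration and anti-concentration conditions defining $\mathcal{ACC}_{d,c_1,c_2}$ for suitable constants $c_1,c_2$ (with the minor caveat noted in the footnote that the constant in the first condition must be taken slightly larger than $1$, which does not affect the argument). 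Since every $D_X \in \mathcal{SC}_d$ can be brought to isotropic position $D_X' \in \mathcal{ISC}_d$ by an invertible affine map and s-concavity is preserved under such maps, combining (i) and (ii) gives $\mathcal{SC}_d \subseteq \mathcal{A}_{(\R^d,H_d),1,d\log(d)}$.

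Finally I would invoke Theorem~\ref{intro:massart:aid} with $X = \R^d$, $\Hcal = H_d$, $a = 1$, $f(d) = d\log(d)$. For the query complexity, $f(d)^{1/a} = d\log(d)$ and $\log^{2+1/a}(1/\varepsilon) = \log^3(1/\varepsilon)$, so the bound reads $\tilde\bigo\big(d\log(d)\,\np^{-10}\log^3(1/\varepsilon)\log^2(1/\delta_r)\log(1/\delta_u)\big)$, and the extra $\log d$ factor is absorbed into $\tilde\bigo(\cdot)$, yielding exactly the stated $\tilde\bigo\big(d\,\np^{-10}\log^3(1/\varepsilon)\log^2(1/\delta_r)\log(1/\delta_u)\big)$. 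Similarly, the running time $\poly(f(d),\tfrac{1}{\delta_r},\tfrac{1}{\varepsilon},\log\tfrac{1}{\delta_u})^{\tilde{O}(1/\np^5)} = \poly(d\log d,\ldots)^{\tilde{O}(1/\np^5)} = \poly(d,\ldots)^{\tilde{O}(1/\np^5)}$, and the unlabeled sample complexity $\poly(f(d),\ldots) = \poly(d,\ldots)$. The ``small enough $\delta_r$'' qualifier carries over from Theorem~\ref{intro:massart:aid} unchanged. There is essentially no obstacle in this last step --- it is pure bookkeeping --- and the only point requiring care is verifying the affine-covariance of the inference relation used in step (i), so that the average-inference-dimension bound proved for the isotropic representative transfers to an arbitrary member of $\mathcal{SC}_d$.
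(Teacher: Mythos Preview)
Your proposal is correct and follows exactly the approach the paper takes: the paper states this result as ``a direct corollary to Theorem~\ref{intro:massart:aid}'' after noting (citing \cite{AID}) that s-concave distributions are affinely equivalent to members of $\mathcal{ACC}_{d,c_1,c_2}$ and hence lie in $\mathcal{A}_{(\R^d,H_d),1,d\log(d)}$. Your write-up in fact supplies more detail than the paper itself---the affine-covariance justification and the explicit parameter substitution---so there is nothing to add.
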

Previous work showed a similar result for homogeneous linear separators over nearly isotropic log-concave distributions \cite{awasthi2016learning} and isotropic s-concave distributions \cite{s-concave} with label queries. However, their techniques cannot be extended to the non-homogeneous case due to a $\poly(\frac{1}{\varepsilon})$ lower bound on the query complexity of active label-only learners \cite{AID}. Thus it is only by leveraging the additional power of comparison queries that we extend efficient learning to non-homogeneous linear separators over (not necessarily isotropic) s-concave distributions.
\subsubsection{Generalized Tsybakov Low Noise Condition}
While Massart noise is a clean theoretical model, its assumption that the noise is bounded away from $1/2$ is not necessarily reminiscent of the real world. This motivates us to study a variant of the Tsybakov Low Noise Condition, a model in which noise is unbounded as data approaches the Bayes optimal classifier. However, learning in this unbounded regime is harder, as evidenced by the polynomial query lower bounds of \cite{hanneke2015minimax,wang2016noise,xu2017noise}. In order to ARPU-learn in this regime, we need to introduce several restrictions not present for our Massart algorithms. First, instead of allowing any hypothesis class with finite inference dimension, we will only consider (non-homogeneous) linear separators. Second, we will either assume some margin $\gamma$, or that the distribution satisfies certain weak concentration and anti-concentration bounds. To begin, we consider learning hypothesis classes over any continuous distribution with finite inference dimension and margin. Recall $\gtnc(g_L,g_U,\varepsilon_0)$ is the set of all oracles which satisfy the Generalized Tsybakov Low Noise Condition with parameters $(g_L,g_U,\varepsilon_0)$, $H_{d,\gamma}$ is the class of linear separators in $\R^d$ with margin $\gamma$ from $X$, and $\mathcal C_X$ is the class of all continuous distributions over $X$.
\begin{theorem}[Finite Inference Dimension and Margin $\implies$ ARPU-Learning under GTNC]
\label{intro:TNC}
Let $X\subseteq \mathbb{R}^d$ and $(X,H_{d,\gamma})$ have inference dimension $k$ with respect to comparison queries. Then for small enough $\delta_r$, $(X,H_{d,\gamma})$ is ARPU-learnable under model $(\gtnc(g_L,g_U,\varepsilon_0), \mathcal C_X)$ with query complexity:
\begin{align*}
q(\varepsilon,\delta_r,\delta_u) &=\tilde{\bigo} \left (\frac{k^{10}}{\left(g_L\circ G_8 \circ \frac{G_4(\gamma')}{2}\right)^{14}}d\log^{2}\left( \frac{1}{\delta_r} \right)\log\left(\frac{1}{\varepsilon}\right)\log\left(\frac{1}{\delta_u}\right)\right ).\\
\end{align*}
Where
\begin{align*}
\gamma' = \min\left(\frac{\gamma}{2d},\frac{\varepsilon_0}{2}\right), G_c(x) = g_U^{-1}\left( \frac{g_L(x)}{c}\right)
\end{align*}
\end{theorem}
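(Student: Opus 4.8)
The plan is to reuse the cleaned-set / inference-LP / boosting framework of Section~\ref{Sec:massart}, replacing only its sorting subroutine by one that survives comparison noise that is not uniformly bounded away from $1/2$. Conceptually the claim is that, \emph{in the presence of a margin}, GTNC noise restricted to suitably $h^\star$-separated point sets is no worse than Massart noise with parameter $\lambda_{\mathrm{eff}} := g_L\circ G_8 \circ \frac{G_4(\gamma')}{2}$; the reason this is not a black box is that the separated point sets must themselves be extracted from the noisy comparison oracle, inside the learner. The margin immediately disposes of labels: after reduction to a bounded instance space (homogenizing the affine part and rescaling, which is where the factor $2d$ in $\gamma'=\min(\gamma/2d,\varepsilon_0/2)$ originates), every $x$ in the support has $|h^\star(x)|\ge\gamma'$, so $\beta_L(x)\ge\frac12+g_L(\gamma')$ and the label oracle is Massart with parameter $g_L(\gamma')$. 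Since this is only bounded noise, not per-point correctness, the label of a clean-set point will not be read off a single query; rather, once a batch has been approximately sorted by $h^\star$-value the labels are monotone along the list, and we take the label of a retained point to be the sign of its position relative to the best-fit threshold, a robust majority-type estimate whose error is polynomial in $g_L(\gamma')$.

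The new content is the two-stage construction of the clean set $T$, which is what generates the doubly nested composition. \emph{Stage 1:} run the Braverman--Mossel-style approximate sort of Section~\ref{Sec:massart} on the whole batch; since a comparison between points differing by at least $\delta$ in $h^\star$-value has bias at least $g_L(\delta)$, the returned order is correct except within clusters of points lying within $O(G_4(\gamma'))$ of one another, and a greedy pass over the list extracts a subsequence whose consecutive members are $h^\star$-separated by at least $\frac12 G_4(\gamma')$. \emph{Stage 2:} re-sort that subsequence; now every comparison in play has bias at least $g_L\bigl(\frac12 G_4(\gamma')\bigr)$, and pushing this through the confidence-amplified sort --- valid once $\delta_r$ is small enough, which is why the statement carries that hypothesis --- yields a set $T$ all of whose pairwise comparisons, and hence (combined with the positional labels above) all of whose labels, are correct with probability $1-\delta_r$. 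Both stages cost a $\poly(1/\lambda_{\mathrm{eff}})$ factor in queries over a noiseless sort.

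With $T$ in hand the remaining steps mirror \cite{kane2017active} and Section~\ref{Sec:massart}. Reliability is automatic: the inference LP built from correct constraints never infers an incorrect label, so a union bound over the $O(\log(1/\varepsilon))$ boosting rounds gives $1-\delta_r$ reliability. Usefulness follows from finite inference dimension through the symmetry argument of \cite{kane2017active}: a clean set of size $\poly(k)$ infers a fresh random point with constant probability, so after $O(\log(1/\varepsilon))$ rounds the uncovered mass is below $\varepsilon$ with probability $1-\delta_u$. Bookkeeping the batch sizes, the $\poly(1/\lambda_{\mathrm{eff}})$ overhead of the two-stage sort, the $\poly(k)$ clean-set size, and the three union bounds (coverage; rounds; amplified sort --- the last two each contributing a $\log(1/\delta_r)$) assembles into the claimed $\tilde{\bigo}\bigl(k^{10}\lambda_{\mathrm{eff}}^{-14}\,d\,\log^2(1/\delta_r)\log(1/\varepsilon)\log(1/\delta_u)\bigr)$, the exponents tracking the cost of the two-stage sort against the boosting schedule and the inference-dimension blow-up.

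The main obstacle I expect is Stage 2 of the clean-set construction: certifying, using only the one-shot (non-repeatable) comparison oracle, that the extracted subsequence really is $\Omega(G_4(\gamma'))$-separated and therefore sorted correctly with failure probability $\delta_r$ rather than merely $g_L(\gamma')$. This is exactly where the careful curation from \cite{kane2017active,braverman2009sorting} is needed, and where the two levels of $g_U^{-1}\circ(g_L/\,\cdot\,)$ are unavoidable --- one to pass from ``has margin $\gamma'$'' to ``separated enough that comparisons beat the worst-case upper bound $g_U$,'' and one to pass from ``separated'' to ``sorted correctly at the confidence the boosting union bound demands.'' A useful sanity check throughout is that the bound must degrade to something vacuous as $\gamma,\varepsilon_0\to 0$, since otherwise it would contradict the TNC query lower bounds of \cite{wang2016noise,xu2017noise}.
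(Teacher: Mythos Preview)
Your proposal has a genuine gap: it does not handle the case where the sample is tightly clustered in $h^\star$-value, and this case is unavoidable. Your Stage~1 claims that after an approximate sort you can greedily extract a subsequence whose consecutive members are $h^\star$-separated by $\Omega(G_4(\gamma'))$. But nothing guarantees such a subsequence has length $\Omega(k)$; if the entire batch lies in an $h^\star$-band of width $o(G_4(\gamma'))$, the greedy pass returns a single point. Worse, this is exactly what will happen in later boosting rounds: once the LP has inferred the easy, well-spread part of the space, the un-inferred mass may concentrate arbitrarily close to a single $h^\star$-level set, and every comparison in a fresh batch is then near-random. Your own ``main obstacle'' paragraph flags the certification issue, but the deeper problem is that there may simply be nothing to certify.

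The paper's proof is built around precisely this dichotomy. It first \emph{tests} the batch for the presence of a cluster by checking all $(2c+m)$-subsets for $\varepsilon_T$-equitability (Lemma~\ref{test}, Corollary~\ref{cor:equitable}). If no subset is equitable, one can prove there is no large $g_U^{-1}(\varepsilon_T/2)$-cluster, and only then does a modified Braverman--Mossel pointwise-movement bound (Lemma~\ref{pointwise}) go through, yielding the clean-set/inference argument you describe. If some subset \emph{is} equitable, sorting is abandoned entirely: instead, additionally drawn points that test as equitable with the detected set are shown to form a $\gamma/d$-cluster (Lemma~\ref{eq:slotting}), are labeled by majority vote using the margin (Lemma~\ref{cluster:label}), and are fed to the LP via a new structural fact---any $\gamma/d$-cluster of size $\Omega(d\log d)$ contains a point inferrable from the rest (Lemma~\ref{cluster:id}, proved by a convex-hull volume-growth argument). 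This cluster branch is the source of the factor $d$ in the bound and is why the theorem is stated only for $H_{d,\gamma}$ rather than arbitrary finite-inference-dimension classes. Your two-stage sort has no analogue of this branch, so as written the argument stalls whenever the remaining un-inferred mass is clustered.
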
 
We prove in addition that while ARPU-learning may no longer be impossible using only labels when margin is introduced, it still suffers from query inefficiency due to the curse of dimensionality.
\begin{lemma}\label{intro:TNC:margin:lb}
Let $X \in \mathbb{R}^d$ be the $d$-dimensional hypercube $\{0,1\}^d$ modified to have a ball of radius $\frac{1}{4\sqrt{d}}$ centered about each point. The query complexity of ARPU-learning $(X,H_{d,\frac{1}{4\sqrt{d}}})$ under model $(\gtnc(g_L,g_U,\frac{1}{4\sqrt{d}}), \mathcal C_X)$ is at least:
\[
q(1/4,1/8,1/16) \geq 2^{d-1}
\]
\end{lemma}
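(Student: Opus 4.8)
The plan is to invoke Yao's principle: it suffices to exhibit a single distribution over legal instances of the model against which no learner issuing fewer than $2^{d-1}$ queries can be simultaneously $\tfrac18$-reliable and $\tfrac1{16}$-useful. For $v\in\{0,1\}^d$, let $h_v$ be the affine function with unit normal $(2v-\mathbf 1)/\sqrt d$ normalized so that $h_v(v)=\tfrac1{2\sqrt d}$; one checks $h_v(u)=\bigl(\tfrac12-\|u\oplus v\|_1\bigr)/\sqrt d$ on vertices, so $h_v$ puts the ball around $v$ on the positive side, every other ball on the negative side, and has distance exactly $\tfrac1{4\sqrt d}=\varepsilon_0$ to $X$, hence $h_v\in H_{d,1/(4\sqrt d)}$. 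The hard instance: let $D_X$ draw a uniform vertex $v$ and then a uniform point of $B(v,\tfrac1{4\sqrt d})$; draw $v^\star$ uniformly from $\{0,1\}^d$ and take $h^\star=h_{v^\star}$ (the Bayes-optimal classifier); and choose the GTNC noise as flat and as weak as allowed, e.g.\ $g_L=g_U$ with $g_L(\varepsilon_0)=\eta$ for a parameter $\eta>0$ to be fixed, setting $\beta_L(\cdot),\beta_C(\cdot,\cdot)$ to the minimum correctness $\tfrac12+g_L(\min(\cdot,\varepsilon_0))$ that $\gtnc(g_L,g_U,\varepsilon_0)$ permits.

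The first point is that comparisons buy nothing here. Every $x\in X$ has $|h^\star(x)|\ge\varepsilon_0$, so $\beta_L(x)=\tfrac12+\eta$; and since $g_L$ is monotone, $\beta_C(x_1,x_2)\le\tfrac12+g_L(\varepsilon_0)=\tfrac12+\eta$ for \emph{every} pair (with equality whenever $|h^\star(x_1)-h^\star(x_2)|\ge\varepsilon_0$, which includes every pair straddling the positive ball, where the gap is at least $2\varepsilon_0$). Thus every query the learner can ask, of either type, returns one bit that is biased towards the truth by at most $\eta$. Hence, for any two vertices $u\ne u'$ and any query, the answer distributions in the worlds $v^\star=u$ and $v^\star=u'$ are $\tfrac12\pm O(\eta)$ Bernoullis, so their KL divergence is $O(\eta^2)$; by the chain rule for adaptive interactive protocols (queries are a common deterministic function of the past transcript and the learner's coins), after $q$ queries $\kl\!\left(\text{transcript}_u\,\|\,\text{transcript}_{u'}\right)=O(q\eta^2)$, whence by Pinsker the transcripts, and therefore the learner's output, are within total variation $O(\sqrt q\,\eta)$.

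To finish, I would play reliability against usefulness. Because every ball lies entirely on one side of $h^\star$'s hyperplane, $\tfrac1{16}$-usefulness forces the learner's expected non-$\bot$ $D_X$-mass to be at least $\tfrac{15}{16}\cdot\tfrac34=\tfrac{45}{64}$. Fix the world $v^\star=u'$. A $\tfrac18$-reliable learner places a ``$+$'' label on balls other than $u'$ with total probability at most $\tfrac18$ (all such labels are mistakes), so those contribute at most $\tfrac18$ to the expected non-$\bot$ mass; and for each $u\ne u'$ the learner labels ``$-$'' on ball $u$ with probability at most (its probability of making the \emph{forbidden} ``$-$''-mistake on ball $u$ in the world $v^\star=u$, which is $\le\tfrac18$) plus the transcript distance $O(\sqrt q\,\eta)$, so summing over the $\le 2^d$ balls these contribute at most $\tfrac18+O(\sqrt q\,\eta)$; ball $u'$ itself contributes only its own mass $O(2^{-d})$. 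So the expected non-$\bot$ mass is at most $\tfrac14+O(\sqrt q\,\eta)+O(2^{-d})$, which must exceed $\tfrac{45}{64}$, forcing $\sqrt q\,\eta=\Omega(1)$, i.e.\ $q=\Omega(1/\eta^2)$. Taking $\eta$ small enough (any $\eta\le 2^{-d}$ is a legal GTNC parameter) makes $\Omega(1/\eta^2)>2^{d-1}$, as claimed.

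The crux, and main obstacle, is this last step: converting ``$\tfrac18$-reliable'' into the quantitative statement that the learner must abstain on all but an $O(\tfrac18+\sqrt q\,\eta)$-fraction of the balls, which is where indistinguishability of the $2^d$ worlds is used. Two points require care. First, one must confirm that $\gtnc(g_L,g_U,\varepsilon_0)$ genuinely allows the adversary to hold $\beta_C$ down to $\tfrac12+\eta$ on \emph{all} pairs: with noiseless comparisons a $d$-query protocol that reads off the coordinates of $v^\star$ one at a time would defeat the bound, so the weakness of the chosen $g_L,g_U$ is essential. Second, one must push the KL/Hellinger chain rule cleanly through an adaptively interleaved stream of label and comparison queries and then average pairwise indistinguishability over the uniform $v^\star$. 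Dealing with labellings not aligned to whole balls, and with the bookkeeping around $\bot$, needs one to argue with expected coverage mass rather than a per-ball indicator, but introduces no new ideas.
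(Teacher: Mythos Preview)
Your proof aims at a different (and, as stated, false) target. The lemma is a \emph{label-only} lower bound; this is explicit in the surrounding text, which contrasts it with the $\poly(d)$ comparison-based upper bound of Theorem~\ref{intro:TNC} for exactly this instance. Indeed, with comparisons available and generic $g_L,g_U$, the upper bound already shows that $q\ll 2^{d-1}$, so a label-plus-comparison version of the lemma cannot hold uniformly in $g_L,g_U$.

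Accordingly, the paper's proof is far simpler than yours: the adversary selects the \emph{noiseless} oracle (which lies in $\gtnc(g_L,g_U,\varepsilon_0)$ since every point has margin $\ge\varepsilon_0$ and the ``else'' clause carries no upper bound), and places a uniform prior over the $2^d$ corner-truncating hyperplanes. With only label queries, $2^{d-1}$ queries touch at most $2^{d-1}$ balls, so the special ball is missed with probability $\ge 1/2$; conditioned on missing it, any labeling of the required $3/4$ coverage errs with probability $\ge 1/2$, violating $\tfrac18$-reliability. No information-theoretic machinery is needed.

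Your argument does establish something---that when comparisons are allowed, one needs $\Omega(1/g_L(\varepsilon_0)^2)$ queries---but the final step, ``take $\eta\le 2^{-d}$,'' is illegitimate: $g_L,g_U$ are fixed parameters of the model $(\gtnc(g_L,g_U,\varepsilon_0),\mathcal C_X)$, not quantities the adversary may tune. So you have shown only that \emph{there exist} $g_L,g_U$ for which the comparison-inclusive bound holds, whereas the paper's label-only bound holds for all $g_L,g_U$ simultaneously (and is the statement actually being claimed). If you restrict to label queries, the noiseless counting argument above replaces your KL/Pinsker analysis entirely.
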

In the above example, $(X,H_{d,\frac{1}{4\sqrt{d}}})$ has inference dimension $\tilde{\bigo}(d)$ by a minimal-ratio argument from \cite{kane2017active}. Theorem~\ref{intro:massart} thus gives an algorithm using only $\poly(d)$ queries, demonstrating the exponential gap in query complexity between label only and comparison based ARPU-learning with Tsybakov noise. Due to margin causing bounded error in label queries, another way to view this result is the statement that comparison queries with unbounded error exponenentially improve the query complexity of ARPU-learning using only labels with bounded error.
\\
\\
Similar to the case of Massart noise, we may drop the restrictive assumptions of finite inference dimension and margin by assuming weak distributional requirements. Unlike in the case of Massart, here we deal with the requirements directly rather than assuming average inference dimension. Recall $\mathcal{ACC}_{d,c_1,c_2}$ is the class of all continuous distributions $D$ with the following properties:
\begin{enumerate}
\item $\forall \alpha > 0$, $Pr_{x \sim D}[||x|| > d\alpha] \leq \frac{c_1}{\alpha}$
\item $\forall \alpha > 0, v \in \R^d, \|v\|=1, b\in \R^d$, $Pr_{x \sim D}[|\langle x,v \rangle + b| \leq \alpha] \leq c_2\alpha$
\end{enumerate}

\begin{theorem}[Concentration and Anti-Concentration $\implies$ ARPU-learning under $\gtnc$]\label{intro:TNC:aid}
For small enough $\delta_r$, the hypothesis class $(\R^d,H_d)$ is ARPU-learnable under model $(\gtnc(g_L,g_U,\varepsilon_0), \mathcal{ACC}_{d,c_1,c_2})$ with query complexity:
\[
q(\varepsilon,\delta_r,\delta_u) =\tilde{\bigo} \left (\frac{d^{11}}{\left(g_L \circ G_8 \circ \frac{G_2\circ\frac{G_4(\varepsilon')}{4d}}{2}\right)^{14}}
\log^{2}\left( \frac{1}{\delta_r} \right)\log\left(\frac{1}{\delta_u}\right)\log^2\left(\frac{1}{\varepsilon}\right)\right ),
\]
where
\begin{align*}
\varepsilon' = \min\left(\frac{\varepsilon}{4c_2},\frac{\varepsilon_0}{2}\right).
\end{align*}
\end{theorem}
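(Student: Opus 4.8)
The plan is to reduce Theorem~\ref{intro:TNC:aid} to the margin result Theorem~\ref{intro:TNC}, using the two conditions defining $\mathcal{ACC}_{d,c_1,c_2}$ to pass from an arbitrary such distribution to a bounded instance space on which the Bayes optimal classifier $h^\star$ enjoys a positive margin. Once we are on such a domain, a minimal-ratio argument bounds the comparison inference dimension, Theorem~\ref{intro:TNC} applies, and the remaining work is to control how the accuracy $\varepsilon$ and the radius of the truncated domain propagate through the noise-function compositions $G_c(x) = g_U^{-1}(g_L(x)/c)$.

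First I would truncate the domain. By the concentration condition (1), taking $R = \Theta(d c_1/\varepsilon)$ discards at most $\varepsilon/4$ of the mass outside the ball $B_R$ of radius $R$. Writing $h^\star(x) = \langle w^\star, x\rangle + b^\star$ with $\|w^\star\| = 1$, the anti-concentration condition (2) applied to $v = w^\star$ gives $\Pr_{x\sim D_X}[\,|h^\star(x)| < \varepsilon'\,] \le c_2\varepsilon' \le \varepsilon/4$ for $\varepsilon' = \min(\varepsilon/(4c_2),\varepsilon_0/2)$, the cap at $\varepsilon_0/2$ ensuring the relevant arguments to $g_L$ and $g_U$ stay below $\varepsilon_0$. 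Restricting to $X' = \{x \in B_R : |h^\star(x)| \ge \varepsilon'\}$ thus costs at most $\varepsilon/2$ in coverage, and on $X'$ the separator $h^\star$ has margin $\varepsilon'$, i.e.\ $h^\star \in H_{d,\varepsilon'}$ as a hypothesis class over $X'$.

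Next I would run (and analyze) the margin algorithm. Only $B_R$, not $X'$, is identifiable by the learner --- $X'$ depends on the unknown $h^\star$ --- so the algorithm of Theorem~\ref{intro:TNC} is fed the samples that land in $B_R$. Reliability is then automatic and holds for every $x$: the inference-LP framework emits a label for $x$ only when it is logically forced for every hypothesis consistent with the cleaned queries, and $h^\star$ is always consistent with the cleaned set (its labels and comparisons are correct with high probability), so every emitted label equals $h^\star(x)$. For coverage and queries one analyzes the run as the margin instance $(X',H_{d,\varepsilon'})$: by the minimal-ratio argument of Kane et al.~\cite{kane2017active} (the one giving inference dimension $\tilde{\bigo}(d)$ for the hypercube-with-balls example after Lemma~\ref{intro:TNC:margin:lb}), $(X',H_{d,\varepsilon'})$ has comparison inference dimension $k = \tilde{\bigo}(d)$, the $\tilde{\bigo}$ absorbing a $\log(R/\varepsilon') = \polylog(d,1/\varepsilon)$ factor; the $\le\varepsilon$-fraction of sampled points in $B_R\setminus X'$ costs only a constant factor in the query budget and adds at most $\varepsilon/2$ to the uncovered mass, so after rescaling $\varepsilon$ by a constant the learner is $\delta_u$-useful. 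Membership is legitimate because the $\gtnc$ oracle restricted to $B_R\times B_R$ still satisfies $\gtnc(g_L,g_U,\varepsilon_0)$ with the same parameters.

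Substituting $k = \tilde{\bigo}(d)$ into the $k^{10}d$ factor of Theorem~\ref{intro:TNC} yields the $d^{11}$, and substituting margin $\varepsilon'$ together with tracking the nested resolution losses yields the denominator $\bigl(g_L\circ G_8\circ \tfrac{G_2\circ\frac{G_4(\varepsilon')}{4d}}{2}\bigr)^{14}$ and the second factor of $\log(1/\varepsilon)$. I expect the bookkeeping just described to be the main obstacle, for two reasons. First, $X'$ is invisible to the learner, so its margin structure can be exploited only inside the analysis: one must verify that the robust cleaning step (the Braverman--Mossel-style approximate sort) tolerates the stray $B_R\setminus X'$ points and that coverage is charged only against $X'$. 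Second, the \q{resolution} at which comparison-based inference and the approximate sort can succeed on a radius-$R$ domain under unbounded (Tsybakov) noise depends on $R$, $d$, $g_L$ and $g_U$ in a nested way, so Theorem~\ref{intro:TNC} cannot be invoked purely as a black box --- obtaining exactly the stated composition $g_L\circ G_8\circ \frac{G_2\circ\frac{G_4(\varepsilon')}{4d}}{2}$ requires re-entering its proof rather than plugging into its statement.
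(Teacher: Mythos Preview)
Your reduction to Theorem~\ref{intro:TNC} via truncation and minimal-ratio is a reasonable instinct, but the reliability claim has a genuine gap, and the paper takes a rather different route.

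\textbf{The gap.} You write that reliability is automatic because \q{$h^\star$ is always consistent with the cleaned set}. This is true for the Braverman--Mossel branch (step 2b), but Theorem~\ref{intro:TNC}'s algorithm has a second branch, step 2a, triggered when an equitable (hence clustered) subset is detected. In that branch the cluster's label is obtained by \emph{majority vote} (Lemma~\ref{cluster:label}), and the correctness of that vote depends on every point in the cluster having margin at least $\gamma$. On $B_R$ there is no margin: the stray mass $B_R\setminus X'$ sits arbitrarily close to the boundary, and a cluster drawn from there can have label noise arbitrarily close to $1/2$. If such a cluster is mislabeled, the resulting LP constraints are inconsistent with $h^\star$, and the learner can confidently output wrong labels---reliability is lost. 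You flag that stray points need to be \q{tolerated}, but only for the sort; the cluster-labeling step is where they actually bite. The paper patches exactly this hole with a new ingredient, Lemma~\ref{margin-test} (Margin Detection): after isolating a cluster it checks $|L_{\text{Dif}}(C)|$ against a threshold to decide whether the cluster is far enough from the boundary, and if not it \emph{skips} the labeling step entirely. This preserves reliability at the cost of at most $\varepsilon/2$ coverage, precisely the mass of the near-boundary strip set by anti-concentration.

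\textbf{Different route to bounded inference dimension.} You bound $k$ by truncating to $B_R$ and invoking minimal-ratio on $(X',H_{d,\varepsilon'})$. The paper never truncates: it uses the fact (from \cite{AID}) that $\mathcal{ACC}_{d,c_1,c_2}\subset\mathcal A_{(\R^d,H_d),1,d\log d}$, so random samples of size $|S|$ have inference dimension $O(d\log d\log|S|)$ with high probability, and then applies the finite-sample-to-distribution transfer of Lemma~\ref{sample} exactly as in Theorem~\ref{massart:aid}. This sidesteps the awkwardness that $(B_R,H_d)$ itself has infinite inference dimension (no margin), so your minimal-ratio bound lives on the unobservable $X'$ and you would still have to argue that the $\varepsilon$-fraction of stray points does not spoil the symmetry/coverage argument of Observation~\ref{ID:total}. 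The paper's average-inference-dimension route avoids this entirely; concentration enters only through \cite{AID}'s proof that $\mathcal{ACC}$ has super-exponential average inference dimension, not through an explicit radius cutoff.
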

Since isotropic s-concave distributions satisfy these conditions \cite{s-concave,AID}, we get the immediate corollary for TNC noise under isotropic s-concave distributions. Recall that $\mathcal{ISC}_d$ is the class of all isotropic ($0$ mean, identity variance) s-concave distribution on $\mathbb{R}^d$, and $H_d$ is the class of non-homogeneous linear separators in $\R^d$.
\begin{corollary}\label{intro:dd:TNC}
The hypothesis class $(\R^d,H_d)$ is ARPU-learnable under model $(\tnc(m,M,\kappa,\varepsilon_0), \mathcal{ISC}_d)$ with query complexity:
\begin{align*}
q(\varepsilon,\delta_r,\delta_u) &=\tilde{\bigo} \left( \frac{2^{14\kappa} M^{42} d^{14 \kappa-3}}{m^{56}\varepsilon'^{14(\kappa-1)}}\log^{2}\left( \frac{1}{\delta_r} \right)\log\left (\frac{1}{\delta_u} \right)\right ).
\end{align*}
Where
\begin{align*}
\varepsilon' = \min\left(\frac{\varepsilon}{16},\frac{\varepsilon_0}{2}\right).
\end{align*}
\end{corollary}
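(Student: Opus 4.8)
The plan is to obtain this corollary as a direct specialization of Theorem~\ref{intro:TNC:aid}, using two facts: (i) the Tsybakov condition $\tnc(m,M,\kappa,\varepsilon_0)$ is a particular instance of the generalized condition $\gtnc(g_L,g_U,\varepsilon_0)$, and (ii) every isotropic $s$-concave distribution lies in $\mathcal{ACC}_{d,c_1,c_2}$ for universal constants $c_1,c_2$. Once both are in place, any adversary in the model $(\tnc(m,M,\kappa,\varepsilon_0),\mathcal{ISC}_d)$ is also a (restricted) adversary in $(\gtnc(g_L,g_U,\varepsilon_0),\mathcal{ACC}_{d,c_1,c_2})$, so the learner and query bound of Theorem~\ref{intro:TNC:aid} apply verbatim, and the corollary follows by substituting the explicit $g_L,g_U,c_1,c_2$ and simplifying.

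For (i), I would set $g_L(t)=mt^{\kappa-1}$ and $g_U(t)=Mt^{\kappa-1}$ on $[0,\varepsilon_0]$, assuming $\kappa>1$ (the case $\kappa=1$ is Massart noise and is covered by Theorem~\ref{intro:massart:aid}). These are monotone increasing, satisfy $g_L\le g_U$ since $m<M$, and map into $[0,1/2]$ because $\tnc$ forces $M\varepsilon_0^{\kappa-1}\le 1/2$; moreover the $\tnc$ inequalities are literally the $\gtnc$ inequalities \eqref{eqn:gtnc-label-1}--\eqref{eqn:gtnc-1} for this choice, for both $Q_L$ and $Q_C$. Since $g_U^{-1}(s)=(s/M)^{1/(\kappa-1)}$, one computes $G_c(t)=g_U^{-1}(g_L(t)/c)=\left(\tfrac{m}{cM}\right)^{1/(\kappa-1)}t$, i.e.\ each $G_c$ is linear with slope $\rho_c=(m/(cM))^{1/(\kappa-1)}<1$, so $G_2,G_4,G_8$ compose with $g_L$ into a single monomial. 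Tracing through $g_L\circ G_8\circ\tfrac{G_2\circ\frac{G_4(\varepsilon')}{4d}}{2}$ gives $\tfrac{m^4}{64M^3}\left(\tfrac{\varepsilon'}{8d}\right)^{\kappa-1}$; raising this to the $14$th power, dividing $d^{11}$ by it, and collecting exponents yields powers $14\kappa-3$ of $d$, $42$ of $M$, $56$ of $m$, $14(\kappa-1)$ of $\varepsilon'$, and an overall $2^{\Theta(\kappa)}$ factor coming from the constants $2,4,8,64$ in the composition. The residual $\log^2(1/\varepsilon)$, $\log(1/\delta_u)$, $\log^2(1/\delta_r)$ and $\poly(\kappa,\log(\cdot))$ terms are absorbed into $\tilde{\bigo}$.

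For (ii), I would check the two $\mathcal{ACC}$ conditions directly. Concentration: isotropy gives $\E\|x\|^2=d$, hence $\E\|x\|\le\sqrt d$, and Markov's inequality yields $\Pr_{x\sim D}[\|x\|>d\alpha]\le 1/(\sqrt d\,\alpha)\le c_1/\alpha$ with $c_1=1$. Anti-concentration: a one-dimensional marginal of an isotropic $s$-concave distribution has density bounded by an absolute constant (\cite{s-concave,AID}), so $\Pr_{x\sim D}[|\langle x,v\rangle+b|\le\alpha]\le c_2\alpha$ with $c_2=O(1)$. In particular $c_2\le 4$, so the parameter $\varepsilon'=\min(\varepsilon/(4c_2),\varepsilon_0/2)$ appearing in Theorem~\ref{intro:TNC:aid} satisfies $\varepsilon'\ge\min(\varepsilon/16,\varepsilon_0/2)$; since $\varepsilon'$ sits only in the denominator of the query bound, replacing it by the smaller value $\min(\varepsilon/16,\varepsilon_0/2)$ only weakens the bound, producing exactly the $\varepsilon'$ of the corollary.

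The step that needs the most care is the nested-composition bookkeeping of step (i): one must verify that every intermediate argument fed into $g_L,g_U,g_U^{-1}$ stays in $[0,\varepsilon_0]$ so that the monomial forms are valid. This holds because each $\rho_c<1$ and $\varepsilon'\le\varepsilon_0/2$, so every argument is bounded by $\varepsilon_0$; hence the ``$g_L(x)=g_L(\varepsilon_0)$ for $x>\varepsilon_0$'' convention is never triggered. Beyond that, the only reliance on the literature is pinning down the universal constants $c_1,c_2$ for $s$-concave distributions with $s\ge-\tfrac{1}{2d+3}$ from the tail and marginal-density estimates of \cite{s-concave,AID}; these are $O(1)$ and enter only through factors hidden by $\tilde{\bigo}$, so they do not present a real obstacle. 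Everything else is direct substitution into Theorem~\ref{intro:TNC:aid}.
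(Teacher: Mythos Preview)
Your proposal is correct and follows exactly the paper's approach: the paper derives this corollary immediately from Theorem~\ref{intro:TNC:aid} by noting that isotropic $s$-concave distributions satisfy the $\mathcal{ACC}_{d,c_1,c_2}$ conditions (citing \cite{s-concave,AID}) and substituting $g_L(t)=mt^{\kappa-1}$, $g_U(t)=Mt^{\kappa-1}$. Your computation of the nested $G_c$ compositions is in fact more detailed than what the paper provides, and your observation that $c_2\le 4$ yields $\varepsilon'=\min(\varepsilon/16,\varepsilon_0/2)$ is precisely the substitution the paper makes.
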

This result similarly extends previous work on homogeneous linear separators over isotropic log-concave distributions \cite{balcan2013active,wang2016noise} to the non-homogeneous case. In comparison to Hanneke and Yang's \cite{hanneke2015minimax} distribution free algorithm for label only PAC-learning, Corollary~\ref{intro:dd:TNC} provides an improved query complexity for $1 < \kappa < \frac{15}{14}$, and more importantly provides the reliability guarantees of the ARPU-learning model.
\\
\\
Finally, note that unlike Theorems~\ref{intro:massart}, \ref{intro:massart:aid}, and \ref{intro:TNC}, Corollary~\ref{intro:dd:TNC} has polynomial rather than polylogarithmic dependence on $\varepsilon^{-1}$. This is unavoidable, as we prove a lower bound also polynomial in $\varepsilon^{-1}$.
\begin{lemma}\label{intro:dd:TNC-lower-bound}
	The query complexity of actively PAC-learning $(\R^2, H_2)$ under model $(\tnc(m,M,\kappa,\varepsilon_0), \mathcal{SC}_2)$ is at least 
	\[
	q(\varepsilon, 1/8) = \Omega\left(\frac{1}{\max\{\epsilon, \epsilon^{\kappa - 1}\}}\right)
	\] 
	where $\varepsilon \leq \frac{\left(\frac{1}{16m}\right)^{\frac{1}{\kappa-1}}}{4}$.
\end{lemma}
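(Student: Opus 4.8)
The plan is to reduce to a one‑dimensional threshold problem and run a two‑point (Le Cam) argument. I would take $D_X$ to be the uniform distribution on a very thin axis‑aligned rectangle $R=[0,1]\times[0,\eta]\subseteq\R^2$ with $\eta$ polynomially tiny; this is log‑concave, hence in $\mathcal{SC}_2$. The two candidate Bayes‑optimal classifiers are the vertical lines $h^\star_b(x,y)=x-\theta_b$ for $b\in\{0,1\}$, with $\theta_0=a$ and $\theta_1=a+\Delta$ two thresholds close together and well inside $[0,1]$ (so the disagreement strip $\{a\le x\le a+\Delta\}$ lies inside the $\varepsilon_0$‑band of both lines). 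The gap is chosen as the smallest value that makes the construction informative, roughly $\Delta=\Theta_\kappa\big((\varepsilon/m)^{1/\kappa}\big)$: using the TNC lower bound $\beta_L(x)-\tfrac12\ge m|h^\star(x)|^{\kappa-1}$, for any threshold $t=a+s\Delta$ the larger of its excess risks in the two worlds is at least $\tfrac{2m\Delta^\kappa}{\kappa}\max\!\big(s^\kappa,(1-s)^\kappa\big)\ge \tfrac{m\Delta^\kappa}{\kappa 2^{\kappa-1}}\ge\varepsilon$; thresholds outside $[a,a+\Delta]$ and the two constant classifiers are strictly worse in one world; and because $R$ is thin, any line in $H_2$ restricted to $R$ either is (essentially) a vertical threshold or disagrees with $h^\star_b$ on a region of measure $\Theta(1)$ sitting at distance $\Theta(1)$ from the boundary, where the bias is a fixed positive constant, so its excess is $\gg\varepsilon$. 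The smallness hypothesis on $\varepsilon$ is exactly what lets such a $\Delta$ be chosen below $\varepsilon_0$ and with $M\Delta^{\kappa-1}<\tfrac12$ (and, when $\varepsilon_0$ is itself small, one instead pushes the two thresholds apart into the Massart‑floor regime and replaces $\Delta^{\kappa-1}$ by $\varepsilon_0^{\kappa-1}$).

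Next I would bound how much any single query can say about $b$. Comparison queries say nothing: for a vertical line $h^\star_b(x,y)=x-\theta_b$ one has $h^\star_b(p)-h^\star_b(q)=p_x-q_x$, independent of $b$, so both the sign and the magnitude of $h^\star_0(p)-h^\star_0(q)$ equal those of $h^\star_1(p)-h^\star_1(q)$; hence the TNC constraints on $\beta_C$ are identical in the two worlds and the adversary can make $Q_C$ literally the same in both. For a label query at $x$ outside the disagreement strip, the correct label is the same in both worlds, but the TNC bands $[m|x-\theta_0|^{\kappa-1},M|x-\theta_0|^{\kappa-1}]$ and $[m|x-\theta_1|^{\kappa-1},M|x-\theta_1|^{\kappa-1}]$ need not coincide, so here I use the $m<M$ slack to pick a single value of $\beta_L(x)$ legal in both worlds: this is possible whenever $|x-\theta_0|/|x-\theta_1|\le (M/m)^{1/(\kappa-1)}$, which holds for every $x$ at distance $\gtrsim\Delta$ from $\{\theta_0,\theta_1\}$ (with the implied constant depending only on $M/m$ and $\kappa$), and automatically once $|h^\star(x)|$ reaches the edge of the $\varepsilon_0$‑band. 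That leaves only label queries inside the strip or within $O(\Delta)$ of one boundary, and for those the two answer distributions differ by at most $O(M\Delta^{\kappa-1})$, so the KL divergence they carry is $O\!\big(M^2\Delta^{2(\kappa-1)}\big)$.

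Finally, by the standard two‑point bound: if the learner makes $q$ queries, the total KL between the two worlds is $O\!\big(q\,M^2\Delta^{2(\kappa-1)}\big)$, and unless this exceeds a universal constant the output distributions are within a fixed total‑variation distance, so — since every output is $\varepsilon$‑bad in at least one world — the learner fails to $\varepsilon$‑learn with probability $>1/8$ in one of the two worlds. Hence $q=\Omega\!\big(M^{-2}\Delta^{-2(\kappa-1)}\big)=\Omega\!\big(\varepsilon^{-2(\kappa-1)/\kappa}\big)$, which for $\varepsilon$ in the stated range is in particular $\ge\Omega\!\big(1/\max\{\varepsilon,\varepsilon^{\kappa-1}\}\big)$, giving the claim.

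The step I expect to be the main obstacle is the middle one: cancelling the information in far‑away label queries. Without the upper envelope $g_U=M|\cdot|^{\kappa-1}$ the adversary cannot hide the asymmetry $|x-\theta_0|\neq|x-\theta_1|$ at large distances, and a learner could then separate the two worlds with $O(1)$ queries of deep, nearly noiseless points; so the two‑sided TNC bands must really be used, and some care is needed at the transition $|h^\star(x)|=\varepsilon_0$, where in one world a bounded band must be matched against the unbounded‑above ``Massart floor'' $\tfrac12+m\varepsilon_0^{\kappa-1}$ of the other world. A secondary point is to check rigorously that the thin‑rectangle embedding reduces $H_2$ to thresholds, i.e. that every tilted separator is far from $\varepsilon$‑optimal in both worlds; taking $\eta$ polynomially small in $\Delta$ should make this routine.
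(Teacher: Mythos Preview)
Your approach is correct and actually yields a bound that is at least as strong as the one stated (indeed you obtain $\Omega(\varepsilon^{-2(\kappa-1)/\kappa})$, which dominates $\Omega(1/\max\{\varepsilon,\varepsilon^{\kappa-1}\})$ for all $\kappa>1$). However, it differs substantially from the paper's argument, so a comparison is worthwhile.

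The paper first proves a general GTNC lower bound (Lemma~\ref{lemma:lower-bound-gtnc}) and then specializes $g_L(x)=mx^{\kappa-1}$ via the mean value theorem. There the adversary picks the uniform distribution on a square $[0,s]^2$, the two hyperplanes $a=0$ and $a=2\varepsilon$, and bounds the \emph{likelihood ratio} of each individual query directly: inside the disagreement strip the ratio is $1+O(g_L(2\varepsilon))$, and outside it is $1+O(g_L(a)-g_L(a-2\varepsilon))$; maximizing and requiring the product of $n$ such ratios to stay in $(1/7,7)$ gives $n=\Omega(1/c_1)$. The whole thing is a two-page computation with no KL, no excess-risk calculation, and no geometry beyond ``identify the hyperplane.'' Your route---KL/Le Cam with an optimized separation $\Delta=\Theta((\varepsilon/m)^{1/\kappa})$ calibrated to excess risk, plus the thin-rectangle reduction of $H_2$ to thresholds---is more work but buys a sharper exponent.

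Two points worth noting. First, your observation that comparison queries are \emph{completely} uninformative (because $h_b^\star(p)-h_b^\star(q)=p_x-q_x$ is independent of $b$, so the TNC constraints on $\beta_C$ coincide in the two worlds) is cleaner than the paper's comparison bound, which treats comparisons as carrying $O(g_L(4\varepsilon))$ information rather than zero. Second, your use of the $m<M$ slack to make $\beta_L$ identical in both worlds away from the strip is exactly what the paper does too (``when the ranges \ldots\ overlap, we let the adversary choose the same probability''), though the paper then overcounts by taking the maximum over all $a\in[2\varepsilon,s]$ rather than only where the bands fail to overlap. Your flagged worry about the $\varepsilon_0$ transition is real but minor: since $\Delta\ll\varepsilon_0$ under the stated hypothesis on $\varepsilon$, the bands overlap throughout that region.

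If you only need the lemma as stated, the paper's argument is shorter; if you want the tighter exponent, yours is the right one.
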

Thus the main advantage of comparisons in this regime is their added reliability.
\subsection{Techniques}\label{techniques}
\subsubsection{Inference Dimension}\label{section:ID}
Our algorithms will follow the form of the learning technique for hypothesis classes with finite inference dimension (Definition \ref{def:ID}) introduced in \cite{kane2017active}. Drawing and querying a subsample $S$, Kane {et al.} build a weak learner by defining a Linear Program (LP) with constraints given by the query responses $Q(S)$, and objective function defined by the input point to be labeled. Through a symmetry argument, Kane {et al.} are able to show that if $S$ is large enough with respect to the inference dimension, the coverage of this weak learner will be at least $3/4$. Since we will rely on this argument throughout our paper, we offer a brief description here. 
\\
\\
The expected coverage of the learner may be viewed as the probability that a randomly drawn point from the distribution is inferred by the LP. Since our weak learner is built from some finite sample from the same distribution, symmetry gives that this is equivalent to the probability that any of $|S|+1$ points can be inferred from the other $|S|$. Kane {et al.} then provide the following observation for $|S|=n$ and inference dimension $k$ which proves that setting $n=4k$ gives coverage at least $3/4$.
\begin{observation}[Observation 3.4 \cite{kane2017active}]
\label{ID:total}
Let the hypothesis class $(S,\Hcal)$, $|S|=n$, have inference dimension $k$ for the set of binary queries $Q$. Then $\forall h \in \Hcal$, there exists a subset $S' \subset S$ of size $n-k+1$ such that $\forall x \in S'$:
\[
Q(S - \{x\}) \yields{h} x
\]
\end{observation}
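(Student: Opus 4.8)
The plan is to build $S'$ greedily by repeatedly peeling off an inferred point, and the entire argument rests on one monotonicity fact: if $A \subseteq B \subseteq S$ and $Q(A) \yields{h} x$, then $Q(B) \yields{h} x$. This is immediate from the meaning of $\yields{h}$ — the answers $Q(B)$ contain the answers $Q(A)$, so any hypothesis consistent with $Q(B)$ is consistent with $Q(A)$ and hence assigns $x$ the label $\sign(h(x))$; enlarging the conditioning set only shrinks the set of consistent hypotheses, so a label that is already forced stays forced.

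Using this, I would first upgrade Definition~\ref{def:ID} from size-$k$ sets to all large enough sets: for any $T \subseteq S$ with $|T| \geq k$ and any $h \in \Hcal$, there is some $x \in T$ with $Q(T \setminus \{x\}) \yields{h} x$. Indeed, pick any $T_0 \subseteq T$ with $|T_0| = k$; the inference dimension bound gives $x \in T_0$ with $Q(T_0 \setminus \{x\}) \yields{h} x$, and since $T_0 \setminus \{x\} \subseteq T \setminus \{x\}$, monotonicity yields $Q(T \setminus \{x\}) \yields{h} x$.

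Now fix $h \in \Hcal$ and run the peeling. Set $S_n := S$, and for $i = n, n-1, \dots, k$ in turn: since $|S_i| = i \geq k$, the upgraded statement gives $x_i \in S_i$ with $Q(S_i \setminus \{x_i\}) \yields{h} x_i$; set $S_{i-1} := S_i \setminus \{x_i\}$. The points $x_n, \dots, x_k$ are pairwise distinct (each $x_i \in S_i$ but $x_i \notin S_{i-1} \supseteq S_{i-2} \supseteq \cdots$), so $S' := \{x_k, \dots, x_n\}$ is a subset of $S$ of size exactly $n-k+1$. Finally, for each such $i$ we have $S_{i-1} = S_i \setminus \{x_i\} \subseteq S \setminus \{x_i\}$ together with $Q(S_{i-1}) \yields{h} x_i$, so monotonicity gives $Q(S \setminus \{x_i\}) \yields{h} x_i$, which is exactly the conclusion we need for every point of $S'$.

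The substance of the lemma is entirely in the monotonicity observation; the rest is bookkeeping, where the only care needed is the indexing so that one extracts precisely $n-k+1$ points and so that $|S_i| \geq k$ holds at every extraction (the tightest case being the last, $|S_k| = k$). I therefore do not expect a real obstacle: the main thing to pin down is that whatever formalization of ``$Q(\bar S)$ determines the label of $x$'' is in force really is monotone in $\bar S$, which it is under the natural reading that all hypotheses consistent with the query answers must agree on the label of $x$.
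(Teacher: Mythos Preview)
Your argument is correct: the monotonicity of inference together with the greedy peeling yields exactly $n-k+1$ points, and the indexing is handled carefully. The paper does not give its own proof of this observation---it simply cites it from \cite{kane2017active}---but your proof is the standard one and matches what appears there.
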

Inference dimension on its own, however, is restrictive. Using only comparisons and labels, the inference dimension of linear separators in three or more dimensions is infinite, which implies the existence of realizable distributions with $\Omega(\frac{1}{\varepsilon})$ query complexity \cite{kane2017active}. To get around this barrier, we will introduce weak distributional assumptions and instead employ the framework of average inference dimension introduced in \cite{AID}. Average inference dimension (Definition~\ref{def:AID}) allows us to build algorithms for hypothesis classes with infinite inference dimension, as long as the distribution it is over is sufficiently nice. We will take advantage of a reduction from average to worst case inference dimension to prove such results:
\begin{observation}[Observation 3.6 \cite{AID}]
\label{avg-worst}
Let $(D, X, \Hcal)$ have average inference dimension $g(n)$, and $S \sim D^n$. Then $(S, \Hcal)$ has inference dimension $k$ with probability:
\[
\Pr[\text{inference dimension of}~ (S,\Hcal) \leq k] \geq 1-{ \binom{n}{k}}g(k).
\]
\end{observation}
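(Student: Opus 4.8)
The plan is a union bound over the size-$k$ subsets of $S$, reducing everything to the definition of average inference dimension (Definition~\ref{def:AID}) invoked at scale $k$ rather than $n$.

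First I would unpack the event ``$(S,\Hcal)$ has inference dimension $> k$.'' By Definition~\ref{def:ID}, the inference dimension is the smallest scale at which the self-inference property holds, so this event occurs precisely when the property \emph{fails} at scale $k$: there is some $T \subseteq S$ with $|T|=k$ and some hypothesis $h$ (in the realizable setting, the fixed Bayes optimal $h^\star$) for which no $x \in T$ satisfies $Q(T\setminus\{x\}) \yields{h} x$. Call such a $T$ \emph{bad}. Conversely, if no size-$k$ subset is bad then the property holds at scale $k$, so the inference dimension is at most $k$; in particular, when $n < k$ the inference dimension is automatically $\le k$ and $\binom{n}{k}=0$, so the claimed bound holds vacuously there.

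Next I would apply a union bound over the $\binom{n}{k}$ subsets $T$ of $S$ with $|T|=k$:
\[
\Pr_{S\sim D^n}[\text{inf.\ dim.\ of }(S,\Hcal) > k] \;\le\; \sum_{T\subseteq S,\,|T|=k}\Pr_{S\sim D^n}[\,T\text{ is bad}\,].
\]
Since $S$ is $n$ i.i.d.\ draws from $D$, the $k$ points indexed by any fixed $T$ are distributed as $D^k$, so each summand equals $\Pr_{T\sim D^k}[\,T\text{ is bad}\,]$. For the relevant hypothesis $h$, the definition of average inference dimension gives $\Pr_{T\sim D^k}[\nexists x\in T:\ Q(T\setminus\{x\})\yields{h}x]\le g(k)$, i.e.\ $\Pr_{T\sim D^k}[\,T\text{ is bad}\,]\le g(k)$. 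Summing the $\binom{n}{k}$ copies of this bound and passing to the complement yields $\Pr[\text{inf.\ dim.}\le k]\ge 1-\binom{n}{k}g(k)$.

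The only place needing care — and hence the ``hard'' part, though it is mild — is the quantifier over hypotheses: the event ``inference dimension $> k$'' as literally written contains an existential over $h\in\Hcal$, whereas Definition~\ref{def:AID} only provides a per-hypothesis tail bound. This is reconciled by the standing convention that inference is performed with respect to the fixed target (Bayes optimal) classifier, so ``$T$ is bad'' should be read as ``$T$ is not self-inferring for $h^\star$,'' to which the average-inference-dimension bound applies for that single hypothesis. The remaining pieces — the exact count $\binom{n}{k}$ of size-$k$ subsets and the fact that projecting $D^n$ onto any fixed $k$ coordinates recovers $D^k$ — are routine.
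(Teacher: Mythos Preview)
The paper does not prove this observation; it is quoted verbatim from \cite{AID} and used as a black box. Your union-bound argument is the standard proof and is correct: the event ``inference dimension of $(S,\Hcal)>k$'' is contained in the union over size-$k$ index sets of the event that the corresponding $k$-tuple is not self-inferring, each such tuple is marginally $D^k$, and Definition~\ref{def:AID} bounds each summand by $g(k)$.

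Your remark on the hypothesis quantifier is well placed and worth keeping. As written, Definition~\ref{def:ID} quantifies over all $h\in\Hcal$, while Definition~\ref{def:AID} gives only a per-$h$ tail bound, so a literal reading would require an additional union over $\Hcal$. The resolution you give---that in every application here the inference is with respect to the fixed Bayes optimal $h^\star$, so one only needs the size-$k$ self-inference property for that single hypothesis---is exactly how the observation is deployed in the proofs of Theorems~\ref{massart:aid} and~\ref{TNC:AID}. It would be cleanest to state the conclusion as ``for the fixed target $h^\star$, every size-$k$ subset of $S$ is self-inferring with probability at least $1-\binom{n}{k}g(k)$,'' which is all the paper ever uses.
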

\subsubsection{Noisy Sorting} \label{sec:noisy-sorting}
The linear program used as a weak learner relies heavily on the correctness of $Q(S)$, making noisy oracles a challenging problem. To retain correctness and reliability, we rely on using extra points outside of the linear program to help identify the true answers $Q(S)$. This idea is not all together new. Contemporaneously with Kane et al., Xu, Zhang, Singh, Miller and Dubrawski \cite{xu2017noise} suggested using noisy comparisons as a sub-routine in older active learning algorithms to correct for noise in labels. However, as Xu {et al.} \cite{xu2017noise} point out, this technique does not work for Kane et al.'s \cite{kane2017active} algorithm which requires corrected comparisons as well. Instead, we adopt and adapt a noisy sorting algorithm from Braverman and Mossel \cite{braverman2009sorting}. 
\\
\\
Braverman and Mossel \cite{braverman2009sorting} study the problem of recovering the best possible ranking from an ordered set with access to a noisy comparison oracle $Q_C$. In particular, given a ground set $S$ of size $n$, Braverman and Mossel aim to find an order $\pi$ that minimizes the number of discrepancies with the measured comparisons $Q_C(S)$, denoted by the order relation $\widetilde{<}$:
\[
\underset{\pi}{\arg\min} ~ |\{x_i,x_j \in S : (\pi(x_i)<\pi(x_j)) \land (x_j \widetilde{<} x_i)\}|.
\]
If the oracle $Q_C$ flips comparisons with probability exactly $p<1/2$ and the true ordering has a uniform prior, Braverman and Mossel \cite{braverman2009sorting} note that this scoring function has a nice probabilistic interpretation: it is a Maximum Likelihood ordering
\[
\text{arg}\max_{\pi \in S_n} P(\pi | Q_C(S)).
\]
Braverman and Mossel \cite{braverman2009sorting} call finding such an ordering the Noisy Signal Aggregation (NSA) problem, and provide a randomized algorithm that uses only $\bigo_{\np}(n\log(n))$ comparisons for oracles satisfying Massart noise with parameter $\np$. Further, they provide an important structural insight into MLE orderings: with high probability, no point in an MLE order has moved further than $\bigo_{\np}(\log(n))$ from its position in the true order.
\begin{theorem}[Optimal Ranking \cite{braverman2009sorting}]
\label{order}
Let $S$ be a set of size $n$ with underlying order $1,\ldots,n$ and $\sigma$ an MLE order for $S$ under comparisons given by an oracle $Q_C$ satisfying Massart noise with parameter $\lambda$. Then with probability at least $1-\delta$:
\begin{align*}
    \max_i|\sigma(i) - i| &\leq \bigo\left(\frac{\log^2(1/\lambda)\log(n/\delta)}{\lambda^3}\right)
\end{align*}
as long as $n$ or $\frac{1}{\delta}$ is at least exponential in $\lambda^{-1}$.
\end{theorem}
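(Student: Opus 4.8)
I would reprove this by the exchange argument of Braverman and Mossel. The object is the minimum-discrepancy ordering $\sigma$, i.e.\ the one minimizing $\mathrm{dis}(\pi)$, the number of pairs on which $\pi$ disagrees with the observed comparisons $Q_C(S)$ (this coincides with the maximum-likelihood ordering when the flip probability is a fixed $p<1/2$, and for Massart noise $\mathrm{dis}$ is simply the objective being minimized). Fix the true order as $1,\dots,n$ and suppose toward a contradiction that $\sigma$ displaces some element by at least $k$, where $k$ is the claimed bound; say the element $v$ of true rank $i$ has $\sigma$-position $i+k$ (the leftward case is symmetric).

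The first step is to extract the combinatorial consequence of such a displacement. Among the $i+k-1$ elements preceding $v$ in $\sigma$, at most $i-1$ have true rank below $i$, so a set $A$ of at least $k$ of them has true rank above $i$; every pair $\{v,a\}$ with $a\in A$ is oriented oppositely to the true order in $\sigma$. More is true: one can always locate a short interval of $\sigma$-positions whose reversal (a ``block move'') changes the orientation of a set $W$ of $\Omega(k)$ pairs, all contained in that interval, so that a constant fraction of $W$ is realigned with the true order -- that is, the resulting $\sigma'$ disagrees with the true order on $\Omega(|W|)$ fewer pairs than $\sigma$. Since on every interval the observed comparisons agree with the true order on a $(1/2+\lambda)$-fraction of pairs in expectation, a Chernoff computation shows such a move decreases $\mathrm{dis}(\cdot)$ as well -- specifically $\mathrm{dis}(\sigma')-\mathrm{dis}(\sigma)$ concentrates around $-\Omega(\lambda|W|)$ with fluctuation $O(\sqrt{|W|\log(\cdot)})$ -- and is therefore strictly negative once $|W|=\Omega(\log(\cdot)/\lambda^2)$, contradicting the optimality of $\sigma$.

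The subtlety is that $\sigma$ depends on the comparisons, so one cannot apply Chernoff to ``the pairs involving $v$ and $A$'' for a fixed $v,A$. Instead I would phrase the bad event purely combinatorially: a candidate block move is specified by an interval of true ranks together with a target interval of positions, so there are only $\poly(n)$ of them, and for each fixed such pair of intervals the comparisons it touches are independent, each correct with probability at least $1/2+\lambda$. A Chernoff bound bounds the probability that a fixed window fails to have the needed majority by $e^{-\Omega(k\lambda^2)}$, and a union bound over the $\poly(n)$ windows gives total failure probability $\poly(n)\cdot e^{-\Omega(k\lambda^2)}\le\delta$ once $k=\Omega(\log(n/\delta)/\lambda^2)$. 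Making the block-structure lemma quantitatively tight -- in particular controlling the interference from elements that are themselves displaced, which is what inflates the required window lengths -- produces the remaining $\poly(1/\lambda)$ factors and yields the stated bound $k=\bigo(\log^2(1/\lambda)\log(n/\delta)/\lambda^3)$; the hypothesis that $n$ or $1/\delta$ is at least exponential in $\lambda^{-1}$ is what gives the union bound enough slack and keeps $k$ below $n$, so the conclusion is not vacuous.

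The step I expect to be the real obstacle is the block-structure lemma itself: showing that \emph{any} ordering with a displacement of size $k$ admits an improving block move whose affected pairs lie inside one of the polynomially many pre-specified windows, \emph{and} bounding the interference from other displaced elements tightly enough to land a $\poly(1/\lambda)$ rather than exponential-in-$1/\lambda$ dependence. The likelihood-to-discrepancy reduction, the counting of preceding elements, and the Chernoff/union bound are routine by comparison.
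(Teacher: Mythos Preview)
The paper does not prove this theorem at all: Theorem~\ref{order} is stated in Section~\ref{sec:noisy-sorting} as a black-box result imported from Braverman and Mossel~\cite{braverman2009sorting}, with no accompanying argument. The only related content the paper supplies is Observation~\ref{order2}, whose three-line proof simply inflates Braverman and Mossel's internal parameter $m_2$ by a factor of $\log(1/\lambda)/\lambda^2$ so that the constraint $m_2=\Omega(\log^2(1/\lambda)/\lambda^3)$ from their Lemma~28 is satisfied without requiring $n$ or $1/\delta$ to be exponential in $1/\lambda$; this yields the weaker bound~\eqref{mvt:bound}. So there is nothing in the present paper to compare your proposal against.

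As to the proposal itself: your outline is in the right spirit and correctly isolates the real difficulty --- that $\sigma$ depends on the comparison outcomes, so one must union-bound over a polynomial family of pre-specified ``windows'' rather than over data-dependent sets. That is exactly the mechanism in Braverman--Mossel. Where your sketch is vaguest is precisely where you flag it: the structural step that a displacement of size $k$ forces an improving move inside one of the pre-specified windows, with quantitative control on how many displaced neighbors can interfere. In Braverman--Mossel this is handled not by a single block reversal but by a two-stage argument (bounding total ``dislocation'' $\sum_i|\sigma(i)-i|$ first, then converting to the pointwise bound), and the extra $\log^2(1/\lambda)/\lambda$ factors beyond the naive $\log(n/\delta)/\lambda^2$ arise there. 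If you actually carry this out, you will need that two-stage decomposition; a direct block-move contradiction of the kind you describe does not obviously give the stated dependence on $\lambda$.
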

This pointwise movement allows us to determine with high probability comparisons between points that are well-separated throughout an MLE order. By using only such separated points to build our inference LP, our algorithms are almost reliable -- a point can only be mislabeled if some well-separated comparison is wrong, a low probability event.
\\
\\
While Braverman and Mossel's algorithm is query efficient and has a strong pointwise movement guarantee, its exponential time complexity in the error parameter is the main limiting factor in the computational efficiency of our algorithm for Massart noise. The existence of an efficient (polynomial in error) sorting scheme that retains some sub-linear (not necessarily logarithmic) point-wise movement bound under Massart noise would immediately imply computationally and query efficient algorithms for Massart noise for $\lambda^{-1}$ poly-logarithmic in $\frac{1}{\varepsilon}$, rather than for $\lambda^{-1} = \tilde{\bigo}(\log^{1/5}(1/\varepsilon))$ as we require. Follow up works on Braverman and Massart's algorithm \cite{gavenvciak2019sorting,geissmann2018optimal,klein2011tolerant} made progress in this direction, providing algorithms with significantly improved time complexity, but only work for $\lambda$ bounded from below by some constant. Providing an algorithm that remains efficient while $\lambda$ goes to 0 is an open problem.

\subsubsection{Cluster Detection and Inference}
Braverman and Mossel's noisy sorting algorithm works well in the case of bounded error, but noise models with unbounded error require a different approach. The particular model we examine in this case, the Generalized Tsybakov Low Noise Condition, is a distance based error metric. This means that as points approach each other in function value, their comparisons ``look random''. We can use this fact to detect clusters of points close in function value by testing whether comparisons between them look like they have been drawn at random. In particular, we define a natural measure of randomness that we call equitability:
\begin{definition}[Equitability]
Let $S$ be a set with comparisons denoted by $\widetilde{<}$ on each pair of elements. For an element $x \in S$, let $v(x)$ denote the number of elements $y \in S$ such that $y \widetilde{<} x$. We call $S$ $\varepsilon$-\textit{equitable} if 
\[
\forall x \in S, \left(\frac{1}{2}-\varepsilon \right)|S| \leq v(x) \leq \left(\frac{1}{2}+\varepsilon \right)|S|
\]
\end{definition}
We prove a bi-directional equivalence between clusters and equitable sets: any cluster is equitable with high probability, and any equitable set contains a large cluster with high probability.
\\
\\
If a sample has no cluster, we will prove that a modified version of noisy sorting is sufficient to learn. On the other hand, if we detect a cluster, another approach is required. To handle this case we prove a novel structural lemma regarding the inference power of clusters, showing that any cluster of size $\Omega(d\log(d))$ must contain a point that can be inferred from the rest.

%\section{Related Work}
%\input{related}

%\section{Preliminaries}
%\input{preliminaries}

\section{Massart Noise}\label{Sec:massart}
\subsection{Lower Bounds}
In this section we provide two lower bounds: the first to separate comparisons from label only ARPU learning, and the second to explain our restriction to continuous distributions. Our label only lower bound uses the same distribution that shows an exponential gap in active PAC learning between labels and comparisons \cite{dasgupta2005analysis,kane2017active}, a circle, except in the case of ARPU learning, the gap is infinite.
\begin{lemma}[Restatement of Lemma \ref{intro:Massart:lb}]
The query complexity of $1/4$-reliably, $1/8$-usefully ARPU learning $(S^1,H_2)$ with $1/2$-coverage under model $(M(\np), \mathcal C_X)$ is infinite:
\[
q(1/2,1/4,1/8) = \infty
\]
\end{lemma}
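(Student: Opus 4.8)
The plan is to reduce to the noiseless case (a noiseless oracle satisfies Massart noise with any parameter $\lambda$, since then $\beta_L\equiv 1$) and then exhibit a single continuous distribution on $S^1$ together with a small family of target halfspaces against which no learner making finitely many label queries can be simultaneously $1/4$-reliable and $1/8$-usefully $1/2$-covering. Concretely, take $D_X$ to be uniform on $S^1$, let $A_1,\dots,A_N$ partition $S^1$ into $N$ equal arcs, let $h_0$ be the affine function whose positive region is all of $S^1$, and let $h_i$ be an affine function whose negative region is exactly $A_i$ (such a function exists since any proper arc is the negative region of some affine map on $\mathbb R^2$). All of $h_0,h_1,\dots,h_N$ lie in $H_2$, and I will derive a contradiction when $N$ is chosen large relative to the supposedly finite query bound $q$.

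The key observation is an indistinguishability statement. Against $h_0$ every label answer is $+$, so if a learner run against $h_i$ happens to receive only $+$ answers, its entire execution --- queries and output --- is identical to its execution against $h_0$ with the same internal randomness and unlabeled sample. Writing $y_1,\dots,y_q$ for the (randomness-dependent) points queried along this all-$+$ branch, the all-$+$ branch is followed against $h_i$ exactly when $\{y_1,\dots,y_q\}\cap A_i=\emptyset$; since the $A_i$ partition $S^1$ into $N$ pieces, for a uniformly random $i\in[N]$ this fails with probability at most $q/N$. Thus against a random target $h_i$ the learner behaves, with probability at least $1-q/N$, exactly as it does against $h_0$.

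Next I quantify what the learner must do on the all-$+$ transcript: let $c$ be its output there, and set $P=c^{-1}(+1)$, $M=c^{-1}(-1)$. Reliability against $h_0$ forces $M=\emptyset$ with probability at least $3/4$ (any point labeled $-1$ is an error, since $h_0\equiv+1$), and $1/8$-usefulness with $1/2$-coverage against $h_0$ forces $\mu(P)+\mu(M)>1/2$ with probability at least $7/8$; combining, with probability at least $5/8$ we have $M=\emptyset$ and $\mu(P)>1/2$. Now, since the $A_i$ have equal mass $1/N$, any $P$ with $\mu(P)>1/2$ has positive-measure overlap with more than $N/2$ of the arcs $A_i$ (immediate from $\mu(P)=\sum_i\mu(P\cap A_i)\le |\{i:\mu(P\cap A_i)>0\}|/N$), and for each such $i$, on the event that the all-$+$ branch is followed against $h_i$ the learner outputs $c$, which labels $+1$ a positive-measure subset of the negative region $A_i$ of $h_i$ --- an error. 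Averaging over a uniformly random target $h_i$ and subtracting the indistinguishability slack, the probability that the learner errs against $h_i$ is at least $\tfrac12\cdot\tfrac58-\tfrac qN=\tfrac{5}{16}-\tfrac qN$. But $1/4$-reliability must hold against each $h_i$ separately, so this average is at most $1/4$; hence $q\ge N/16$, contradicting the freedom to take $N$ arbitrarily large. Therefore no finite $q$ suffices, i.e.\ $q(1/2,1/4,1/8)=\infty$.

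The main obstacle is getting the quantifiers and the measure bookkeeping exactly right: one must apply reliability separately to each $h_i$ (the model allows the adversary to pick any one of them) while applying usefulness to $h_0$; bundle the learner's coins and its unlabeled sample into one source of randomness so that the all-$+$ branch $y_1,\dots,y_q$ is well defined; and carefully distinguish ``$M$ is empty'' (what reliability against $h_0$ demands) from ``$M$ has measure zero'' (all that coverage controls). The adaptivity of the queries is handled cleanly by the indistinguishability observation, so the only genuine inequalities are the two elementary counting bounds above.
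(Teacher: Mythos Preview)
Your proof is correct and shares the same core idea as the paper's: reduce to the noiseless case, take $D_X$ uniform on $S^1$, observe that the learner almost surely sees only $+$ answers and hence commits to a fixed output, and then argue that any output with coverage $>1/2$ must mislabel the adversary's hidden negative region with probability exceeding $1/4$.

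The execution differs in one notable way. The paper lets the adversary choose uniformly among all \emph{tangent} lines, so each hypothesis has a \emph{single} negative point; then ``the learner never queries the negative region'' holds with probability~$1$ automatically, and the paper can argue directly that any committed positive set of measure $\geq 1/2$ contains the random negative point with probability $\geq 1/2$. You instead discretize: $N$ hypotheses whose negative regions are arcs of measure $1/N$, plus the all-positive reference $h_0$. This forces you to track the slack $q/N$ explicitly and to introduce $h_0$ to cleanly pin down the all-$+$ branch for the reliability and usefulness constraints. The paper's route is shorter but leans on measure-zero reasoning that it leaves somewhat informal (e.g., it does not explicitly separate the cases $\mu(M)>0$ versus $\mu(M)=0$, nor spell out how the $7/8$ usefulness and $3/4$ reliability bounds combine); your route is a bit longer but fully finitary and makes every inequality explicit. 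Either way the conclusion is the same.
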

\begin{proof}
By Yao's minimax principle it is enough to show that the adversary may pick a distribution over hyperplanes such that no learner can $1/4$-reliably and $1/8$-usefully learn with coverage $1/2$. In particular, assume that the adversary picks a uniform distribution over all tangent hyperplanes to the circle. This may be equivalently thought of as the adversary picking a single point on the circle to be negative, and the rest to be positive. Note that the probability that a learner which queries a finite number of points finds the negative point is $0$. 
\\
Let the learner fix an optimal strategy, querying whatever points they desire. With probability 1, the learner will always query the same set of points since they receive all positive labels. The learner is then left to label $1/2$ the measure of the circle blind, since all points except a measure $0$ set (the queried points) are equally likely to be the negative point. No matter which set the learner chooses, the probability that it mislabels a point is at least $1/2$, violating the ARPU-learning requirement that the learner must must label at least $1/2$ of the points with probability at least $5/8$ while making no errors.
\end{proof}
Note that this lower bound holds even in the noiseless case, which is strictly weaker than Massart as the adversary may simply choose no noise. 
\\
Second, we justify why our upper bounds are only for continuous distributions, as the inference dimension framework was initially developed for the worst case rather than distributional model. However, with the introduction of noise, we observe that learning up to arbitrary error is no longer possible over some distributions.
\begin{observation}
Let $(X,\Hcal)$ be a hypothesis class, and $D$ a distribution on $X$ whose support consists of a single point $x$. Let the corresponding noisy label and comparison oracles $(Q_L,Q_C) \in M(\np)$. If there exist $h,h' \in \Hcal$ s.t. $h(x)\neq h'(x)$, then no learner can correctly label $x$ with probability more than $1/2+\np$. 
\end{observation}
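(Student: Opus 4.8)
The plan is a two-point, Le Cam-style indistinguishability argument whose crux is that a point-mass input distribution hands the learner essentially a single noisy bit. Following the same Yao/minimax framing used in the preceding lemmas, I would let the adversary fix the Bayes-optimal hypothesis $h^\star$ to be $h$ or $h'$, each with probability $\tfrac{1}{2}$. Because the two concepts disagree at $x$ (i.e.\ $\sign(h(x))\neq\sign(h'(x))$), labeling $x$ correctly is exactly the same task as correctly identifying which of the two the adversary chose, so it suffices to upper-bound the success probability of this binary test.

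Next I would pin down the learner's observable information. Since $D$ is supported on the single point $x$, every unlabeled example the learner draws equals $x$, so the only queries it can issue are $Q_L(x)$ and the self-comparison $Q_C(x,x)$; moreover, repeated queries on the same input return the same answer, so each of these contributes at most one bit. The clean answer to $Q_C(x,x)$ does not depend on which hypothesis is active (it concerns $h^\star(x)-h^\star(x)=0$), so the adversary may legitimately make $Q_C(x,x)$ deterministic, rendering it useless. For the labels, in the scenario $h^\star=h$ I would set $\Pbb[Q_L(x)=\sign(h(x))]=\tfrac{1}{2}+\np$, and symmetrically in the scenario $h^\star=h'$; in both cases $\beta_L(x)=\tfrac{1}{2}+\np$, so these are valid $M(\np)$ oracles. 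Consequently the learner's entire view is one bit, distributed as $\mathrm{Bernoulli}(\tfrac{1}{2}+\np)$ under one scenario and $\mathrm{Bernoulli}(\tfrac{1}{2}-\np)$ under the other---two laws at total variation distance $2\np$.

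The bound then follows from the elementary fact that, under a uniform prior, the best probability of guessing from a single sample which of two Bernoulli laws at total variation distance $2\np$ produced it is $\tfrac{1}{2}(1+2\np)=\tfrac{1}{2}+\np$; internal randomness of the learner does not help, since any randomized test is a mixture of deterministic ones. Averaging over the adversary's coin and then passing to the more favorable scenario yields a concrete Massart instance on which no learner labels $x$ correctly with probability exceeding $\tfrac{1}{2}+\np$; this in particular rules out ARPU-learning $(X,\Hcal)$ over $D$ with any nontrivial coverage, which is the point the observation is meant to make.

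I expect the only genuinely delicate step to be the middle one: arguing that comparison queries give the learner nothing. This relies on the active-learning convention (also used in the earlier lower bounds) that queries are issued only on points from the data pool---here all equal to $x$---so that the sole available comparison is $Q_C(x,x)$, which is scenario-independent. Once that is granted, the rest is the textbook single-bit hypothesis-testing inequality.
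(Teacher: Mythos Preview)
Your argument is correct. The paper actually provides no proof of this observation at all---it is stated as a bare observation and immediately followed by a remark extending the intuition to other distributions with atoms. Your Le Cam/Yao two-hypothesis reduction, together with the observation that the only available queries are $Q_L(x)$ and the uninformative $Q_C(x,x)$, is exactly the argument the paper leaves implicit, and you have handled the one delicate point (that comparison queries contribute nothing here) carefully.
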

This lower bound holds as well across a wide range of distributions containing points with non-zero measure. Take, as an example, a distribution which samples uniformly from the unit ball with probability $1/2$, and some disjoint point $x$ with probability $1/2$. Setting the error parameter low enough would force the learner to correctly label $x$, and since the adversary can pick a classifier such that the point cannot be inferred from comparisons, a similar lower bound holds. In order to avoid such examples, we will restrict our consideration to continuous distributions.
\subsection{Finite Inference Dimension}
With the lower bound out of the way, we prove that hypothesis classes with finite inference dimension are efficiently ARPU-learnable under Massart noise. Recall $M(\np)$ is the set of all oracles which satisfy Massart noise with parameter $\np$, $\mathcal C_X$ is the class of all continuous distributions over $X$ and a model is a pair $(Z,\mathcal D_X)$ where $Z$ is a set of oracles $(Q_L,Q_C)$ and $\mathcal D_X$ is a set of distributions over $X$. Note that in the ARPU-Learning model (Definition \ref{def:arpu-learnable}), given a hypothesis class $(X, \Hcal)$ and a model $(Z, \mathcal D_X)$, an adversary chooses a distribution $D_X$ from $\mathcal D_X$ and the \q{noisy} oracles $(Q_L, Q_C)$ from $Z$. The following is our learning algorithm.
\\
\\
\begin{algorithm}[H]
\SetAlgoLined
\KwResult{Returns an $\delta_u$-useful, $\delta_r$-reliable learner with coverage $1-\varepsilon$ for model ($M(\np), \mathcal{C}_X$)}
\nonl \textbf{Input:} Noisy oracles $Q_L, Q_C \in M(\np)$, unknown distribution $D_X \in \mathcal{C}_X$\\
\nonl \textbf{Parameters:} 
\begin{itemize}
    \item Inference dimension $k$
    \item Iteration cap $T=\poly(\log(1/\delta_u),\log(1/\varepsilon))$
    \item  Time cap $T_{sort}=\poly(n,1/\delta_r)^{\tilde{\bigo}(\np^{-5})}$
    \item Query cap $Q_{sort}=\poly(k,\np^{-1},\log(1/\delta_r))$
    \item Sample cap $C=\poly(1/\varepsilon,\log(1/\delta_u),\log(k),\log(\np^{-1}),\log(\log(1/\delta_r)))$
    \item Sample sizes $n=\poly(k,\log(1/\delta_r),\np^{-1})$ and $m=\poly(k)$
    \item Set of linear program constraints $LP=\{\}$
\end{itemize}
\nonl \textbf{Algorithm:}
\begin{enumerate}[leftmargin=*]
    \item Sample $S \sim D_X^n$ restricted to points un-inferred by $LP$, and sort $S \cup \zo$ via noisy oracles $Q_L$ and $Q_C$ by the algorithm of \cite{braverman2009sorting}. If noisy sorting exceeds time threshold $T_{sort}$ or query threshold $Q_{sort}$, abort sorting.
    \item Sample $S' \sim D_X^m$ restricted to points un-inferred by $LP$ and insert into the order on $S$. Update $LP$ constraints using comparisons and labels of elements in $S'$ separated by $\bigo_{\np}(\log(n))$ from each other and from $\zo$
    \item If at any point in steps $1$ or $2$, $C$ inferred samples are drawn in a row, return the current $LP$. Repeat from step one until iteration cap $T$ is reached and return $LP$. 
\end{enumerate}
 \caption{Efficient ARPU-learning under Massart Noise}
 \label{alg:massart}
\end{algorithm}
Before proving the lemmas necessary to show the coverage of Step 2 from Algorithm~\ref{alg:massart}, we will restate our theorem of the efficient learnability of hypothesis classes with finite inference dimension under Massart noise.
\begin{theorem}[Restatement of Theorem~\ref{intro:massart}]
\label{massart}
Let the hypothesis class $(X,\Hcal)$, $X \subseteq \mathbb{R}^d$, have inference dimension $k$ with respect to comparison queries. Then, $(X,\Hcal)$ is ARPU-learnable under model $(M(\np), \mathcal C_X)$ in time $\poly(d,k, \frac{1}{\delta_r},\frac{1}{\varepsilon},\log(\frac{1}{\delta_u}))^{\tilde{O}\left (\frac{1}{\np^5} \right )}$, uses only $\poly(k,\frac{1}{\np},\frac{1}{\varepsilon},\log(\frac{1}{\delta_r}),\log(\frac{1}{\delta_u})))$ unlabeled samples, and has a query complexity of
\begin{align*}
    q(\varepsilon,\delta_r,\delta_u) = \tilde \bigo (k \frac{1}{\np^{10}} \log\frac{1}{\varepsilon} \log^2 \frac{1}{\delta_r} \log \frac{1}{\delta_u})
\end{align*}
for small enough $\delta_r$.
\end{theorem}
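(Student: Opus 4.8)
The plan is to analyze Algorithm~\ref{alg:massart} as a boosting procedure built on a noise‑robust version of the inference‑LP weak learner of \cite{kane2017active}. Let $F_t$ be the feasible set of hypotheses consistent with all LP constraints after iteration $t$, and let $\mu_t=\Pbb_{x\sim D_X}[x\text{ is not inferred by }F_t]$ be the un‑inferred measure. I will maintain two invariants: \emph{(i) reliability} --- with probability $\ge 1-\delta_r$, $h^\star\in F_t$ for every $t$; since a non‑$\bot$ answer on $x$ means $\sign(h(x))$ is constant over $F_t$, and $h^\star\in F_t$ forces that constant to be $\sign(h^\star(x))$, this immediately yields Equation~\eqref{eqn:reliable}; and \emph{(ii) progress} --- each iteration that "succeeds" replaces $\mu_t$ by at most $\tfrac14\mu_t$ (relative to the current un‑inferred region), so enough successful iterations drive $\mu_t\le\varepsilon$, giving Equation~\eqref{eqn:useful}. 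It then suffices to prove a per‑iteration statement: conditioned on the current LP, with probability $\ge 1-\delta_r/T$ no incorrect constraint is added, and (independently) with at least constant probability the un‑inferred mass drops by a factor $4$.

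For reliability of one iteration, recall that Step~1 sorts $S\cup\{\zo\}$ using \cite{braverman2009sorting}, where $\zo$ is a phantom boundary point ($h^\star(\zo)=0$) so that comparisons against $\zo$ are label queries --- note $\beta_C(x,\zo)=\beta_L(x)\ge\tfrac12+\np$, so the whole instance is still $\np$‑Massart --- and Step~2 inserts the fresh sample $S'$ into that order (equivalently, re‑sorts $S\cup S'\cup\{\zo\}$). By Theorem~\ref{order}, whose hypothesis that $1/\delta_r$ (or $n$) be at least exponential in $\np^{-1}$ is exactly what "small enough $\delta_r$" buys, with probability $\ge1-\delta_r/T$ the MLE order $\sigma$ satisfies $\max_i|\sigma(i)-i|\le B:=\tilde{\bigo}(\log(n/\delta_r)/\np^3)$. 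The algorithm forms a comparison (resp.\ label) constraint only for pairs of $S'$‑elements (resp.\ an $S'$‑element and $\zo$) whose positions in $\sigma$ differ by more than $c_\np\log n>4B$; for such a pair each endpoint moved by at most $B$, so the order in $\sigma$ equals the true order, and the constraint read off from $\sigma$ is exactly the one $h^\star$ satisfies. Hence on this event $h^\star$ remains feasible, and a union bound over the $\le T$ iterations gives invariant (i). (If the sorting subroutine exceeds its time cap $T_{sort}$ or query cap $Q_{sort}$ the iteration aborts without adding constraints; by a Markov/amplification argument this is low probability and only affects usefulness.)

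For progress, the point of the scaffold‑then‑insert design is that $S'$ is still a fresh i.i.d.\ sample from $D_X$ restricted to the current un‑inferred region, while nonetheless landing in well‑separated positions: conditioned on the multiset of true values of $S\cup S'$, the partition into $S$ and $S'$ is uniformly random, so a balls‑in‑bins / anti‑concentration estimate shows that, once $n$ exceeds a fixed polynomial in $|S'|$, $B$, and $\log(1/\delta_r)$, with constant probability at least $4k$ elements of $S'$ are pairwise more than $4B$ apart in $\sigma$ and more than $4B$ from $\zo$, hence all enter the LP together with their (mutually correct) comparisons and labels. On that event the new LP contains the constraints of an i.i.d.\ sample $S''$ of $\ge 4k$ points under $h^\star$, and by the symmetry argument of \cite{kane2017active} (Observation~\ref{ID:total} applied with $|S''|+1$ points and inference dimension $\le k$) a fresh point from the same conditional distribution is inferred with probability $\ge 3/4$, so $\mu$ shrinks by a factor $\ge4$. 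Since each iteration succeeds with constant probability independently, running $T=\Theta(\log\tfrac1\varepsilon+\log\tfrac1{\delta_u})$ iterations yields $\ge\log_4\tfrac1\varepsilon$ successes except with probability $\le\delta_u$ (Chernoff), at which point $\mu_t\le\varepsilon$. The "$C$ inferred‑in‑a‑row" rule both caps the cost of rejection‑sampling the un‑inferred region and acts as the stopping signal: if the current coverage were below $1-\varepsilon$, this event has probability $\le(1-\varepsilon)^C$, so $C=\tilde{\bigo}(\varepsilon^{-1}\log(T/\delta_u))$ makes a false stop unlikely.

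For complexity: each iteration sorts $n=\tilde{\bigo}(kB)=\poly(k,\np^{-1},\log\tfrac1{\delta_r})$ points at a cost of $\tilde{\bigo}_{\np}(n)$ comparisons, and rejection sampling uses no queries (inference by the LP is purely computational); over $T$ iterations this gives the claimed $\tilde{\bigo}\!\left(k\,\np^{-10}\log\tfrac1\varepsilon\log^2\tfrac1{\delta_r}\log\tfrac1{\delta_u}\right)$ queries and $\poly(k,\np^{-1},\varepsilon^{-1},\log\tfrac1{\delta_r},\log\tfrac1{\delta_u})$ unlabeled samples, while the running time is dominated by the $\poly(n,\delta_r^{-1})^{\tilde{\bigo}(\np^{-5})}$ cost of \cite{braverman2009sorting} (plus $\poly(d)$ for LPs over $\R^d$‑parametrized hypotheses), yielding $\poly(d,k,\delta_r^{-1},\varepsilon^{-1},\log\tfrac1{\delta_u})^{\tilde{\bigo}(\np^{-5})}$ time. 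The hard part will be making reliability and progress coexist: the cleaned constraints must be \emph{exactly} correct --- a single wrong separated comparison can mislabel a point --- which forces restricting to order‑separated pairs, yet the cleaned, separated points must still behave like a fresh i.i.d.\ sample for the symmetry/coverage argument; reconciling these is precisely what dictates the scaffold‑then‑insert structure and the requirement $n\gg\poly(|S'|,B)$. A secondary, acknowledged obstacle is computational only: the exponential‑in‑$\np^{-1}$ running time of noisy sorting restricts \emph{computational} (as opposed to query) efficiency to $\np^{-1}=\tilde{\bigo}(\log^{1/5}(1/\varepsilon))$.
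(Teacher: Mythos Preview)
Your overall architecture matches the paper's: a noise‑robust weak learner based on MLE sorting plus a fresh i.i.d.\ secondary sample $S'$, boosted by rejection sampling. The reliability argument and the complexity accounting are essentially right. But the progress paragraph contains a genuine gap in how you invoke the symmetry argument.

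You write that on the good event the LP ``contains the constraints of an i.i.d.\ sample $S''$ of $\ge 4k$ points'' and then apply Observation~\ref{ID:total} with $|S''|+1$ points to conclude a fresh $x$ is inferred with probability $\ge 3/4$. This is not justified: $S''$ is the subset of $S'$ that happens to land in well‑separated slots, a selection that depends on the true values (through the sort) and on the other points of $S'$, so $S''$ is \emph{not} an i.i.d.\ sample from the conditional distribution, and in particular a ``fresh $x$'' is not exchangeable with the elements of $S''$. The paper's fix is to run the symmetry argument over \emph{all} of $S'$ (which \emph{is} i.i.d., of size $32k+16$): with constant probability ($\ge 5/9$, via a Chebyshev/balls‑in‑bins count as in Lemma~\ref{slot-bound}) some $4k$ of the $32k+17$ points form a correctly ordered, correctly labeled chain, and $3k$ of those are inferable by Observation~\ref{ID:total}; hence the expected fraction of $S'\cup\{x\}$ that is inferred is $\ge 3k/(32k+17)\cdot 5/9>1/60$, and the weak learner has coverage $\ge c_1$ with probability $\ge c_1$ for some small absolute $c_1$, not $3/4$. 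This does not change the asymptotics of your boosting, but it does change the argument: you must apply symmetry to $S'$, not to $S''$.

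A smaller point: ``Step~2 inserts $S'$ into that order (equivalently, re‑sorts $S\cup S'\cup\{\zo\}$)'' is not equivalent and is not what the algorithm does. The paper slots each $x\in S'$ \emph{independently} into the pre‑computed order on $S$ via block‑majority votes (Lemma~\ref{massart:slot}); re‑computing an MLE on the union would make the position of each $S'$‑point depend on the others and would complicate exactly the exchangeability you need above. Relatedly, the paper does not invoke Theorem~\ref{order} under a ``$\delta_r$ exponentially small in $\lambda^{-1}$'' assumption; it instead pays an extra $\lambda^{-2}\log(1/\lambda)$ in the movement bound (Observation~\ref{order2}) so that only polynomial‑in‑$\lambda^{-1}$ sizes are needed, which is where the $\lambda^{-10}$ in the final query count comes from.
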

See Algorithm~\ref{alg:massart}. The proof of this theorem lies in the combination of Braverman and Mossel's \cite{braverman2009sorting} approximate ordering with Kane et al.'s \cite{kane2017active} inference based algorithm. 
The idea is as follows:
\paragraph{Step 1:} Draw a sample $S\sim D^n_X$, and sort it into an MLE order by the algorithm of Braverman and Mossel \cite{braverman2009sorting}. Draw another $m$ points, and independently slot them into the ordering on $S$ near their true position (again by an algorithm from \cite{braverman2009sorting}).
\paragraph{Step 2:} From the $m$ points, create a clean subset of points with correct labels and comparisons by selecting a chain of points separated by $\Omega_{\np}(\log(n))$ in the MLE order of $S$ to build an inference LP. This LP correctly infers points with high probability by \cite{braverman2009sorting}, and has large coverage due to the space's finite inference dimension \cite{kane2017active}.
\paragraph{Step 3:} Restrict $D$ (by rejection sampling) to points un-inferred by the LP in step 2, and repeat steps 1 and 2 until coverage has reached $1-\varepsilon$.
\\
\\
The main challenge of the proof then comes down to proving the correctness and coverage of Step 2. First, we need to show that points separated in $S$ by $\Omega_{\np}(\log(n))$ are correctly ordered. Since our sample size $n$ will not be exponential in $\frac{1}{\np}$, we need to slightly modify Theorem~\ref{order} for this result.
\begin{observation}[Point-wise Movement]
\label{order2}
Let $S$ be a set with underlying order $1\ldots n$. If $\sigma$ is an MLE order for $S$ under noisy label and comparison oracles $(Q_L,Q_C) \in M(\np)$, then with probability at least $1-\delta$:
\begin{align}
    \max_i|\sigma(i) - i| &\leq \bigo\left(\frac{\log^3(1/\np)\log(n/\delta)}{\np^5}\right) \label{mvt:bound}
\end{align}
as long as $n$ or $1/\delta$ is polynomial in $\np^{-1}$.
\end{observation}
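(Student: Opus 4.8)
The plan is to reduce Observation~\ref{order2} to Braverman and Mossel's Theorem~\ref{order} by two lightweight reductions. First I would fold the label oracle into the comparison oracle. A label query $Q_L(x)$ returns $\sign(h^\star(x))$, which is exactly a noisy comparison of $x$ against a virtual boundary point $\zo$ with $h^\star(\zo)=0$ — precisely the object Algorithm~\ref{alg:massart} sorts. Since $(Q_L,Q_C)\in M(\np)$, the comparisons against $\zo$ are correct with probability at least $\tfrac12+\np$, so the joint label-and-comparison instance on $S$ becomes a pure noisy-comparison instance on $S\cup\{\zo\}$ of size $n+1$ under a Massart-$\np$ oracle, whose MLE order restricted to $S$ is the MLE order in question. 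Thus it suffices to prove the pointwise-movement bound for pure noisy-comparison instances.

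Second I would pad the instance to meet Theorem~\ref{order}'s requirement that the ground set (or $1/\delta$) be exponential in $\np^{-1}$, which $n$ need not satisfy. Introduce $P=\max\!\big(n,\,\lceil 2^{\Theta(1/\np)}\rceil\big)$ phantom elements, declare each phantom's true position to sit to the left of every real element, and extend the oracle to answer every phantom-involving comparison correctly; since the correct answer errs with probability $0\le\tfrac12-\np$, the extended oracle is still Massart-$\np$. The augmented ground set has size $N=n+1+P\ge 2^{\Theta(1/\np)}$, so Theorem~\ref{order} applies and bounds the displacement of every element — in particular every real element — in any MLE order of the augmented instance by $\bigo(\np^{-3}\log^2(1/\np)\log(N/\delta))$ with probability $1-\delta$. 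A one-line exchange argument makes this transfer lossless: sliding all phantoms to the front can only satisfy more phantom-involving comparisons and leaves every real-vs-real comparison unchanged, so every augmented MLE order keeps the phantoms at positions $1,\dots,P$ and the real elements in a contiguous block ordered to minimize real-vs-real disagreements — hence induces an MLE order of the real instance in which every real element has the same displacement as in the augmented order. Finally, $\log(N/\delta)=\bigo(\np^{-1}+\log(n/\delta))$, so the inherited bound is $\bigo(\np^{-4}\log^2(1/\np)+\np^{-3}\log^2(1/\np)\log(n/\delta))$, which the stated hypothesis (guaranteeing $\log(n/\delta)=\Omega(\log(1/\np))$) together with $\np^{-2}\log(1/\np)\ge 1$ fold into the claimed $\bigo(\np^{-5}\log^3(1/\np)\log(n/\delta))$.

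The only non-routine point I anticipate is making the padding reduction airtight: one must confirm that Theorem~\ref{order}, as cited, controls \emph{every} MLE order (MLE orders need not be unique, and the padding produces one particular extension of each real MLE order), that the extended oracle genuinely satisfies the Massart model, and that the phantoms' assigned "true order" is consistent with the hypotheses of Theorem~\ref{order}. Everything after the padding — the exchange argument and the arithmetic collapsing $\log(N/\delta)$ into the final bound — is bookkeeping.
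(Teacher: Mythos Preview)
Your approach is correct but genuinely different from the paper's. The paper's proof is white-box: it opens Braverman--Mossel's argument, locates the internal parameter $m_2=\bigo(\log(n/\delta)/\np)$ whose required lower bound $m_2=\Omega(\np^{-3}\log^2(1/\np))$ is what forces the exponential-size hypothesis of Theorem~\ref{order}, and simply inflates $m_2$ by a factor of $\np^{-2}\log(1/\np)$ so that the constraint is met once $\log(n/\delta)=\Omega(\log(1/\np))$; since the pointwise-movement bound scales with $m_2$, it grows by the same factor. Your approach is black-box: you never touch the internals of Theorem~\ref{order}, instead padding with noise-free phantoms to force the size hypothesis and transferring the bound back via the exchange argument. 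Your route is more self-contained and in fact yields the slightly sharper intermediate bound $\bigo(\np^{-4}\log^2(1/\np)+\np^{-3}\log^2(1/\np)\log(n/\delta))$ before you relax it to match the stated claim; the paper's route is shorter to write but presumes access to the proof in \cite{braverman2009sorting}. The one concern you flag --- that Theorem~\ref{order} must control \emph{every} MLE order, not just one --- is legitimate but resolved affirmatively: Braverman--Mossel's argument proceeds by a union bound over all permutations with many inversions, so the high-probability event is precisely that no permutation with excessive displacement attains the MLE score. Your first reduction (folding $Q_L$ into $Q_C$ via $\zo$) is harmless but not strictly needed for the observation as the paper uses it; the label oracle plays no role in the paper's own proof.
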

\begin{proof}
Braverman and Mossel define a parameter $m_2$ during their proof as:
\[
m_2 = \bigo\left(\frac{\log(n/\delta)}{\np}\right).
\]
The requirement on size of $n$ or $1/\delta$ of Theorem \ref{order} then comes from the final equation of Lemma 28 \cite{braverman2009sorting}:
\[
m_2 = \Omega\left(\frac{\log^2(1/\np)}{\np^3}\right).
\]
Increasing $m_2$ by a factor of $\frac{\log(1/\np)}{\np^2}$ removes the need for exponential dependence on $\np^{-1}$, but increases the pointwise movement bound by the same factor.
\end{proof}
Note that points in $S$ separated by $2\max\limits_i|\sigma(i) - i|$ in an MLE order are thus correctly ordered with high probability. As a result, picking a chain of points each separated by twice Equation \eqref{mvt:bound} gives an entire set of points with correct comparisons with high probability. 
\\
\\
However, using an MLE order itself is challenging. Recall from Section~\ref{techniques} that we compute the expected coverage of our learner by the probability that it infers an additionally drawn point. If we use an MLE order, we cannot directly appeal to the symmetry argument of \cite{kane2017active}, as adding an additional point to $S$ might change the MLE order we have picked. To get around this, our learner is not built off of $S$ itself, but $S'$, a set of additional points which we place into the order on $S$ independently of each other. This independence allows us to directly appeal to the argument of \cite{kane2017active}.
\\
\\
Our method of finding a clean subset, however, is currently for $S$ -- we need to modify the method to find a subset of $S'$ with correct labels and comparisons. We do this in two steps. First, we adopt a method from \cite{braverman2009sorting} for inserting points into a previously sorted set such that they cannot be too far away from their true position. This implies that if points in $S'$ are separated by enough points in S in the underlying order on $S \cup S'$, we will be able to correctly compare them with high probability. Second, we show that because the underlying true order on $S \cup S'$ is uniform, there exists a chain of such points in $S'$ with constant probability from which we can build our cleaned set.
%However, the LP built in Step 2 is built not upon the initial sample $S$, but an additionally slotted set $S'$ from which we need $\bigo(k)$ points with correct comparisons. Our strategy will be to divide up our MLE order of S into blocks of size at least \eqref{mvt:bound} and show a slotting strategy such that any pair of additional points are in the correct order with high probability as long as they are separated by at least two blocks.
\begin{lemma}[Slotting \cite{braverman2009sorting}]
\label{massart:slot}
Let $S$ of size $n$ and $S'$ of size $m$ be ordered sets with noisy label and comparison oracles $(Q_L,Q_C) \in M(\np)$. Divide an MLE order $\sigma$ of $S$ into $b$ blocks $B_i$ of size at least:
\[
|B_i| \geq \Omega\left(\frac{\log^3(1/\np)\log(\frac{nm}{\delta})}{\np^5}\right).
\]
There exists an algorithm placing points in $S'$ into $\sigma$ such that, with probability at least $1-\delta$, any pair of points separated by $4$ blocks are in the correct order.
\end{lemma}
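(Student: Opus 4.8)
The plan is to lift the point-insertion subroutine of Braverman and Mossel \cite{braverman2009sorting} essentially verbatim, combined with the same parameter inflation used in Observation~\ref{order2} to escape the regime where $n$ or $1/\delta$ must be exponential in $\np^{-1}$. At a high level: we insert each $x\in S'$ into the fixed MLE order $\sigma$ of $S$ by a noise-tolerant search that compares $x$ against elements of $S$ and places $x$ at the position minimizing the number of comparison disagreements; we show each placement lands within one block of the position $x$ would occupy if $S\cup\{x\}$ were listed in true order; and we union bound over the $m$ points of $S'$. Since the insertion error plus the slack between $\sigma$'s block structure and the true order is strictly less than two blocks per point, any two points of $S'$ whose true positions lie in blocks $\geq 4$ apart each shift by less than two blocks and hence stay in the correct relative order.

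\textbf{Controlling $\sigma$.} First apply Theorem~\ref{order}/Observation~\ref{order2} to $S$: with probability $1-\delta/2$ we have $\max_i|\sigma(i)-i| = \bigo\!\left(\frac{\log^3(1/\np)\log(n/\delta)}{\np^5}\right)$. Choosing the implicit constant in $|B_i| = \Omega\!\left(\frac{\log^3(1/\np)\log(nm/\delta)}{\np^5}\right)$ large enough, this is at most $\tfrac14|B_i|$, so every block $B_i$ of $\sigma$ consists of elements whose true ranks all lie in an interval of width at most $\tfrac32|B_i|$, and these intervals march monotonically along the true order with overlap $\bigo(|B_i|)$. In other words, up to a one-block fuzz, the block partition of $\sigma$ agrees with the block partition of the true order.

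\textbf{Inserting one point.} Fix $x\in S'$ with true rank $t$ (i.e.\ $x$ would be the $t$-th element of $S\cup\{x\}$ in true order). Query $Q_C(x,s)$ for $s\in S$ and, for each candidate position $p$ along $\sigma$, let $D(p)$ be the number of $s$ before $p$ in $\sigma$ that the measured comparison puts after $x$, plus the number after $p$ that it puts before $x$; place $x$ at $\arg\min_p D(p)$, breaking ties arbitrarily. The true comparison $\sign(h^\star(x)-h^\star(s))$ is unambiguous and is a threshold function of $s$'s true rank relative to $t$, and $Q_C$ flips it with probability $\leq \tfrac12-\np$ independently across pairs $(x,s)$. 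Hence for any position $p$ whose associated true-rank window sits more than, say, $\tfrac14|B_i|$ to the right (resp.\ left) of $t$, there is a block-sized set of $\Omega(|B_i|)$ elements $s$ strictly in between, all of whose true comparisons with $x$ share one sign; placing $x$ at $p$ is as good as the correct placement only if more than half of these $\Omega(|B_i|)$ comparisons were flipped, which by a Hoeffding bound has probability $\exp(-\Omega(\np^2|B_i|))$. Taking the block constant large makes this at most $\delta/(2nm)$, and a union bound over the $\leq n$ candidate positions and the $m$ points of $S'$ shows that, with probability $1-\delta/2$, every $x\in S'$ lands within one block of its true position. Exactly as in Observation~\ref{order2}, inflating the relevant Braverman--Mossel parameter by $\frac{\log(1/\np)}{\np^2}$ is what converts their exponential-in-$\np^{-1}$ requirement into the mild polynomial one, at the cost of a factor already absorbed into the stated $|B_i|$. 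Condition now on both events (total failure $\leq\delta$): if $x,y\in S'$ have true positions in blocks at least $4$ apart, say $x$ before $y$, then their inserted block indices differ by at least $4-1-1=2>0$, so $x$ precedes $y$ in the slotted order, as claimed.

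The main obstacle is the insertion analysis above: because Massart noise can be adversarial in the immediate neighborhood of $x$'s true rank, one cannot argue correctness pointwise and must instead run the \emph{sweep} argument that any displacement by more than a fraction of a block forces an entire block-sized window of sign-consistent comparisons to be majority-corrupted. Matching the window size to the block size, and propagating the Observation~\ref{order2} trick so that no quantity is exponential in $\np^{-1}$, is where the $\np^5$ and $\log^3(1/\np)$ factors are spent; this is the part that must be carefully extracted from the internals of \cite{braverman2009sorting} rather than applied as a black box. (We note that the per-point comparisons may be taken on disjoint pair sets so the insertions are mutually independent, which is convenient for the symmetry argument used elsewhere, but only the union bound is needed for this lemma.)
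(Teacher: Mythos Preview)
Your proof is correct but takes a different route from the paper. The paper's insertion procedure is simpler and more query-efficient: rather than comparing each $x\in S'$ against \emph{all} of $S$ and placing it at the position minimizing global disagreements, the paper compares $x$ against a single block at a time via majority vote. Concretely, for a block $B_j$ with $j\leq i-2$ (where $B_i$ is the true block of $x$), the pointwise movement bound on $\sigma$ forces every element of $B_j$ to be truly smaller than $x$, so a Chernoff bound gives that the majority of $|B_j|$ comparisons are correct with failure probability $e^{-\np^2|B_j|}\leq\delta/(nm)$; symmetrically for $j\geq i+2$. These block-level majority votes are monotone outside $\{i-1,i,i+1\}$, so binary search over blocks places $x$ within two blocks of its true block using only $\bigo(|B|\log b)$ queries per point.

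Your argmin-of-disagreements argument reaches the same conclusion (and your sweep argument is essentially the same Chernoff reasoning applied to the interval between $t'$ and a far candidate $p$), but it spends $\Theta(n)$ queries per inserted point. The lemma as stated has no query bound, so your proof suffices for the lemma itself; however, the paper's proof additionally establishes the $\bigo(m|B|\log b)$ slotting cost, which feeds directly into the weak-learner query complexity in Lemma~\ref{lemma:weak-learner-massart}. If you intend to use your proof downstream, be aware you would carry an extra factor into that computation.
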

\begin{proof}
This lemma is a slight modification of part of \cite[Theorem 30]{braverman2009sorting}. Assume some $x \in S'$ lies in the $i$-th block $B_i$ in the true order. By Observation \ref{order2}, with probability at least $1-\delta$, $x$ must be bigger than all elements before $B_{i-1}$ and smaller than all elements past $B_{i+1}$. To find which side of a block $B$ $x$ lies in, we measure whether $x$ is greater than, or less than a majority of elements in the block. A standard Chernoff bound gives that the probability the majority is incorrect is at most $e^{-\np^2|B|} \leq \frac{\delta}{mn}$, and union bounding over blocks and $S'$ gives that all elements will be slotted up to an error of two blocks on either side. Note further that this slotting procedure may be performed by binary search, and thus uses at most $\bigo (\log(b)m|B|)$ queries in total. Finally, since elements must be slotted within two blocks of their true position, any pair of elements separated by at least 4 full blocks must be in the correct order.
%for sorting n points, we need $\bigo (c_3+\log(1/\delta)^{1/3}) n \log n$ queries in the dynamic programming step and $\bigo (\np^{-2}\log\delta^{-1} + \np^{-5} \log \np^{-1} \log \delta^{-1}) n\log n$ queries for slotting n points. This gives the total query complexity for sorting n points to be $\bigo ((\np^{-2}\log\delta^{-1} + \np^{-5} \log \np^{-1} \log \delta^{-1} +\log(1/\delta)^{1/3}) n \log n)$ with probability $1-2\delta$. After sorting n points, we slot in $\bigo (k\log(\frac{k}{\delta}))$ points, which takes $\bigo (\np^{-2}\log\delta^{-1} + \np^{-5} \log \np^{-1} \log \delta^{-1}) k\log(\frac{k}{\delta})\log n$ queries with $1-\delta$ probability. So the the query complexity for an iteration is 
%\begin{align*}
%    q_{step}(\delta) &= \bigo ((\np^{-2}\log\delta^{-1} + \np^{-5} \log \np^{-1} \log \delta^{-1}) k\log(\frac{k}{\delta})\log n + (\np^{-2}\log\delta^{-1} + \np^{-5} \log \np^{-1} \log \delta^{-1} +(\log(1/\delta))^{1/3}) n \log n)\\
%    &= \bigo ((\np^{-5} \log \np^{-1} \log \delta^{-1}) k\log(\frac{k}{\delta})\log n + (\np^{-5} \log \np^{-1} \log \delta^{-1}) n \log n)\\
%    &=  \bigo (\np^{-5} \log \np^{-1} \log \delta^{-1} n \log n)
%\end{align*}
%with probability $1-3\delta$.
\end{proof}
Since we can safely compare points separated by $4$ blocks in the MLE order, and points slot within $2$ blocks of their true position, points separated in the underlying order by $8$ blocks can be correctly compared with high probability. It is left to show that there is a large enough chain of points in $S'$ separated by 8 blocks in $S$.
%Lemma~\ref{massart:slot} shows that additionally slotted random points will have correct comparisons with high probability if they are far enough apart, but this is useless if the additionally drawn points all slot close together. To build our cleaned subset for Step 2, then, we will prove that for large enough $n$ and $m$ there exists a chain of separated points.
\begin{lemma}\label{slot-bound}
Let $S$ of size $n$ and $S'$ of size $32k+16$ be ordered sets with noisy label and comparison oracles $(Q_L,Q_C) \in M(\np)$. Let the size of $S$ satisfy:
\[
n \geq \Omega\left(\frac{k\log^3(1/\np)\log(\frac{nk}{\delta})}{\np^5}\right).
\]
Then with constant probability we can find a subset of $4k$ points from $S'$ which can be labeled and compared correctly with probability $1-\delta$.
\end{lemma}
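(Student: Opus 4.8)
The plan is to combine the slotting guarantee of Lemma~\ref{massart:slot} with a short probabilistic argument showing that, even though $|S'|$ is only linear in $k$, a uniformly random interleaving of $S'$ into the order on $S$ already contains a chain of $4k$ points that are pairwise well separated in that order. First I would instantiate Lemma~\ref{massart:slot} with $m=32k+16$, so that the relevant block size is $|B|=\Theta\!\left(\frac{\log^3(1/\np)\log(nk/\delta)}{\np^5}\right)$ (using $\log(nm/\delta)=\bigo(\log(nk/\delta))$ for $k,1/\delta\geq 2$). By the hypothesis on $n$, with a large enough hidden constant the MLE order $\sigma$ then splits into $b=\lfloor n/|B|\rfloor\geq 128k$ blocks. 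Partition the \emph{true} order on $S$ into $16k$ consecutive ``bins,'' each consisting of at least $8|B|$ elements, i.e.\ eight full blocks' worth. By the remark following Lemma~\ref{massart:slot}, two points whose true positions lie in bins that are not adjacent get slotted into $\sigma$ at least four blocks apart, hence are compared correctly with probability $1-\delta$.

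\textbf{Finding the chain.} Since $S$ and $S'$ are independent i.i.d.\ samples from a continuous distribution and the true order is by $h^\star$ value, conditioned on $S$ the positions of the $32k+16$ points of $S'$ relative to the true order on $S$ form a uniformly random interleaving; in particular a given $S'$ point lands in a given bin with probability $\approx 1/(16k)$, and these probabilities concentrate around $1/(16k)$ because each bin pools $8|B|=\omega(\log n)$ spacings. It follows that the number of non-empty bins has expectation at least $16k(1-e^{-1})$, so by a \emph{reverse} Markov inequality at least $8k$ bins are non-empty with constant probability. At least $4k$ of these non-empty bins then share a parity (all odd- or all even-indexed), so consecutive chosen bins differ by at least two, and a single point picked from each is separated from the next by a full bin --- at least eight blocks --- in the true order. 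If needed we also discard the bin containing the anchor $\zo$ and its two neighbours; this costs $\bigo(1)$ bins and does not affect the count once $k$ exceeds a small constant, with tiny $k$ handled directly.

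\textbf{Assembling the lemma.} Fix the resulting $4k$ points of $S'$. By Lemma~\ref{massart:slot}, with failure probability at most $\delta$ the slotting of $S'$ into $\sigma$ moves every point at most two blocks from its true block; since our points are pairwise eight true-blocks apart and eight true-blocks apart from $\zo$, all $\binom{4k}{2}$ comparisons among them, and all comparisons with $\zo$, are resolved correctly with probability $1-\delta$, and their labels follow from the correct comparisons with the fixed anchor $\zo$ exactly as in the analysis of \cite{braverman2009sorting}. Hence with constant probability (over the random interleaving) such a $4k$-subset of $S'$ exists, and conditioned on its existence it can be labeled and compared correctly with probability $1-\delta$, which is the statement of the lemma.

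\textbf{Main obstacle.} The delicate step is chain existence, precisely because $|S'|=\bigo(k)$: the naive strategy of carving the order into $4k$ ``zones'' and demanding one $S'$ point per zone is a coupon-collector event needing $\Omega(k\log k)$ points. The fix is to over-provision (about $16k$ bins), observe that a constant fraction stay non-empty by a reverse Markov bound rather than a union bound, and then recover the separation ``for free'' from a parity/alternation argument instead of from occupying every zone. The only other thing to be careful about is passing between the true order, where the interleaving is uniform, and the slotted MLE order, where comparisons are actually read off --- but that translation is exactly what Lemma~\ref{massart:slot} packages.
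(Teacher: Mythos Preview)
Your proposal is correct and follows essentially the same route as the paper: over-provision the order into $\Theta(k)$ bins, show a constant fraction are hit by $S'$, and use a parity trick to extract $4k$ pairwise non-adjacent occupied bins whose $S'$-representatives are then correctly compared and labeled via Lemma~\ref{massart:slot}. The paper divides $\pi_S$ into exactly $b=32k+16$ blocks, restricts to the odd ones, and bounds the number $Y$ of empty odd blocks by computing both $\mathbb{E}[Y]\leq b/(2e)$ and $\mathrm{Var}(Y)\leq b^2/64$, then applies Chebyshev to get $\Pr[b/2-Y>4k+2]>5/9$; you instead use $16k$ bins and a reverse Markov bound on the number of non-empty bins, which is a slightly cleaner argument since it avoids the variance computation (and your expectation is actually $16k(1-e^{-2})$, even stronger than the $1-e^{-1}$ you quote, since $|S'|\approx 2\cdot 16k$). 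Both approaches handle the $\zo$ anchor by discarding $\bigo(1)$ bins, and both rely on the same uniform-interleaving observation to make the occupancy probabilities exactly $\approx 1/(16k)$.
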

\begin{proof}
Consider the true order $\pi$ on the set $S \cup S'$. Let $\zo$ be the special point whose comparison to another point $x$ is given by $x$'s label $Q_L(x)$. Lemma~\ref{massart:slot} provides an algorithm for determining the labels and comparisons of points in $S'$ separated by more than 
\[
c = \Omega\left(\frac{\log^3(1/\np)\log(\frac{nk}{\delta})}{\np^5}\right)
\]
elements in $S$ and not within $c$ of $\zo$. Consider dividing the order $\pi$ restricted to S (denoted by $\pi_S$) up into $b=32k+16$ equal blocks $B_i$ of size at least $c$. Since any two points in $S'$ which are separated by more than a block in $\pi_S$ will be correctly ordered by Lemma~\ref{massart:slot} with probability $1-\delta$ and only $2$ non-contiguous blocks can be adjacent to $\zo$, it is sufficient to find a chain of non-contiguous blocks of size $4k+2$ that all contain a point in $S'$. To simplify this, consider the set of every other block ($\mathcal{B}_{odd} = \{B_1,B_3,...\}$), and let $Y$ be the random variable denoting the number of blocks in $\mathcal{B}_{odd}$ without a point in $S'$. To upper bound the value of $Y$, we bound its mean and variance and apply Chebyshev's inequality. Note that since $S$ and $S'$ are drawn i.i.d., the ordering on $S \cup S'$ is uniform at random. We can write Y as the sum of indicator variables $Y_1+Y_3+...$, where $Y_i$ denotes the event that $B_i$ does not have a point in $S'$. Since the ordering is uniform, the probability that a point in $S'$ lies in any given block is $\frac{1}{b}$. Using this, we can bound the expectation of $Y$ by:
\begin{align*}
    \mathbb{E}[Y]&=\sum\mathbb{E}[Y_i]\\
    &= \frac{b}{2}\left(1-\frac{1}{b}\right)^b\\
    &\leq \frac{b}{2e},
\end{align*}
and similarly the variance of $Y$ by:
\begin{align*}
    Var(Y) &= \sum\limits_{\underset{i \neq j}{i,j}} \mathbb{E}[Y_iY_j] + \sum \mathbb{E}[Y_i^2]- \sum\mathbb{E}[Y_i]^2\\
    &= \frac{b}{2}\left( \frac{b}{2} - 1 \right )\left ( 1-\frac{2}{b}\right )^b + \frac{b}{2}\left(1-\frac{1}{b}\right)^b - \left ( \frac{b}{2}\left(1-\frac{1}{b}\right)^b \right )^2\\
    &\leq \frac{b^2}{4e^2} - \frac{b^2}{32} \leq \frac{b^2}{64}.
\end{align*}
Here the second to last inequality follows from the assumption that $b\geq 48$ (or equivalently that $k\geq 1$). Noting that the number of blocks with a point from $S'$ is $\frac{b}{2} - Y$,  Chebyshev's inequality then gives that a constant fraction of the blocks must have a point from $S'$ with constant probability:
\begin{align*}
&\Pr\left[Y > \frac{3b}{8}\right] < 4/9\\
\implies &\Pr \left[\frac{b}{2}-Y > 4k+2\right] > 5/9
\end{align*}
\end{proof}
Lemma~\ref{slot-bound} allows us to build a clean set of points with correct comparisons and labels. By feeding this set of points into an inference LP, we create a weak learner that infers a constant fraction of the space with constant probability. 
%To build the reliable weak learner outlined in Step 2, we will use Lemma~\ref{massart:slot} to find a set of points that have correct comparisons and labels with high probability. Because we choose these points from the additionally drawn set $S'$, we will further be able to apply the symmetry argument of \cite{kane2017active} and Observation~\ref{def:ID} to prove that the learner has good coverage. 
\begin{lemma}[Weak Learner]
\label{lemma:weak-learner-massart}
Let $(X,\Hcal)$ have inference dimension $k$, and let the label and comparison oracles $Q_L,Q_C \in M(\np)$. Then there exists a constant $c_1>0$ such that for any $1/2>\delta_r>0$, there exists a weak learner that $3\delta_r$-reliably learns $(X,\Hcal)$, has coverage $c_1$ with probability $\geq c_1$, makes at most $q_{wl}(\delta_r)$ queries, and runs in time $\poly(k, \frac{1}{\delta_r})^{\tilde{O}(\np^{-5})}$, where
\begin{align*}
    q_{wl}(\delta_r) &= \tilde \bigo \left( \frac{k}{\np^{10}} \log^2 \frac{1}{\delta_r} \right)
\end{align*}
\end{lemma}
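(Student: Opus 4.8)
�The plan is to assemble the weak learner by composing the three ingredients just established: Braverman--Mossel noisy sorting (Observation~\ref{order2}), the slotting/chain-finding machinery (Lemmas~\ref{massart:slot} and~\ref{slot-bound}), and the inference-dimension symmetry argument of Kane et al.\ (Observation~\ref{ID:total}). Concretely, the weak learner runs Steps 1--2 of Algorithm~\ref{alg:massart} once (without the rejection-sampling loop): draw $S \sim D_X^n$ with $n = \poly(k, \log(1/\delta_r), \np^{-1})$ chosen large enough to satisfy the hypothesis of Lemma~\ref{slot-bound}, sort $S \cup \zo$ into an MLE order via~\cite{braverman2009sorting}, draw $S' \sim D_X^m$ with $m = 32k + 16$, slot $S'$ into the order, extract a chain of $4k$ points of $S'$ that are pairwise separated by $8$ blocks (and well-separated from $\zo$), and feed the labels and comparisons of that chain into the inference LP. Output $\bot$ on any query point the LP fails to infer.

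\textbf{Reliability.} First I would argue $3\delta_r$-reliability. By Lemma~\ref{slot-bound}, with constant probability the chain exists, and conditioned on that, with probability $1 - \delta$ all $4k$ chosen points have correct labels and correct pairwise comparisons, where the sorting/slotting failure probability $\delta$ is what forces $n = \tilde\bigo(k \np^{-5}\log(1/\delta))$ (matching Observation~\ref{order2} and Lemma~\ref{massart:slot}). When the clean set is indeed correct, the inference LP is exactly the noiseless LP of~\cite{kane2017active} on a correctly-labeled set, so by the soundness of logical inference every point it infers is labeled $h^\star(x)$; hence the only way the learner errs is if some well-separated comparison or label was flipped, a $\leq \delta$ event. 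Setting $\delta = \delta_r$ (and absorbing the $\zo$-adjacency bookkeeping) gives reliability $1 - 3\delta_r$, as claimed; the constant $3$ is slack for the two or three union-bound terms (sorting, slotting majorities, chain-correctness).

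\textbf{Coverage.} Next I would bound coverage from below by $c_1$ with probability $\geq c_1$. The expected coverage is $\Pr_{x \sim D_X}[\text{LP infers } x]$, and since $x$ and $S'$ are i.i.d.\ from $D_X$ and are slotted \emph{independently} into the fixed order on $S$, I can invoke the symmetry trick of~\cite{kane2017active}: this probability equals the expected fraction of the $|S'|+1$ points that the other $|S'|$ infer. Here is the one subtlety to get right --- the LP is built not from all of $S'$ but from the extracted chain of $4k$ points, so I apply Observation~\ref{ID:total} to the chain: a set of $4k$ points of inference dimension $k$ has $\geq 3k+1$ points each inferable from the rest, giving expected coverage $\geq 3/4$ conditioned on the good sorting/chain events. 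Combining with the constant-probability events from Lemma~\ref{slot-bound}, Markov's inequality on the (bounded) coverage random variable yields some absolute constant $c_1$ with $\Pr[\text{coverage} \geq c_1] \geq c_1$.

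\textbf{Query and time bounds.} Finally, the query count is dominated by sorting $S \cup \zo$, which costs $\bigo_{\np}(n \log n) = \tilde\bigo(k \np^{-10}\log^2(1/\delta_r))$ after substituting $n = \tilde\bigo(k\np^{-5}\log(1/\delta_r))$ and the $\bigo(\np^{-1}\log n)$-per-comparison overhead implicit in~\cite{braverman2009sorting}; slotting $S'$ adds only $\bigo(m |B| \log b) = \poly(k)\cdot\tilde\bigo(\np^{-5}\log(1/\delta_r))$ queries, which is lower order. The running time is the sorting time, $\poly(n, 1/\delta_r)^{\tilde\bigo(\np^{-5})} = \poly(k, 1/\delta_r)^{\tilde\bigo(\np^{-5})}$, since the MLE-ordering search in~\cite{braverman2009sorting} is the only super-polynomial step. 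The main obstacle is the coverage argument: one must be careful that inserting the test point $x$ does not disturb the MLE order on $S$ (it does not, because $S$ is sorted first and $S'\cup\{x\}$ are slotted in afterward, independently), and that the chain-extraction rule applied to $S'\cup\{x\}$ treats $x$ on equal footing with the points of $S'$ so that the symmetry is genuine --- this is exactly why the algorithm slots the extra points independently rather than re-sorting $S \cup S'$ from scratch.
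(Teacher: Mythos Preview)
Your proposal follows essentially the same approach as the paper --- sort $S$, slot $S'$ of size $32k+16$, extract a $4k$-chain, apply Observation~\ref{ID:total} plus the symmetry argument, then Markov. Two points deserve correction.

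First, the expected-coverage calculation is off. You correctly state that by symmetry the expected coverage equals the expected fraction of the $|S'|+1$ points inferable from the other $|S'|$, and that Observation~\ref{ID:total} gives $\geq 3k+1$ inferable points in the $4k$-chain. But the denominator is $|S'|+1 = 32k+17$, not $4k$, so (after multiplying by the $\tfrac{5}{9}(1-\delta_r)$ probability of the good events) the expected coverage is only $\gtrsim \tfrac{3k}{32k+16} \cdot \tfrac{5}{9} > \tfrac{1}{60}$, not $3/4$. This does not break the argument --- any positive constant suffices for the Markov step --- but the claimed $3/4$ is incorrect.

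Second, your accounting for the $3\delta_r$ differs from the paper's. The paper spends one $\delta_r$ on the sorting/slotting correctness (as you do), but the other two come from the fact that Braverman--Mossel's query and time bounds are themselves only high-probability guarantees: the algorithm aborts and outputs a trivial classifier if it exceeds the query cap $Q_{\mathrm{sort}}$ or time cap $T_{\mathrm{sort}}$, each costing an additional $\delta_r$. Without this abort mechanism the bound ``makes at most $q_{wl}(\delta_r)$ queries'' would not be deterministic. Your attribution of the factor $3$ to union-bound terms in the correctness analysis misses this. Relatedly, the $\np^{-10}$ in the query bound arises because the sorting cost is $\bigo(c_5 n\log n)$ with $c_5 = \tilde\bigo(\np^{-5})$ (not $\np^{-1}$ as you suggest), which combined with $n = \tilde\bigo(k\np^{-5}\log(1/\delta_r))$ gives the stated complexity.
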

\begin{proof}
Let $S \sim D_X^n$ be a sample from our distribution, where
\begin{align*}
%n &= \Theta \left (\frac{k\log^3\left(1/\np\right)}{\np^5}\log\left(\frac{k\log\left(1/\np\right)}{ \delta_r \np}\right) \right ).\\
n &= \Theta\left(\frac{k}{\np^5}\log^3\frac{1}{\np}\log\frac{k}{\np \delta_r}\right) 
\end{align*} 
%Further, let $\zo$ be a special point whose comparison from any other point $x$ is its label $Q_L(x)$. Let $\sigma$ be an MLE order for $S \cup \{\zo\}$. Since $Q_C$ satisfies the Massart Noise condition with parameter $\np$, it follows from Observation \ref{order2} that no point moves further than
%\[
%O\left(\frac{\log^3(1/\np)\log(n/\delta_r)}{\np^5}\right)
%\]
%from its true position with probability at least $1-\delta_r/2$. Following Lemma~\ref{massart:slot}, we will slot a second set $S'$ of points into our MLE order where $|S'| = \Theta (k\log(\frac{1}{\delta_r}))$. Divide $S$ into $15k+7$ blocks of size $|B|=\Theta\left(\frac{\log^3(1/\np)\log(n/\delta_r)}{\np^5}\right)$. Since points are slotted uniformly into each block, a Chernoff and union bound give that with probability at least $1-\delta_r/2$ every block has at least one point from $S'$. Further, pairs of points separated by two buckets are in the correct order with probability $1-\delta_r$. Note that since labels are given by their order with respect to $\zo$, this means all points outside of $\pm 2$ blocks from $\zo$ are labeled correctly as well.
Following Lemmas \ref{massart:slot} and \ref{slot-bound}, we we will slot a second, i.i.d. drawn set $S'$ of points into our MLE order where $|S'| = 32k+16$. Then with constant probability we can find a subset of $4k$ points in $S'$ which may be correctly ordered and labeled with probability at least $1 - \delta_r$.
\\
\\
We are now in position to apply the symmetry argument from \cite{kane2017active} to show that this subset gives constant coverage with constant probability. The expected coverage is given by the probability that an additional, independently drawn point $x\sim D_X$ is inferred:
\[
\E[\text{Coverage}] = Pr_{(x_1,\ldots,x_{|S'|+1}) \sim D_X^{|S'|+1}}[\{x_1,\ldots,x_{|S'|}\} \yields{} x_{|S'|+1}].
\]
Since $S'$ and $x$ are drawn randomly, the right hand side is equivalent to the probability that any point in the sample can be inferred from the rest:
\[
\mathbb{E}[\text{Coverage}]= \mathbb{E}_{T \sim D_X^{|S'|+1}}\left[\frac{1}{|T|}\#\{x_i \in T : T \setminus \{x_i\} \yields{} x_i\}\right].
\]
Recall that with probability at least $\frac{5}{9}$ we can find and, with probability $1-\delta_r$, correctly order and label a subset of $4k$ points from $S'$. By Observation~\ref{ID:total}, at least $3k$ of these can be inferred from the rest, bounding the right hand side by:
\[
\mathbb{E}_{T \sim D_X^{|S'|+1}}\left[\frac{1}{|T|}\#\{x_i \in T : T \setminus \{x_i\} \yields{} x_i\}\right] \geq  (1-\delta_r)\frac{5}{9}\frac{3k}{32k+16} > \frac{1}{60},
\] 
where we have assumed $\delta_r<1/2$. Then for any constant $c_1>0$ we have:
\[
	\frac{1}{60} < \mathbb{E}[\text{Coverage}] \leq \Pr[\text{Coverage} \geq c_1] + \Pr[\text{Coverage} < c_1]  c_1,
\] which for small enough $c_1$ gives:
\[
	\Pr[\text{Coverage} \geq c_1] > \frac{\frac{1}{60} - c_1}{1-c_1} > c_1
\] 
%Since we would like probability guarantee on the coverage of our learner to be at least $1/2$. To do this, we repeat the learner. Since each repetition is independent, after $100$ repetitions the probability that our coverage is $>\frac{1}{100}$ is:
%\[
%\Pr\left[\text{Coverage} > \frac{1}{100}\right] \geq \left %(1-\frac{1}{100}\right)^{100}>\frac{1}{\euler}
%\]
%Setting $t$ to $\bigo(\log(k/\delta_r))$ is sufficient to make the right hand side $1/2$. However, since our reliability degrades with each repetition, we must set $\delta_r$ to
%\[
%\delta_r \to \Theta\left(\frac{\delta_r}{\log\left(\frac{k}{\delta_r}\righ%t)}\right)
%\]
Accounting for the fact that we have assumed our comparisons and labels are correct, our weak learner has coverage $>c_1$ with probability at least $(1-\delta_r)2c_1> c_1$ for $\delta_r< \frac{1}{2}$.
\\
\paragraph{Query Complexity:} Now, we compute the number of queries made by the weak learner. Let $c_3 = m_2/\log n$ where $m_2$ is the point-wise movement as defined in Observation \ref{order2}. Using the same notation as \cite{braverman2009sorting}, we let (setting $\alpha = \bigo\left (\frac{\log \frac{1}{\delta_r}}{\log n} \right ), A = \np^{-2}$ in constants of \cite{braverman2009sorting}) \[
    c_3 = \bigo \left(\np^{-5} \log \frac{1}{\np} \left ( 1 + \frac{\log \frac{1}{\delta_r}}{\log(n)}\right)\right); \quad c_5 = \bigo \left(c_3+\left(\log\frac{1}{\delta_r}\right)^{\frac{1}{3}}\right);\quad c_6 = \bigo \left(\frac{\log \frac{1}{\delta_r}}{\log n}\right); \quad c_8 = 4(Ac_6 + 6c_3)
\]
Using \cite[Lemmas 31 and 32]{braverman2009sorting}, the number of queries made in the sorting $n$ points (which includes dynamic programming step on $n$ points and slotting $n$ points) and slotting additional $|S'| = 32k+16$ points are 
\begin{align*}
    q_{wl}(\delta_r) &= \underbrace{\bigo(c_5 n \log n)}_{\text{dynamic programming step}} + \underbrace{\bigo(c_8 \log n + 3Ac_6 \log n)}_{\text{slotting a single point}} \cdot (n + 32k + 16)\\ 
    &= \tilde \bigo \left( \frac{k}{\np^{10}} \log \frac{1}{\delta_r}\log \frac{k}{\np\delta_r} \right).
\end{align*} with probability $1-\delta_r$.
%Using \cite[Lemma 31]{braverman2009sorting}, the number of queries made in the dynamic programming stage of the sorting algorithm is 
%\[
%    q_{dp}(\delta_r) = \bigo(c_5 n \log n) = \bigo(\frac{1}{\np^5} \log \frac{1}{\np} \log \frac{1}{\delta_r}  n\log n).
%\] Also, we sort $n$ points (for which we slot $n$ points) and then slot additional $|S'| = 32k+16$ points. Using \cite[Lemma 32]{braverman2009sorting}, the number of queries made for slotting a point in $n$ points is \[
%q_{slot}(\delta_r) = c_8 \log n + 3Ac_6 \log n = \bigo (\frac{1}{\np^5} \log \frac{1}{\np} \log \frac{1}{\delta_r} \log n).
%\]
%Therefore, for \[
%n = \Theta\left(\frac{k}{\np^5}\log^3\frac{1}{\np}\log\frac{k}{\np \delta_r}\right),
%\]the total number of queries made by the weak learner is 
%\begin{align*}
%    q_{wl}(\delta_r) &= q_{dp}(\delta_r) + q_{slot}(\delta_r) (n + 160k+40) = \bigo \left(\frac{1}{\np^5} \log \frac{1}{\np} \log \frac{1}{\delta_r} n \log n\right) =\tilde \bigo \left( \frac{k}{\np^{10}} \log \frac{1}{\delta_r}\log \frac{k}{\np\delta_r} \right)
%\end{align*}
%with probability $1-\delta_r$. 
Since we do not want our number of queries to be probabilistic, if our learner does not complete after $q_{wl}(\delta_r)$ queries, we stop and output all 0's. This increases our error probability by $\delta_r$.
%Noting the fact that we repeat the algorithm $\log\left(\frac{k}{\delta_r}\right)$ times then gives the desired result.
\paragraph{Time Complexity:} Using an algorithm from \cite[Theorem 30]{braverman2009sorting}, we can sort $n$ points with noisy comparisons in time $n^{c_4}$ where $c_4 = \bigo (\np^{-5} \log \frac{1}{\np}(1+ (\log \frac{1}{\delta_r}) (\frac{1}{\log n})))$ with probability $1-\delta_r$. Since slotting a point in worst case takes $\bigo (n)$ time, we can slot $\bigo(k)$ points in time $\bigo(k n)$. This gives us the total time taken by the weak learner as 
\[T_{wl}(\delta_r) = \bigo(n^{c_4}) + \bigo (kn); \quad \text{where } c_4 = \bigo \left(\np^{-5} \log \frac{1}{\np}\left ( 1 + \frac{\log \frac{1}{\delta_r}}{\log(n)}\right)\right )\]
Therefore, the time complexity of the algorithm is $\poly(k,\frac{1}{\delta_r})^{\tilde{O}\left (\frac{1}{\np^5} \right )}$. Once again taking the strategy of outputting all 0's if the algorithm does not  complete in time $T_{wl}(\delta_r)$, we lose another error factor of $\delta_r$, making the algorithm all together $3\delta_r$-reliable.
\end{proof}
With our weak learner in hand, all that is left for the proof of Theorem~\ref{massart} is Step 3: stringing together copies  of the weak learner through rejection sampling.
\begin{proof}[Proof of Theorem~\ref{massart}]
Let $\delta_r^w$ and $\delta_u^w$ be reliability and usefullness parameters for our weak learner. Recall that Lemma~\ref{lemma:weak-learner-massart} gives a $3\delta^w_r$-reliable weak learner with coverage $c_1$ with probability $c_1$. Applying this weak learner $\bigo (\log(1/\delta^w_u))$ then amplifies this probability to at least $1-\delta^w_u$.
\\
\\
Restricting to the distribution of un-inferred points via rejection sampling, we repeat the above process until our coverage reaches $1-\varepsilon$. Assume each repetition is successful, then after $t$ steps our coverage is:
\[
\text{Coverage} \geq 1-c_1^t.
\]
Setting $t$ to $\bigo (\log(1/\varepsilon))$ is then sufficient to set the right hand side to $1-\varepsilon$. However, each repetition in this process degrades the overall probability of usefulness. In order to get an overall guarantee of $\delta_u$, we must adjust our initial $\delta^w_u$ to:
%repeating the above  times will have coverage $1-\varepsilon$ with probability $1-\log(k/\delta_r)\log(1/\varepsilon)\delta_u$, and correctness $1-\log(k/\delta_r)\log(1/\delta_u)\log(1/\varepsilon)\delta_r$. Thus setting the $\delta_r$ of our weak learner to
\begin{align*}
\delta_u^w &= \bigo\left (\frac{\delta_u}{\log \left( \frac{1}{\varepsilon} \right)} \right).
\end{align*}
Similarly, since we apply the weak learner $\bigo(\log(1/\varepsilon) \log(1/\delta^w_u))$ times, we adjust our $\delta_r^w$ to 
\begin{align*}
    \delta_r^w &= \bigo \left (\frac{\delta_r}{\log \left( \frac{1}{\varepsilon} \right)\log\left(\frac{1}{\delta_u^w}\right)} \right).
\end{align*}
\paragraph{Query Complexity:}
%Recall that the query complexity of our weak learner is $q_{wl}(\delta_r)$ with probability $1-\delta_r$. Since we want our learner to have a deterministically bounded number of queries, our strategy will be to throw out any iteration which takes more than $q_{wl}(\delta_r)$ queries. This will trade the uncertainty in number of queries over to the coverage probability. In particular, setting \[
%\delta_r' = \frac{\delta_u}{2\log \left( \frac{1}{\varepsilon} \right)\log\left(\frac{\log \left( \frac{1}{\varepsilon} \right)}{\delta_u}\right)},
%\] 
%we only lose an overall factor of $\delta_u/2$ in the coverage probability. 
In total, we run our weak learner at most $\bigo\left (\log \left( \frac{1}{\varepsilon} \right)\log\left(\frac{1}{\delta^w_u}\right)\right )$ times, giving a query complexity of: 
\begin{align*}
    q(\varepsilon, \delta_r, \delta_u) &= \bigo\left  (\log \left( \frac{1}{\varepsilon} \right)\log\left(\frac{1}{\delta^w_u}\right)\right ) \cdot q_{wl}(\delta_r^w)\\
    &=\tilde \bigo \left(\log\frac{1}{\varepsilon}\log\frac{1}{\delta_u}\right)\cdot \tilde \bigo \left( \frac{k}{\np^{10}} \log \frac{1}{\delta^w_r}\log \frac{k}{\np\delta^w_r} \right)\\
    &= \tilde \bigo \left(\frac{k}{\np^{10}} \log\frac{1}{\varepsilon} \log \frac{1}{\delta_u} \log^2 \frac{1}{\delta_r}\right) .
\end{align*}
\paragraph{Sample Complexity:} At each step of our algorithm, we restrict to the distribution of un-inferred points through rejection sampling. By itself, this poses a problem: what if we have inferred much of the space early and our algorithm continually rejects points? To combat this, we note that we can estimate the measure of remaining un-inferred points by how many samples we have to draw before finding one. Formally, if at any step we draw $2\log(1/\delta_u)/\varepsilon$ inferred points in a row, then by a Chernoff bound the coverage of our learner is $1-\varepsilon$ with probability at least $1-\delta_u$. Let $n$ be the sample size as defined in Lemma~\ref{lemma:weak-learner-massart}. Since our algorithm only queries a total of $N=\bigo \left (n\log \left( \frac{1}{\varepsilon} \right)\log\left(\frac{\log \left( \frac{1}{\varepsilon} \right)}{\delta_u}\right)\right)$ points, the same result holds by a union bound if our algorithm stops after rejecting $2\log(N/\delta_u)/\varepsilon$ points in a row. This means that we can bound the total number of samples drawn by
%Then at any given step, the probability that we stop the algorithm and that it has \textit{not} inferred a $1-\varepsilon$ fraction of points is then $\leq \delta_r/N$ by a Chernoff bound. Union bounding over the fact that we will stop automatically after $N$ accepted points, our algorithm only stops incorrectly with probability $1-2\delta_u$. Further, the number of samples drawn is 
\[
n(\varepsilon,\delta_r,\delta_u)=O\left(\frac{N\log(N/\delta_u)}{\varepsilon}\right).
\]

\paragraph{Time Complexity:} The time complexity of our algorithm has two main components: the complexity of finding an MLE order in the weak learner, and the complexity of rejection sampling. We already computed the time complexity of the weak learner in Lemma~\ref{lemma:weak-learner-massart} as
$T_{wl}(\delta_r)=\poly(k,\log(\frac{1}{\delta_r}))^{\tilde{O}\left (\frac{1}{\np^5} \right )}$. Since, we run our weak learner at most $\bigo\left (\log \left( \frac{1}{\varepsilon} \right)\log\left(\frac{1}{\delta^w_u}\right)\right )$ times, the time complexity for finding MLE is $\poly(k,\log\frac{1}{\varepsilon},\log\frac{1}{\delta_u},\log(\frac{1}{\delta_r}))^{\tilde{O}\left (\frac{1}{\np^5} \right )}$.
%$T_{wl}(\delta_r)=\poly(k,\frac{1}{\delta_u},\frac{1}{\varepsilon},\log(\frac{1}{\delta_r}))^{\tilde{O}\left (\frac{1}{\np^5} \right )}$.
\\
\\
It remains to compute the time complexity of rejection sampling. Recall that the we sample at most $n(\varepsilon,\delta_r,\delta_u)$ points total in our process. For each point, we run an LP in $d+1$ variables with constraints detailed by our previous queries that round. Since the queries our weak learner uses in each round only involve $\tilde{\bigo}(n)$ points, the time complexity of sampling is at most:
\begin{align*}
T_{samp}(\varepsilon,\delta_r,\delta_u) &= n(\varepsilon,\delta_r,\delta_u) \cdot \poly\left(d,k,\frac{1}{\np},\log \frac{1}{\varepsilon},\log \frac{1}{\delta_r},\log \frac{1}{\delta_u}\right)\\
&=\poly\left(d,k,\frac{1}{\np},\frac{1}{\varepsilon},\log \frac{1}{\delta_r},\log \frac{1}{\delta_u}\right).
\end{align*}
Since the total time complexity is order of the sum of sampling and sorting, we get an algorithm that runs in time $\poly(d,k, \frac{1}{\delta_r},\frac{1}{\varepsilon},\log(\frac{1}{\delta_u}))^{\tilde{O}\left (\frac{1}{\np^5} \right )}$.
\end{proof}
%Note that there is a lower bound for distributions with very little support. For instance if the distribution is entirely concentrated on a single point, no algorithm can hope to have error less than any $0 < \varepsilon < 1$ with probability more than $1/2+\np$.
\subsection{Average Inference Dimension}
While inference dimension allows us to work over arbitrary continuous distributions, as a complexity parameter it is rather restricting, barring for instance the learning of linear separators in dimensions above two. To generalize to a broader range of classifiers, we will use the framework of average inference dimension introduced in \cite{AID}. In particular, we show that any hypothesis class and distribution with super-exponential average inference dimension may be efficiently learned under Massart noise. As a result, we  provide the first computationally and query efficient learner for non-homogeneous linear separators over s-concave distributions with Massart noise.
\begin{theorem}[Restatement of Theorem~\ref{intro:massart:aid}]
\label{massart:aid}
Consider any hypothesis class $(X,\Hcal)$ and corresponding class of distributions $\mathcal A_{(X,\Hcal),a,f(d)}$. Then, $(X,\Hcal)$ is ARPU-learnable under model $(M(\np), \mathcal A_{(X,\Hcal),a,f(d)})$ in time $\poly(f(d), \frac{1}{\delta_u}, \frac{1}{\varepsilon}, \log(\frac{1}{\delta_r}))^{\tilde{O}\left (\frac{1}{\np^5} \right )}$, uses only $\poly(f(d),\frac{1}{\np},\log(\frac{1}{\varepsilon}),\log(\frac{1}{\delta_r}),\log(\frac{1}{\delta_u})))$ unlabeled samples, and has a query complexity of
\begin{align*}
    q(\varepsilon,\delta_r,\delta_u) = \tilde \bigo \left( \frac{f(d)^{1/a}}{\np^{10}} \log^{2+1/a}\frac{1}{\varepsilon} \log^2 \frac{1}{\delta_r} \log \frac{1}{\delta_u} \right )
    %\tilde \bigo \left( \frac{1}{\np^{10}} \log^{1+1/a}\frac{1}{\varepsilon} \log^3 \frac{1}{\delta_r} \log^{3+1/a} \frac{1}{\delta_u}\right).\\
\end{align*} for small enough $\delta_r$.
\end{theorem}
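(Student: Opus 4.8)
The plan is to reuse the algorithm and analysis of Theorem~\ref{massart} essentially unchanged, replacing the fixed worst-case inference dimension $k$ by a parameter $k=k(f(d),a,\varepsilon,\delta_r)$ tuned to the class $\mathcal A_{(X,\Hcal),a,f(d)}$, and replacing the single use of Observation~\ref{ID:total} inside the weak learner by a high-probability analogue obtained via the average-to-worst-case reduction of Observation~\ref{avg-worst}. Recall that finite inference dimension entered the proof of Theorem~\ref{massart} in exactly one place: in Lemma~\ref{lemma:weak-learner-massart} one draws an i.i.d.\ set $S'$ of size $32k+16$, extracts a sub-chain of $4k$ points that the slotting procedure can label and compare correctly, and then invokes Observation~\ref{ID:total} to conclude that at least $3k$ of those points are inferred from the rest, which yields the constant lower bound on expected coverage. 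I would instead choose $k$ large enough that $\binom{32k+16}{k}\,g(k)\le c_1/2$ for the constant $c_1$ of Lemma~\ref{lemma:weak-learner-massart}, where $g(n)\le 2^{-\Omega(n^{1+a}/f(d))}$ is the average inference dimension bound; Observation~\ref{avg-worst} then certifies that $(S',\Hcal)$ has inference dimension at most $k$ except with probability $c_1/2$. Since inference dimension is monotone under passing to subsets, on that event the extracted $4k$-point chain also has inference dimension at most $k$, so Observation~\ref{ID:total} still applies, gives $\ge 3k$ inferred points, and the coverage bound of Lemma~\ref{lemma:weak-learner-massart} goes through (the failure of the certification only costs usefulness, not reliability, since a sound inference LP never errs regardless of inference dimension, and the noisy-sorting, slotting, query, time and reliability arguments there never used finite inference dimension at all).

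The subtle point is that Theorem~\ref{massart}'s boosting loop restricts $D_X$, by rejection sampling, to the region not yet inferred, and a sub-distribution of a member of $\mathcal A_{(X,\Hcal),a,f(d)}$ need not itself lie in $\mathcal A_{(X,\Hcal),a,f(d)}$: conditioning on an un-inferred region $R$ of measure $p=\Pr_{D_X}[R]$ inflates the average inference dimension from $g(n)$ to at most $g(n)/p^{\,n}$. But the loop only restricts while the current coverage is below $1-\varepsilon$, i.e.\ while $p\ge\varepsilon$, so this inflation is by a factor at most $(1/\varepsilon)^n=2^{\,n\log(1/\varepsilon)}$, which is absorbed back into the exponent of $g$ once $n^{1+a}/f(d)=\Omega(n\log(1/\varepsilon))$, i.e.\ once the weak-learner sample sizes (hence $k$) are $\Omega\big((f(d)\log(1/\varepsilon))^{1/a}\big)$. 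I would therefore take
\[
k\;=\;\tilde{\bigo}\!\left(\Big(f(d)\log\tfrac1\varepsilon\;+\;f(d)\log\tfrac1{\delta_r^w}\Big)^{1/a}\right)\;=\;\tilde{\bigo}\!\left(f(d)^{1/a}\log^{1/a}\tfrac1\varepsilon\right),
\]
chosen simultaneously (i) large enough for the certification bound $\binom{32k+16}{k}g(k)\le c_1/2$ of the previous paragraph, and (ii) large enough that every restricted distribution met during the loop still satisfies a super-exponential average inference dimension bound with $f(d)$ replaced by $2f(d)$, so that the certification of the previous paragraph remains valid at each of the $\bigo(\log(1/\varepsilon)\log(1/\delta_u^w))$ weak-learner invocations. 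With this $k$, the un-inferred region shrinks by a constant factor per successful round exactly as in Theorem~\ref{massart}, a union bound over invocations forces $\delta_r^w=\bigo\big(\delta_r/(\log(1/\varepsilon)\log(1/\delta_u^w))\big)$ and $\delta_u^w=\bigo(\delta_u/\log(1/\varepsilon))$ as there, and the overall usefulness and reliability guarantees follow.

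The rest is bookkeeping identical to the proof of Theorem~\ref{massart}: coverage reaches $1-\varepsilon$ after $\bigo(\log(1/\varepsilon))$ successful rounds, each amplified by $\bigo(\log(1/\delta_u^w))$ repetitions of the weak learner; substituting the above $k$ into $q_{wl}(\delta_r^w)=\tilde{\bigo}\big(k\,\np^{-10}\log^2(1/\delta_r^w)\big)$ and summing over the $\bigo(\log(1/\varepsilon)\log(1/\delta_u^w))$ calls produces the claimed query complexity, while the unlabeled-sample bound follows since the sorting scaffolding $S$ now has size only $\poly(k,\np^{-1})=\poly(f(d),\log(1/\varepsilon),\np^{-1})$, and the $\poly(f(d),\ldots)^{\tilde{O}(1/\np^{5})}$ running time is inherited verbatim from the noisy-sorting subroutine now run on $\poly(f(d),\log(1/\varepsilon))$-sized inputs. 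The main obstacle is the middle paragraph: pinning down a single value of $k$ that is small enough to give the stated $f(d)^{1/a}$ and polylogarithmic-in-$1/\varepsilon$ dependence, yet robust both to the confidence demand on the $\bigo(k)$-point i.i.d.\ sample and to the repeated conditioning on the shrinking un-inferred region, and then threading the monotonicity-of-inference-dimension step cleanly through the $4k$-point subset selection inside the weak learner.
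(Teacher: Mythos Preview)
Your proposal is correct but takes a genuinely different route from the paper. You run Algorithm~\ref{alg:massart} directly on $D_X$ and its successive restrictions, re-certifying the inference dimension of each fresh $(32k+16)$-sample via Observation~\ref{avg-worst}, and you absorb the effect of conditioning on the un-inferred region by the inflation bound $g(n)\mapsto g(n)/p^n$ (which is valid: conditioning an i.i.d.\ sample on landing in $R$ of mass $p$ divides any tail probability by at most $p^n$). The paper instead sidesteps the conditioning issue entirely: it draws a single large pool $S$ of size $|S|=\Theta(N/\varepsilon)$ (where $N$ is the eventual query budget), certifies \emph{once} via Observation~\ref{avg-worst} that $(S,\Hcal)$ has inference dimension $k=\tilde{\Theta}(f(d)^{1/a}\log^{1/a}|S|)$, and then runs the Theorem~\ref{massart} learner on the uniform distribution over $S$ (sampling without replacement). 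Because inference dimension is monotone under restriction, every un-inferred subset of $S$ encountered in the boosting loop deterministically inherits inference dimension $\le k$, so no per-round re-certification or inflation argument is needed. The cost is a separate transfer lemma (Lemma~\ref{sample}): an inference-based learner that covers a $(1-\varepsilon_1)$ fraction of a random finite sample while querying at most an $\varepsilon_2$ fraction has expected coverage at least $1-\delta-\varepsilon_1-\varepsilon_2$ over $D_X$ itself, proved by the same ``add one extra point'' symmetry used inside Lemma~\ref{lemma:weak-learner-massart}. Both routes arrive at the same $k=\tilde{\bigo}(f(d)^{1/a}\log^{1/a}(1/\varepsilon))$ and hence the same query bound; your approach is more direct (no Lemma~\ref{sample}), while the paper's pushes all the average-inference-dimension work into a single upfront draw at the price of the pool having size polynomial in $1/\varepsilon$.

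One small gap in your bookkeeping: your final sentence asserts the unlabeled-sample bound is $\poly(f(d),\log(1/\varepsilon),\ldots)$ because each scaffold has that size, but you have not accounted for the rejection-sampling overhead---drawing $n$ points from a region of mass $\ge\varepsilon$ costs $\Theta(n/\varepsilon)$ unlabeled draws from $D_X$, so your route, like the paper's pool of size $\Theta(N/\varepsilon)$, actually spends $\poly(1/\varepsilon)$ unlabeled samples rather than $\polylog(1/\varepsilon)$.
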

Average inference dimension gives a high probability bound on the inference dimension of a finite sample. However, shifting our strategy to directly work with a finite samples introduces a new problem: since our algorithm corrects noise via extra helper points, we may not be able to learn the entire sample. Our first step will be to show that learning most of a finite sample in few queries with high probability is sufficient to learn the entire distribution.
\begin{lemma}\label{sample}
Let $(X,\mathcal H)$ be a hypothesis class, and $D_X$ a distribution over $X$. Let $A$ be an active, inference based learner taking in finite samples $S\sim D^n_X$ with the property that for sufficiently large $n$, $A$ learns a $(1-\varepsilon_1)$ fraction of $S$ with probability $1-\delta$, while querying at most an $\varepsilon_2$ fraction of the points. The expected coverage of $A$ over the entirety of $X$ is at least:
\[
\mathbb{E}[\text{Coverage of} \ A] \geq 1-\delta-\varepsilon_1-\varepsilon_2
\]
\end{lemma}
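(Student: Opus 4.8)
The plan is to run the usual leave-one-out symmetrization argument (as in Observations~\ref{ID:total} and~\ref{avg-worst}), but while keeping track of which sample points $A$ actually queries. First I would introduce a fresh point: let $x_1,\dots,x_{n+1}\sim D_X$ be i.i.d., and for any input sequence write $c_{(y_1,\dots,y_m)}$ for the classifier $A$ outputs on it (together with $A$'s internal coins and the oracle answers, which by the noise model are a fixed function of the queried points). By definition the coverage of $A$ over $X$ is the probability that $A$'s classifier, trained on an $n$-sample, does not answer $\bot$ on an independent fresh point, i.e.\ $\E[\text{Coverage of }A]=\Pr[c_{(x_1,\dots,x_n)}(x_{n+1})\neq\bot]$. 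Since $x_1,\dots,x_{n+1}$ are i.i.d.\ and the oracles are invariant under relabeling the points, the whole setup is exchangeable in the $x_i$, so for every $i$ the pair $((x_j)_{j\neq i},x_i)$ is distributed as (training sample, fresh point); averaging over $i$ gives
\[
\E[\text{Coverage of }A]=\E\!\left[\frac{1}{n+1}\sum_{i=1}^{n+1}\mathbbm{1}[c_{(x_j)_{j\neq i}}(x_i)\neq\bot]\right].
\]

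Next I would run $A$ on all $n+1$ points at once. Applying the hypothesis with sample size $n+1$ (also sufficiently large), with probability at least $1-\delta$ the classifier $c_T:=c_{(x_1,\dots,x_{n+1})}$ answers $\bot$ on at most $\varepsilon_1(n+1)$ of the $x_i$ while $A$ queries at most $\varepsilon_2(n+1)$ of them; since a queried point is never labeled $\bot$, on this event at least $(1-\varepsilon_1-\varepsilon_2)(n+1)$ of the indices $i$ satisfy both $c_T(x_i)\neq\bot$ and ``$x_i$ was not queried in the run on $T$.'' The key point, and where the ``inference based'' assumption enters, is that $A$'s execution --- the sequence of queries it issues, the constraint set it assembles, and hence its output classifier --- is determined by the answers to the queries it makes (with ties among candidate queries broken by a fixed rule), so deleting from the input a point that $A$ never queried does not change the run. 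Hence for each such index $i$ we get $c_{(x_j)_{j\neq i}}=c_T$ and therefore $c_{(x_j)_{j\neq i}}(x_i)=c_T(x_i)\neq\bot$. Substituting into the symmetrized identity, the inner average is at least $1-\varepsilon_1-\varepsilon_2$ on the good event (and nonnegative always), so $\E[\text{Coverage of }A]\geq(1-\delta)(1-\varepsilon_1-\varepsilon_2)\geq 1-\delta-\varepsilon_1-\varepsilon_2$.

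The hard part will be the last claim --- that removing an unqueried sample point leaves $A$'s run unchanged. For a learner written in the natural ``greedy'' form (repeatedly query whichever remaining point maximizes some function of the current constraint set, and stop on a condition depending only on the answers seen so far) this is immediate, since an unqueried point was never a tie-winner at any step; for the concrete algorithm of Algorithm~\ref{alg:massart} one would instead couple the runs on $T$ and on $T\setminus\{x_i\}$ through the sorting and slotting steps so that the same cleaned chain is produced whenever $x_i$ is not on that chain, and then use that the final LP depends only on that chain together with its (true) labels and comparisons. Everything else in the proof is a direct calculation.
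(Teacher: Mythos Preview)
Your proposal is correct and follows essentially the same leave-one-out symmetrization strategy as the paper: write the expected coverage as $\Pr[c_{(x_1,\ldots,x_n)}(x_{n+1})\neq\bot]$, run $A$ on the enlarged sample $T=(x_1,\ldots,x_{n+1})$, and count indices that are learned but not queried. The paper's proof is almost identical, ending with the same union bound $1-\delta-\varepsilon_1-\varepsilon_2$.

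The one substantive difference is how each handles the step you flag as the ``hard part.'' You argue that removing an unqueried point leaves $A$'s entire run unchanged, so $c_{(x_j)_{j\neq i}}=c_T$. The paper instead invokes the phrase ``because $A$ is an inference based learner'': since $x_{n+1}$ was not queried, all constraints in the LP built on $T$ involve only $\{x_1,\ldots,x_n\}$, and hence those points logically infer $x_{n+1}$. This is a slightly weaker claim than yours (it does not assert the two runs coincide step-by-step), but for the lemma's conclusion one still needs that the learner on the smaller sample actually realizes that inference --- which is exactly the coupling you sketch. In other words, the paper absorbs your ``hard part'' into the definition of an inference-based learner, while you spell out what has to be checked for a concrete algorithm like Algorithm~\ref{alg:massart}. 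Your treatment is the more careful of the two; the argument itself is the same.
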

\begin{proof}
To find the expected coverage of $A$ over the entire distribution $D_X$ based on samples $S$ of size $n$, we look at the probability that an additional randomly drawn point is inferred:
\[
\underset{S \sim D^n}{\mathbb{E}}[\text{Coverage of} \ A] = Pr_{x_1,\ldots,x_{n+1} \sim D^{n+1}_X}[x_1,\ldots x_n \yields{} x_{n+1} ]
\]
We can bound the right hand term by looking at $A$ applied to samples $S'$ of size $n+1$. In particular, if $x_{n+1}$ is learned but not queried by $A$, then because $A$ is an inference based learner, it must be the case that $\{x_1,\ldots x_n\}$ infer $x_{n+1}$. Since the points of $S'$ are drawn i.i.d from $D_X$, the probability that $A$ queries or learns any given point $x_i$ is the same for all $1\leq i \leq n+1$. Because a $1-\varepsilon_1$ fraction of points are learned with probability $1-\delta$ and only an $\varepsilon_2$ fraction of points are queried, the probability that a point is learned but not queried is at least $1-\delta-\varepsilon_1-\varepsilon_2$ by a union bound, which gives the desired bound on $A$'s coverage.
\end{proof}
\begin{proof}[Proof of Theorem~\ref{massart:aid}]
We will argue that the learner presented in Theorem~\ref{massart} satisfies the properties of Lemma~\ref{sample} for a large enough sample size. To prove this, we first examine learning a specific sample with small inference dimension. The coverage over all samples will then follow from the fact that almost all samples have small inference dimension due by Observation~\ref{avg-worst} \cite{AID} and our assumption on average inference dimension.
\\
\\
Because we are considering a fixed sample $S$, the weak learner draws uniformly without replacement from $S$ (denoted $x \sim S$) rather than from the distribution itself. All required symmetry arguments still hold in this regime, as the order that points are pulled is still uniformly random. The expected coverage of our learner over $S$ is thus the same as for $X$ in Lemma~\ref{lemma:weak-learner-massart} adjusted for the fact that we sample without replacement:
\begin{align*}
\mathbb{E}[\text{Coverage}] &\geq \underbrace{\frac{n-\bigo_{\np}(\log(n))}{|S|}}_{\text{Coverage on $x_1,\ldots,x_n$}} + \underbrace{\left (1-\frac{n}{|S|}\right )\Pr\limits_{x_1,\ldots,x_{n+1} \sim S}\left[ \{x_1,\ldots,x_{n}\} \yields{} x_{n+1}\right]}_{\text{Coverage on rest of sample}}
\end{align*}
and hence
\begin{align*}
\mathbb{E}[\text{Coverage}] &\geq \Pr\limits_{x_1,\ldots,x_{n+1} \sim S}\left[ \{x_1,\ldots,x_{n}\} \yields{} x_{n+1}\right] -\frac{\bigo_{\np}(\log(n))}{|S|}\\
\end{align*}
Assume for now that $|S|$ is large enough that the subtracted term is negligible. To analyze the remaining coverage probability, assume that $n$ satisfies the constraints of Lemma~\ref{lemma:weak-learner-massart} with $k=\tilde{\Theta}(f(d)^{1/a}\log^{1/a}(|S|))$, and further that $S$ has inference dimension $k$. Then by the arguments in Lemma~\ref{lemma:weak-learner-massart}, this probability over the sample itself and noisy oracles is constant. Further, as long as $S$ is sufficiently large, we can get coverage $1-\varepsilon$ with probability $1-\varepsilon$ by applying the same argument restricted to the subset of un-inferred points $\bigo\left(\log^2 \left( \frac{1}{\varepsilon} \right)\right)$ times. This argument only fails when there are no longer $n$ remaining points for our weak learner to use, but as long as $|S|=\omega(\frac{n}{\varepsilon})$, this will not affect our coverage. Since Lemma~\ref{sample} also only allows the learner to query a $\varepsilon$ fraction of points, we set $S$ to:
\begin{align*}
|S| &= \Theta\left (\frac{n\log^2 \left( \frac{1}{\varepsilon} \right)}{\varepsilon}\right )\\
n &= \tilde{\Theta} \left (\frac{f(d)^{1/a}}{\np^5}\log^{1/a}(|S|)\log\left(\frac{1}{ \delta_r}\right) \right ),
\end{align*}
which also validates our assumption that $\frac{\bigo_{\lambda}(\log(n))}{|S|}$ is negligible (we lose less than $\varepsilon$ over all iterations). To apply Lemma~\ref{sample}, it is sufficient to have a learner $A$ such that:
\[
\Pr\limits_S[\text{Coverage of} \ A > 1-2\varepsilon] > 1-2\varepsilon.
\]
Because $|S|>\Omega\left (\frac{1}{\varepsilon} \right )$, S has inference dimension $k$ with probability at least $1-\varepsilon$ by Observation \ref{avg-worst} \cite{AID}. Combining this with the fact that our algorithm has a $1-\varepsilon$ probability of achieving $1-2\varepsilon$ coverage when the inference dimension is $k$ proves this claim.
\\
\\
Finally, by Lemma~\ref{sample}, our learner has expected coverage is $\geq 1-5\varepsilon$ over the entire space. To get the desired coverage probability, we run the algorithm over $\bigo(\log(1/\delta_u))$ samples, setting $\delta_r$ to $\delta_r/\log(1/\delta_u)$ to amend the degradation of correctness over repetition. Then by the same argument as Theorem~\ref{massart}, our query complexity is:
\[
    q(\varepsilon,\delta_r,\delta_u) = \tilde \bigo \left( f(d)^{1/a}\log^{1/a}|S|\frac{1}{\np^{10}} \log^2\frac{1}{\varepsilon} \log^2 \frac{1}{\delta_r} \log \frac{1}{\delta_u}\right).
\]
Sample and time complexity follow similarly to Theorem~\ref{massart}.
\end{proof}

\section{Generalized Tsybakov Noise Condition}\label{Sec:GTNC}
The Massart noise model does well to capture situations with adversarial bounded noise, but even in a realistic non-adversarial scenario, error may not be bounded away from $1/2$. One might think, for instance, that label noise should be bounded as a function of the distance to the Bayes optimal classifier, reaching purely random labels on the decision boundary itself. Likewise, comparisons between arbitrarily close points should be difficult, with error approaching $1/2$ as well. This motivates us to study the Tsybakov Low Noise condition, a popular instantiation of distance-based noise. However, learning in this unbounded regime is harder, as evidenced by polynomial query lower bounds \cite{wang2016noise,xu2017noise}, and the lack of computationally efficient algorithms for the model. In order to ARPU-learn in this regime, we need to introduce more stringent restrictions than for Massart noise. First, instead of allowing any set system with finite inference dimension, we will only consider non-homogeneous linear separators. Second, we will either assume some margin $\gamma$, or that the distribution satisfies certain weak concentration and anti-concentration bounds, a property which implies our earlier assumption for Massart noise of super-exponential average inference dimension.
\subsection{Finite Inference Dimension and Margin}
In this section, we will consider ARPU-learning hyperplanes over any continuous distribution with finite inference dimension and margin. Note that in the GTNC model, introducing margin bounds the error on label queries away from $1/2$. Thus our results should informally be viewed as saying the following: comparison queries with \textit{unbounded error} exponentially improve query complexity over label queries with \textit{bounded error} in the ARPU-learning model. Indeed, although we have picked a specific model of bounded label error in this case, trading for another model such as Massart noise on labels causes no significant change to our upper or lower bound.
\\
\\
As in the case of Massart noise, we will first show the gap in query complexity between label only and comparison ARPU-learning. Our previous method showed an infinite gap between the two regimes, but the assumption of a non-zero margin requires a different argument. In this case, we will show a family of examples in which comparisons provide an exponential improvement.
\begin{lemma}[Restatement of Lemma \ref{intro:TNC:margin:lb}]
Let $X \in \mathbb{R}^d$ be the $d$-dimensional hypercube $\{0,1\}^d$ modified to have a ball of radius $\frac{1}{4\sqrt{d}}$ centered about each point. The query complexity of ARPU-learning $(X,H_{d,\frac{1}{4\sqrt{d}}})$ under model $(\gtnc(g_L,g_U,\frac{1}{4\sqrt{d}}), \mathcal C_X)$ is at least:
\[
q(1/4,1/8,1/16) \geq 2^{d-1}
\]
\end{lemma}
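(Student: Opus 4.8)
\textbf{Proof proposal.} The plan is to mirror the circle lower bound of Lemma~\ref{intro:Massart:lb} through Yao's minimax principle: I will fix a distribution over hard instances and argue that no deterministic learner issuing fewer than $2^{d-1}$ queries can be simultaneously $1/8$-reliable and $1/16$-useful with coverage $3/4$. Fix a radius $r \ll \tfrac{1}{4\sqrt d}$ and let $D_X$ place equal mass $2^{-d}$ on a ball $B_v$ of radius $r$ about each hypercube vertex $v\in\{0,1\}^d$, uniformly inside each ball; this is continuous, hence in $\mathcal C_X$. The hidden target is $\sign(h_{v^\star})$ for a uniformly random vertex $v^\star$, where $h_{v^\star}$ is an affine function, normalized so its gradient has norm $\sqrt d$, whose restriction to the hypercube vertices is $u\mapsto(\text{Hamming distance of }u\text{ from }v^\star)-\tfrac12$. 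One checks that $h_{v^\star}$ labels $B_{v^\star}$ negative and every other ball positive, that $|h_{v^\star}|\geq\tfrac14$ everywhere on $X$, and hence (dividing by $\sqrt d$) that $h_{v^\star}\in H_{d,\frac{1}{4\sqrt d}}$, so the margin $\tfrac{1}{4\sqrt d}$ is used exactly. The adversary then picks the worst oracles $(Q_L,Q_C)\in\gtnc(g_L,g_U,\tfrac{1}{4\sqrt d})$ for this target.

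The heart of the argument is to show that neither query type lets the learner locate $v^\star$ without directly touching balls near it. A label query on a point of $B_v$ returns one bit, correct only with probability $\beta_L\geq\tfrac12+g_L(\tfrac{1}{4\sqrt d})$ and not repeatable on that point; since all of $B_v$ shares one label, repeated label queries in $B_v$ at best reveal $\sign(h_{v^\star})$ on $B_v$ and nothing about other balls. Comparisons are the delicate point, since noisy sorting (Theorem~\ref{order}) could in principle triangulate $v^\star$; the construction (refining the above family if needed, and exploiting the freedom the adversary has in choosing the joint law of $Q_C$ subject only to the GTNC marginal constraints) must be arranged so that the pairs of balls whose comparison would carry information about $v^\star$ all have $|h^\star(x_1)-h^\star(x_2)|\leq\varepsilon_0=\tfrac{1}{4\sqrt d}$ and therefore fall in the maximally-noisy regime, so that a single, non-repeatable comparison there is essentially a fair coin. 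The upshot to establish is: conditioned on the event $E$ that $v^\star$'s ball is among the $O(q)$ balls the learner ever includes in a query (probability at most $O(q)2^{-d}$), the learner wins; but conditioned on $\lnot E$ the entire transcript is independent of which untouched ball is $v^\star$'s.

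Given that dichotomy, the counting step finishes the proof. A learner making $q$ queries touches at most $2q$ balls. To have coverage $\geq 3/4$ it must output a definite (hence correct) label for at least $\tfrac34\cdot 2^d$ balls, so for at least $\tfrac34\cdot 2^d-2q$ \emph{untouched} balls; on $\lnot E$ each such covered untouched ball coincides with $B_{v^\star}$ — and is thus mislabeled — with conditional probability $\Theta(2^{-d})$, and these events are mutually exclusive, so the overall error probability is at least $\bigl(1-O(q)2^{-d}\bigr)\bigl(\tfrac34-2q\cdot 2^{-d}\bigr)$. If $q<2^{d-1}$ this exceeds $\tfrac18$, contradicting $1/8$-reliability (a symmetric bookkeeping also contradicts $1/16$-usefulness with coverage $3/4$). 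Hence $q\geq 2^{d-1}$. The main obstacle I expect is the second paragraph: simultaneously obtaining a genuine linear separator of margin exactly $\tfrac{1}{4\sqrt d}$, an instance family that agrees off the single ball $B_{v^\star}$, and enough clustering of $h^\star$-values (or an appropriately correlated adversarial comparison oracle) to keep every ``informative'' comparison in the uninformative $|h^\star$-gap$|\leq\varepsilon_0$ regime — and then rigorously establishing transcript-independence of $v^\star$ on $\lnot E$ under this noisy, possibly non-product oracle.
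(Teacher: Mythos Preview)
Your proposal has a fundamental misreading: this lemma is a lower bound for \emph{label-only} ARPU-learning, not for learners with access to comparison queries. This is clear from the surrounding discussion (the lemma is introduced as showing that label-only ARPU-learning ``suffers from query inefficiency,'' and the paragraph after it contrasts the bound with the $\poly(d)$ comparison-based upper bound of Theorem~\ref{intro:TNC}). Indeed, a $2^{d-1}$ lower bound against comparison queries would directly contradict that upper bound, so the stronger statement you are attempting cannot hold.

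Because of this, the entire second paragraph of your proposal---the effort to neutralize comparison queries by forcing informative pairs into the high-noise regime $|h^\star(x_1)-h^\star(x_2)|\leq \varepsilon_0$---is both unnecessary and unworkable. In your own construction $h_{v^\star}$ takes values near $k-\tfrac12$ on balls at Hamming distance $k$, so comparisons between balls at different distances have $|h^\star$-gap$|\geq 1 \gg \varepsilon_0$, land in the Massart regime, and are genuinely informative; no adversarial choice of $Q_C$ within GTNC can hide this. Your stated ``main obstacle'' is therefore not merely delicate but an actual obstruction.

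The paper's proof is far simpler than what you attempt. The adversary selects the \emph{noiseless} oracle (which lies in the GTNC class) and the uniform distribution over corner-truncating hyperplanes such as $\sum_i x_i = 1/2$; each such hyperplane marks exactly one ball negative. With noiseless label-only access, a query on a point in $B_v$ reveals only the sign on $B_v$ and nothing about other balls. A learner making at most $2^{d-1}$ queries touches at most $2^{d-1}$ balls, so with probability at least $1/2$ it never queries the negative ball; conditioned on that event, to achieve coverage $3/4$ it must commit to labels on at least half the untouched balls, and the uniformly random negative ball lands among them with probability at least $1/2$. This already violates the reliability/usefulness requirements. Your counting step in the third paragraph is essentially this argument; once you drop comparisons and take the noiseless oracle, it goes through cleanly without the transcript-independence machinery you anticipate needing.
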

\begin{proof}
For simplicity, the adversary will pick the uniform distribution from $\mathcal C_X$, and the noiseless case from $(\gtnc(g_L,g_U,\frac{1}{4\sqrt{d}}), \mathcal C_X)$. Further, by Yao's minimax principle it is sufficient to show there is a distribution over hyperplanes in $H_{d,\frac{1}{4\sqrt{d}}}$ for which no learner can achieve at least $3/4$ coverage with perfect correctness with greater than $3/4$ probability. Let the adversary pick the uniform distribution over the $2^d$ hyperplanes which truncate corners of the hypercube, e.g.
\[
\sum\limits_{i=1}^d x_i = 1/2.
\]
Note that these hyperplanes have margin $\frac{1}{4\sqrt{d}}$, so they lie in $H_{d,\frac{1}{4\sqrt{d}}}$, and that each one may be seen as selecting a single ball to be negative. Given any set strategy, the learner can only query points in $2^{d-1}$ out of $2^d$ balls. The probability that one of the balls the learner queries is the negative ball is at most $1/2$. If the learner does not locate the negative ball, to have coverage $3/4$ it must label half of the remaining space with no additional queries. However, any set strategy from the learner in this case will have an incorrect label with probability at least $1/2$ since the negative ball is uniformly distributed over the remaining balls. Thus any learner that has $3/4$ coverage with probability more than $3/4$ must incorrectly label some point, violating the conditions of ARPU-learning.
\end{proof}
By an argument based on minimal-ratio (margin normalized by the maximum function value) from \cite{kane2017active}, the inference dimension of the above hypothesis class is $\tilde{\bigo}(d)$. We will prove that this implies a comparison based algorithm that only makes $\poly(d)$ queries.
\begin{algorithm}[t!]
\SetAlgoLined
\KwResult{Returns an $\delta_u$-useful, $\delta_r$-reliable learner with coverage $1-\varepsilon$ for model ($\gtnc(g_L,g_U, \varepsilon_0), \mathcal{C}_X$)}
\nonl \textbf{Input:} Noisy oracles $Q_L, Q_C \in \gtnc(g_L,g_U, \varepsilon_0)$, unknown distribution $D_X \in \mathcal{C}_X$\\
\nonl \textbf{Parameters:} 
\begin{itemize}
    \item Inference dimension $k$, input dimension $d$, and margin $\gamma$
    \item Sample sizes $n=\poly(k,d, \frac{1}{\varepsilon_0},\log(\frac{1}{\delta_r}),\frac{1}{\gamma})$, $m_{c}=d \log(d+1) n$, and $m_s = \poly(k)$
    \item Iteration cap $T = \poly(\log \frac{1}{\delta_r}, \log \frac{1}{\delta_u}, \log \frac{1}{\varepsilon}, k, \frac{1}{\gamma})$
    \item Sample cap $C = \poly(k,d, \frac{1}{\gamma}, \log \frac{1}{\delta_u}, \log \frac{1}{\delta_r}, \frac{1}{\varepsilon_0}, \frac{1}{\varepsilon})$
    \item Equitability constants $\varepsilon_T$ and $\gamma'$ (Equation \eqref{eqn:equitable-constants})
    \item Set of linear program constraints $LP=\{\}$
\end{itemize}
\nonl \textbf{Algorithm:}
\begin{enumerate}[leftmargin=*]
    \item Sample $S \sim D_X^n$ restricted to points un-inferred by $LP$. \item Test $S$ for noise by checking for $\varepsilon_T$-equitable subsets of size $2c+m$.
    \item If $S$ measures as noisy i.e. at least one $\varepsilon_T$-equitable subset $S_{eq}$ is found:\begin{enumerate} 
        \item Sample $S'\sim D^{m_c}_X$ restricted to points un-inferred by $LP$. 
        \item Update $LP$ constraints using comparisons and labels of all $x\in S'$ for which $S_{eq} \cup x$ is $\frac{g_L(\gamma')}{2}$-equitable.
    \end{enumerate} 
    Else S measures as having only a small amount of noise i.e. no $\varepsilon_T$-equitable subset was found:
    \begin{enumerate}
        \item Sort $S\cup \zo$ into the MLE order via noisy oracles $Q_L$ and $Q_C$.
        \item Sample $S' \sim D^{m_s}_X$ restricted to points un-inferred by $LP$ and insert into the order of $S$.
        \item Update $LP$ constraints using comparisons and labels of points in $S'$ separated by $\tilde\Omega(n^{3/4})$ from each other and from $\zo$.
    \end{enumerate}
    \item If at any point $C$ inferred samples are drawn in a row, return the current $LP$. Repeat from step one until iteration cap $T$ is reached and return $LP$.
\end{enumerate}
 \caption{ARPU-learning with Finite Inference dimension and Margin under Generalized Tsybakov Low Noise Condition}
 \label{algo:gtnc-algo}
\end{algorithm}
\begin{theorem}[Restatement of Theorem~\ref{intro:TNC}]
\label{TNC}
Let $X\subseteq \mathbb{R}^d$ and $(X,H_{d,\gamma})$ have inference dimension $k$ with respect to comparison queries. Then, $(X,H_{d,\gamma})$ is ARPU-learnable under model $(\gtnc(g_L,g_U,\varepsilon_0), \mathcal C_X)$ with query complexity:
\begin{align*}
q(\varepsilon,\delta_r,\delta_u) &=\tilde{\bigo} \left (\frac{k^{10}}{\left(g_L\circ G_8 \circ \frac{G_4(\gamma')}{2}\right)^{14}}d\log^{2}\left( \frac{1}{\delta_r} \right)\log\left(\frac{1}{\varepsilon}\right)\log\left(\frac{1}{\delta_u}\right)\right ).\\
\end{align*}
Where
\begin{align*}
\gamma' = \min\left(\frac{\gamma}{2d},\frac{\varepsilon_0}{2}\right), G_c(x) = g_U^{-1}\left( \frac{g_L(x)}{c}\right)
\end{align*}
\end{theorem}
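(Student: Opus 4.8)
The plan is to mirror Theorem~\ref{massart}. First I would isolate a \emph{weak learner} --- a single pass through Steps~1--3 of Algorithm~\ref{algo:gtnc-algo} --- that is $\bigo(\delta_r)$-reliable and, on the distribution of currently un-inferred points, infers a definite fraction of the remaining mass with constant probability. Then I would boost it exactly as in the proof of Theorem~\ref{massart}: run the weak learner $\bigo(\log(1/\delta_u^w))$ times to amplify usefulness, iterate it on the rejection-sampled un-inferred region until coverage exceeds $1-\varepsilon$, and set $\delta_u^w=\bigo(\delta_u/\log(1/\varepsilon))$, $\delta_r^w=\bigo(\delta_r/(\log(1/\varepsilon)\log(1/\delta_u^w)))$ to pay for the repetitions. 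The claimed query complexity is then the per-round query count times the number of rounds, both read off below. The margin $\gamma$ is used only to make label noise bounded: $|h^\star(x)|\ge\gamma$ forces $\beta_L(x)\ge\frac{1}{2}+g_L(\min(\gamma,\varepsilon_0))$, so labels behave like Massart labels and a common label can be recovered by majority vote over independently queried points.

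\textbf{The ``no cluster'' branch.} If Step~2 finds no $\varepsilon_T$-equitable subset of size $2c+m$, then by the cluster-implies-equitable direction of the equivalence from Section~\ref{techniques}, with high probability $S$ has no cluster of that size: any $\tilde\Theta(n^{3/4})$ consecutive points in the true $h^\star$-order span an $h^\star$-range of at least $\approx G_4(\gamma')/2$, and since a comparison between points at $h^\star$-distance $\Delta$ has noise parameter $\ge g_L(\Delta)$, comparisons across $\tilde\Omega(n^{3/4})$ order-positions are reliable. I would then sort $S\cup\{\zo\}$ by a modified Braverman--Mossel procedure; because the effective noise is only lower-bounded and may be near $\frac{1}{2}$, the $\bigo(\polylog n)$ movement bound of Observation~\ref{order2} is unavailable, and instead one proves a coarser $\tilde{\bigo}(n^{3/4})$ point-wise movement bound --- which is precisely why Step~3(c) uses separation $\tilde\Omega(n^{3/4})$. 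Slotting a fresh independent $S'$ into the order as in Lemma~\ref{massart:slot} and extracting, via the Chebyshev counting argument of Lemma~\ref{slot-bound} with block size $\tilde\Theta(n^{3/4})$ (this is what fixes $n=\poly(k,d,1/\varepsilon_0,\log(1/\delta_r),1/\gamma)$), a chain of $\Omega(k)$ points in $S'$ that are pairwise $\tilde\Omega(n^{3/4})$-separated and separated from $\zo$, one feeds the chain's (w.h.p.\ correct) labels and comparisons into the inference LP. Constant coverage follows from the Kane et al.\ symmetry argument on $S'$ (valid because each point of $S'$ was slotted independently) plus Observation~\ref{ID:total}; reliability holds because a mislabel requires one of the $\poly(k)$ queried well-separated comparisons or bounded-noise labels to be wrong.

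\textbf{The ``cluster'' branch.} If an $\varepsilon_T$-equitable $S_{eq}$ of size $2c+m$ is found, the reverse direction of the equivalence gives that w.h.p.\ $S_{eq}$ sits in a cluster of $h^\star$-width $\bigo(\gamma')$; since $\gamma'\le\gamma/(2d)<\gamma$ this slab lies on one side of $h^\star=0$, so all cluster points share one true label $b$. Drawing $S'\sim D_X^{m_c}$ with $m_c=d\log(d+1)\,n$ and keeping $S''=\{x\in S': S_{eq}\cup\{x\}\ \text{is}\ \frac{g_L(\gamma')}{2}\text{-equitable}\}$, a Chernoff bound shows w.h.p.\ every $x\in S''$ is genuinely within $\gamma'$ of the cluster and $|S''|=\Omega(d\log(d)\,k)$ --- here one uses that detecting $S_{eq}$ in a size-$n$ sample lower-bounds the cluster's $D_X$-mass by $\Omega((2c+m)/n)$, so the $d\log(d+1)$ blow-up in $m_c$ buys enough cluster points. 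Recovering $b$ by majority vote over the $\Omega(d\log(d)k)$ label queries in $S''$ (correct w.p.\ $1-\delta_r$ by Chernoff, using bounded label noise), one adds the constraints $\sign(h^\star(x))=b$ for $x\in S_{eq}\cup S''$ to the LP and invokes the cluster structural lemma of Section~\ref{techniques}: a cluster of size $\Omega(d\log d)$ contains a point inferable from the rest, so (iterating it, following Observation~\ref{ID:total}) a constant fraction of $S''$, hence an $\Omega((2c+m)/n)$ fraction of the un-inferred mass, is inferred. As $2c+m=\tilde\Theta(n^{3/4})$, this branch needs $\bigo(n^{1/4}\log(1/\varepsilon))=\poly(k,d,1/\varepsilon_0,\log(1/\delta_r),1/\gamma)\cdot\log(1/\varepsilon)$ rounds in the worst case; combined with the per-round count, this is where the $k^{10}d$ factor comes from. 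Reliability again follows because a mislabel needs the majority vote to fail.

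\textbf{Main obstacle.} Two steps carry the weight. First, making the equitability test a genuine \emph{bidirectional} certificate under one threshold: $\varepsilon_T$ and $\gamma'$ must be tuned through the compositions $G_4,G_8$ and $g_L$ so that ``a cluster is equitable'' and ``a non-equitable sample contains no cluster large enough to break the $\tilde\Omega(n^{3/4})$-separated sort'' both hold w.h.p.\ --- this is exactly where the nested quantity $g_L\circ G_8\circ\frac{G_4(\gamma')}{2}$ enters, and propagating its fourteenth power through the $\bigo(\np^{-5})$-style movement bookkeeping of Observation~\ref{order2} is what yields the exponents in the final bound. Second, running the Braverman--Mossel MLE-order analysis with a \emph{non-uniform}, merely-lower-bounded comparison-noise parameter: one must verify that even barely-reliable comparisons still pin the order down to $\tilde{\bigo}(n^{3/4})$ movement, and that a chain of $\Omega(k)$ sufficiently separated points survives in $S'$ with constant probability. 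The polynomial (rather than logarithmic) separation is forced by this weakness --- and it is also what reconciles the two branches, since an undetected small cluster occupies fewer than $\tilde\Theta(n^{3/4})$ order-positions and is simply jumped over.
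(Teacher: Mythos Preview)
Your proposal follows the paper's approach closely: the two-branch weak learner (equitable subset detected $\to$ cluster inference via Lemma~\ref{cluster:id}; no equitable subset $\to$ MLE sort with $\tilde\Theta(n^{3/4})$ movement via Lemmas~\ref{BM:8}--\ref{pointwise}), followed by Massart-style boosting. The key ingredients you identify --- the bidirectional equitability test (Corollary~\ref{cor:equitable} and Lemma~\ref{eq:slotting}), the cluster labeling by majority vote (Lemma~\ref{cluster:label}), the convex-hull expansion argument (Lemma~\ref{cluster:id}), and the $n^{3/4}$ movement bound for non-uniformly-bounded noise --- are exactly the ones the paper uses.

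There is one concrete accounting gap. In the cluster branch, the weak learner does \emph{not} have constant success probability: the paper only shows the \emph{expected} coverage is $\Omega(m/n)$ (via the probability an extra i.i.d.\ point lands in the cluster), and then applies Markov to get coverage $\Omega(m/n)$ with probability $\Omega(m/n)$. Your ``Overall strategy'' assumes $O(\log(1/\delta_u^w))$ repetitions suffice to amplify usefulness, but here you actually need $O((n/m)\log(1/\delta_u^w))$ repetitions. Combined with the $O((n/m)\log(1/\varepsilon))$ coverage iterations, this gives $(n/m)^2\approx n^{1/2}/c_1$ total weak-learner calls rather than your $n^{1/4}$. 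Multiplying by the $\tilde O(dn^2)$ per-round query count yields the paper's $\tilde O(dn^{5/2}/c_1)$, and with $n=\tilde\Theta(k^4/g_L(G_8(G_4(\gamma')/2))^6)$ this is what produces the $k^{10}$ and the exponent $14$ on the noise term; your accounting as written would give $k^9$. Relatedly, your claim that ``detecting $S_{eq}$ in a size-$n$ sample lower-bounds the cluster's $D_X$-mass by $\Omega((2c+m)/n)$'' is what the paper asserts as well, but neither you nor the paper spells out the justification --- a uniform-convergence argument over $h^\star$-slabs (VC dimension $O(d)$), or the exchangeability argument over $S\cup\{x\}$, would close it.
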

See Algorithm~\ref{algo:gtnc-algo}. Unlike the Massart case, we can no longer directly rely on the sorting algorithm of \cite{braverman2009sorting}, as the point-wise movement guarantees rely on bounded noise. Instead, we rely on the fact that we can, with high probability, check the level of noise of a drawn sample. If the sample is not too noisy, we can modify the bounds of \cite{braverman2009sorting} and apply the same technique. On the other hand, if the sample is very noisy, we use this to infer structural information about the sample and thus learn some fraction of the instance space. Informally, our algorithm follows a similar three step process to the Massart case:
\paragraph{Step 1:} Draw a sample $S\sim D^n_X$, and test $S$ for noise.
\paragraph{Step 2a (high noise):} If $S$ measures as noisy, we identity a subset $S'\subset S$ of points which are close with respect to the underlying hypothesis. Using additional randomly drawn points, we create an inference LP based on the structure of $S'$ to learn a fraction of the instance space.
\paragraph{Step 2b (low noise):} If $S$ measures as having only a small amount of noise, sort $S$ into an MLE order, and apply the same learning strategy as for Massart. 
\paragraph{Step 3:} Restrict $D$ (by rejection sampling) to points un-inferred by the LP in step 2a/b, and repeat steps 1 and 2a/b until coverage has reached $1-\varepsilon$.
%Instead, our algorithm will rely on detecting whether a given sample has a cluster of close points. In the case that such a cluster exists, we will use an inference dimension argument to show that we can build a weak learner. In the case that no such cluster exists, we will prove a point-wise movement bound in the spirit of \cite{braverman2009sorting} that relies on the existence of many pairs of points with bounded comparison error.
\\
\\
At the core of this technique is the ability to detect subsets with high levels of noise, and to certify that they are highly structured. With this in mind, we show that if comparisons on a subset of $S$ look sufficiently random, then almost all points in this subset are clustered together in function value. Formally, we define a cluster as:
\begin{definition}[Cluster]
Let $(X,\Hcal)$ be a set system. Given $h\in \Hcal$ and a sample $S \subseteq X$, $S$ is an $\varepsilon$-cluster with respect to $h$ if
\[
\forall x,x' \in S: |h(x)-h(x')| \leq \varepsilon.
\]
We will often omit ``with respect to $h$'' when $h$ is the function underlying the Bayes optimal classifier.
\end{definition}
We will detect clusters by a measure of randomness we term equitibility, the condition that every element is bigger than about half of the other elements.
%a condition on a set with noisy comparisons that essentially says ``almost all comparisons are random'':
\begin{definition}[Equitability]
Let $S$ be a set with comparisons denoted by $\widetilde{<}$ on each pair of elements. For an element $x \in S$, let $v(x)$ denote the number of elements $y \in S$ such that $y \widetilde{<} x$. We call $S$ $\varepsilon$-\textit{equitable} if 
\[
\forall x \in S, \left(\frac{1}{2}-\varepsilon \right)|S| \leq v(x) \leq \left(\frac{1}{2}+\varepsilon \right)|S|
\]
\end{definition}
Because $v(x)$ counts the number of elements less than $x$, it is useful to introduce a new probability parameter:
\[
    \eta_C(x_1,x_2) = \Pr[x_1 \widetilde{<} x_2]
\] i.e. the probability that $x_1$ measures less than $x_2$. Note that $\eta_C(x_1,x_2)$ is either $\beta_C(x_1,x_2)$ or $1-\beta_C(x_1,x_2)$.
\\
\\
In order to distinguish between steps 2a and 2b, we show that if a cluster exists, then with high probability there is a large equitable
subset, and that vice versa, a large equitable subset implies the existence of a large
cluster with high probability. Consider testing a sample $S'$ of size $2c+m$ for equitability. Call the order on $S'$ induced by the underlying classifier the ``true order.'' To start, we examine a single such sample $S'$ and show that with high probability: 
\begin{enumerate}
    \item If $S'$ is a cluster, then it is equitable
    \item If the middle $m$ elements of $S'$ with respect to the true order is not a cluster, then $S'$ is not equitable.
\end{enumerate}
%We also define for any two points $x_1,x_2 \in X$, \[
%    \eta_C(x_1,x_2) = \Pr[x_1 \widetilde{<} x_2]
%\] i.e. the probability that $x_1$ measures less than $x_2$. Note that $\eta_C(x_1,x_2)$ is either $\beta_C(x_1,x_2)$ and $1-\beta_C(x_1,x_2)$.
\begin{lemma}
\label{test}
Consider a set $S'$ of size $2c+m$, where $C$ denotes the middle $m$ elements of $S'$ with respect to the true order. Then for $\varepsilon \leq g_L(\varepsilon_0)$, $S'$ satisfies the following properties:
\begin{enumerate}
\item If $S'$ is a $g_U^{-1}(\varepsilon/2)$-cluster, then $S'$ is $\varepsilon$-equitable with probability $1-e^{O(-\varepsilon^2 |S'|)}$.
\item If $C$ is not a $2g_L^{-1}(\varepsilon)$-cluster, then $S'$ is not $(\varepsilon/4)$-equitable with probability at least $1-e^{O(-\varepsilon^2c|S'|)}$.
\end{enumerate}
\end{lemma}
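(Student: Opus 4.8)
The plan is to prove the two parts separately, in each case by bounding the vote counts $v(x)=\#\{y\in S':y\,\widetilde{<}\,x\}$ through the $\gtnc$ comparison bounds together with a Hoeffding estimate, treating the noise on distinct comparison pairs as independent.

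For part (1), observe first that $\varepsilon\le g_L(\varepsilon_0)\le g_U(\varepsilon_0)$ forces $g_U^{-1}(\varepsilon/2)\le\varepsilon_0$, so if $S'$ is a $g_U^{-1}(\varepsilon/2)$-cluster then every pair $x,y\in S'$ has $|h^\star(x)-h^\star(y)|\le\varepsilon_0$, hence by~(\ref{eqn:gtnc-1}) we get $\tfrac12\le\beta_C(x,y)\le\tfrac12+g_U(|h^\star(x)-h^\star(y)|)\le\tfrac12+\tfrac\varepsilon2$ and therefore $\eta_C(x,y)\in[\tfrac12-\tfrac\varepsilon2,\tfrac12+\tfrac\varepsilon2]$. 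Fixing $x$, the count $v(x)$ is then a sum of $|S'|-1$ independent Bernoullis whose mean lies within $\tfrac\varepsilon2(|S'|-1)$ of $\tfrac12(|S'|-1)$; a Hoeffding bound with deviation $\tfrac\varepsilon2|S'|$ keeps $v(x)$ inside $[(\tfrac12-\varepsilon)|S'|,(\tfrac12+\varepsilon)|S'|]$ except with probability $e^{-\Omega(\varepsilon^2|S'|)}$, and a union bound over the $|S'|$ elements gives part (1).

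For part (2), order $S'$ by the true order as $z_1\le\cdots\le z_{2c+m}$, so $C=\{z_{c+1},\dots,z_{c+m}\}$. If $C$ is not a $2g_L^{-1}(\varepsilon)$-cluster then $h^\star(z_{c+m})-h^\star(z_{c+1})>2g_L^{-1}(\varepsilon)$, so with $\theta$ the midpoint of $[h^\star(z_{c+1}),h^\star(z_{c+m})]$ the $c+1$ ``low'' points $z_1,\dots,z_{c+1}$ all have $h^\star$-value $\le\theta-g_L^{-1}(\varepsilon)$ and the $c+1$ ``high'' points $z_{c+m},\dots,z_{2c+m}$ all have $h^\star$-value $\ge\theta+g_L^{-1}(\varepsilon)$. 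By monotonicity of $g_L$ and~(\ref{eqn:gtnc-1}), any two points at true-value distance $\ge g_L^{-1}(\varepsilon)$ are compared correctly with probability $\ge\tfrac12+\varepsilon$ (this covers both the $\le\varepsilon_0$ and the $>\varepsilon_0$ regimes, since $\varepsilon\le g_L(\varepsilon_0)$). The plan is to sum the votes over the $c$ points strictly below $C$: for each $j\le c$, at least the $c+1$ high points sit $\ge g_L^{-1}(\varepsilon)$ above $z_j$, so those comparisons are biased \emph{against} ``$y\,\widetilde{<}\,z_j$'', all remaining comparisons involving $z_j$ contribute at most their unbiased value $\tfrac12$, and the $\binom c2$ comparisons internal to $\{z_1,\dots,z_c\}$ contribute a deterministic total of $\binom c2$. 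This yields $\mathbb E\big[\sum_{j\le c}v(z_j)\big]\le c\cdot\tfrac{|S'|-1}{2}-c(c+1)\varepsilon$, whereas $(\varepsilon/4)$-equitability would require $\sum_{j\le c}v(z_j)\ge c(\tfrac12-\tfrac\varepsilon4)|S'|$.

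Comparing these two quantities, equitability forces $\sum_{j\le c}v(z_j)$ to exceed its mean by at least $\tfrac c2+c\varepsilon\cdot\tfrac{2c+4-m}{4}$, which is $\Omega(\varepsilon c^2)$ in the parameter regime used by the algorithm (where $m=O(c)$); since $\sum_{j\le c}v(z_j)$ is, after discarding its deterministic part, a sum of at most $c|S'|$ independent Bernoullis, a Hoeffding bound with this deviation fails with probability at most $e^{-\Omega(\varepsilon^2 c|S'|)}$, and outside that event $S'$ cannot be $(\varepsilon/4)$-equitable. The main obstacle is exactly this last comparison: the $\gtnc$ condition only localizes each $\beta_C$ to the interval $[\tfrac12+g_L,\tfrac12+g_U]$, so comparisons between nearby points (or between an outer point and a same-side point) can only be bounded crudely by $\tfrac12$, and one must ensure the $\approx c$ \emph{guaranteed-reliable} comparisons per outer point produce a signal large enough to beat both the slack permitted by equitability and the Hoeffding fluctuation — this is what pins down the relationship between $c$, $m$, and the equitability constant $\varepsilon_T$ in Algorithm~\ref{algo:gtnc-algo}.
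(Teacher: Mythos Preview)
Your argument for part~(1) is essentially the paper's: bound each $\eta_C$ via the GTNC upper bound, apply a Chernoff/Hoeffding estimate to $v(x)$, and union bound.

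For part~(2), however, there is a genuine gap. Your expectation bound only uses the $c+1$ ``high'' points $z_{c+m},\dots,z_{2c+m}$ as reliably biased comparisons against each low point $z_j$, $j\le c$, treating the remaining $m-1$ middle comparisons as unbiased. As you compute, this gives a gap of $\tfrac{c}{2}+c\varepsilon\cdot\tfrac{2c+4-m}{4}$ between the equitability threshold and the mean. You then assert that this is $\Omega(\varepsilon c^2)$ ``in the parameter regime used by the algorithm (where $m=O(c)$)''. But this premise is false: in the proof of Theorem~\ref{TNC} the paper sets $c=\Theta(\log(n)/\varepsilon_T^2)$ while $m=c_1^{1/2}n^{3/4}$, so $m$ is polynomial in $n$ and $c$ is logarithmic, giving $m\gg c$. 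In that regime your gap $\tfrac{c}{2}+c\varepsilon\cdot\tfrac{2c+4-m}{4}$ is actually \emph{negative} once $m>2c+4+2/\varepsilon$, and the Hoeffding step proves nothing. Even setting the algorithm aside, the lemma as stated imposes no relation between $c$ and $m$, so your proof does not establish the claimed bound $e^{-\Omega(\varepsilon^2 c|S'|)}$.

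The missing idea is a symmetry-breaking step that guarantees \emph{half} of $S'$, not just the outer $c+1$ points, sits at true-value distance $\ge g_L^{-1}(\varepsilon)$ from the low side. Concretely: the middle-index point $z_{c+m/2}$ lies in $C$, so since $h^\star(z_{c+m})-h^\star(z_{c+1})>2g_L^{-1}(\varepsilon)$, either $h^\star(z_{c+m/2})-h^\star(z_{c+1})>g_L^{-1}(\varepsilon)$ or $h^\star(z_{c+m})-h^\star(z_{c+m/2})>g_L^{-1}(\varepsilon)$. Taking the first case without loss of generality, every $z_l$ with $l\ge c+m/2$ (that is, $c+m/2$ points) is at distance $>g_L^{-1}(\varepsilon)$ from every $z_j$ with $j\le c$. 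Now each low point has $c+m/2$ reliably biased comparisons, and the same averaging argument over $V(c)=\sum_{j\le c}v(z_j)$ gives a gap of order $\varepsilon c|S'|$ between the mean and the equitability threshold, uniformly in the ratio $m/c$. Hoeffding over the at most $c|S'|$ random comparisons then yields the stated $e^{-\Omega(\varepsilon^2 c|S'|)}$ bound.
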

\begin{proof}
Proof of (1). For simplicity, let $n=2c+m-1$ and assume that $x_0,\ldots,x_n$ is the true order of $S'$. Recall that $v(x_j)=v(j)$ is the number of elements that measure as less than $x_j$ and that 
\[
    \eta_C(x_i,x_j) = \Pr[x_i \widetilde{<} x_j]
\]
is the probability that $x_i$ measures less than $x_j$.  We can view $v(j)$ as a random variable given by 
\[
v(j) = \sum\limits_{i\neq j}^{n} \id{x_i \widetilde{<} x_j} = \sum\limits_{i\neq j}^{n}  Bern(\eta_C(x_i,x_j)).
\]
Thus $v(j)$ is a Poisson binomial distribution with parameters $\eta_C(x_i,x_j)$. Let $h^\star$ be the bayes optimal classifier. By assumption, we have for all pairs that $|h^\star(x_i)-h^\star(x_j)| \leq g_U^{-1}(\varepsilon/2)$, and that $\varepsilon \leq g_L(\varepsilon_0)$. Combining these gives
\begin{align*}
g^{-1}_U(\varepsilon/2) \leq g^{-1}_U(g_L(\varepsilon_0)/2) &\leq \varepsilon_0\\
\implies \forall i,j: |h^\star(x_i)-h^\star(x_j)| &\leq \varepsilon_0
\end{align*}
Thus we are in position to apply the upper bound from the GTNC condition (Equation \eqref{eqn:gtnc-1}), which gives for all pairs:
\begin{align*}
\frac{1}{2} \leq \beta_C(x_i,x_j) \leq \frac{1}{2} + \varepsilon/2,  \\
\frac{1}{2} - \varepsilon/2 \leq 1 - \beta_C(x_i,x_j) \leq \frac{1}{2}.
\end{align*} Since, $\eta_C(x_i,x_j)$ is either $\beta_C(x_i,x_j)$ and $1-\beta_C(x_i,x_j)$, we have \[
    \frac{1}{2} - \varepsilon/2 \leq \eta_C(x_i,x_j) \leq \frac{1}{2} + \varepsilon/2
\] This allows us to upper and lower bound the distribution by the binomial distributions $X^u=Bin(n,\frac{1}{2}+\varepsilon/2)$, and $X_l=Bin(n,\frac{1}{2}-\varepsilon/2)$. In particular, for all valid $\eta_C$ and values $j$, we have
\[
X_l \leq v(j) \leq X^u
\]
Where random variables $X,X'$ satisfy $X\leq X'$ if $\forall i$, 
\[
Pr[X \geq i] \leq Pr[X' \geq i]
\]
This means that concentration lower bounds on $X_l$ and upper bounds on $X^u$ transfer to $v(j)$. Now we apply Chernoff bounds to $X_l$ and $X^u$:
\begin{align*}
    Pr \left[v(j) > n \left(\frac{1}{2} + \varepsilon \right) \right] \leq Pr\left[X^u > n \left(\frac{1}{2} + \varepsilon \right) \right] \leq e^{-n\frac{(\varepsilon/2)^2}{1+3\varepsilon/2}}\\
    Pr\left[v(j) < n\left(\frac{1}{2} - \varepsilon \right) \right] \leq Pr \left[X_l < n \left(\frac{1}{2} - \varepsilon \right) \right] \leq e^{-n\frac{(\varepsilon/2)^2}{1-\varepsilon}}
\end{align*}
Union bounding over all values of $j$, the probability that there exists a coordinate outside these ranges is bounded by:
\[
Pr \left [ \exists j: \left|v(j)-\frac{n}{2} \right| \geq n\varepsilon \right] \leq 2(n+1)e^{-n\frac{(\varepsilon/2)^2}{1+3\varepsilon/2}}
\]
Thus our test satisfies the first condition.
\\
\\
Proof of (2). Assume the middle $m=[i,j]$ points of $S$ are not a $2g_L^{-1}(\varepsilon)$-cluster. Since our set is ordered, this implies $h^\star(j)-h^\star(i) > 2g_L^{-1}(\varepsilon)$, and further that the middle point must be at least $g_L^{-1}(\varepsilon)$ far from either $i$, or $j$. Since our argument will be symmetric, assume this to be $i$ without loss of generality. Our strategy will be to bound the random variable
\begin{align*}
V(c) &= \sum_{k=1}^c v(k)\\
&= \sum_{k=1}^c \sum_{l\neq k} \id{x_l \widetilde{<} x_k},
\end{align*}
and use an averaging argument to show that there exists a value $1 \leq x \leq c$ s.t. $v(x)<|S'|(1/2-\varepsilon/4)$

We can decompose $V(c)$ into \[V(c) = \sum_{k=1}^c \sum_{\substack{l< c \\ l \neq k}} \id{x_l \widetilde{<} x_k} + \sum_{k=1}^c \sum_{l> c} \id{x_l \widetilde{<} x_k}, 
\] where the first term is always $\binom{c}{2}$. Because each point left of $i$ is at least $g_L^{-1}(\varepsilon) \leq \varepsilon_0$ far away from the right half of $|S'|$, we can bound the second term as for any $v$, \begin{align*}
    \Pr\left[\left (\sum_{k=1}^c \sum_{l> c} \id{x_l \widetilde{<} x_k}\right ) > v\right] &\geq \Pr\left[\sum_{k=1}^c\underbrace{\bin(1/2,m/2)}_{\text{Points up to } |S|/2} + \sum_{k=1}^c\underbrace{\bin(1/2-\varepsilon,c+m/2)}_{\text{Points after } |S|/2} > v\right]\\
    &= \Pr\left[ \bin \left(1/2, \frac{cm}{2} \right)+\bin \left(\frac{1}{2}-\varepsilon,c^2+\frac{cm}{2} \right) > v\right]\\
\end{align*}
%\begin{align*}
%V(c) &\geq \sum\limits_{i=1}^c \underbrace{(i-1)}_{\text{Points less than i}} + \underbrace{\bin(1/2,c-i+m/2)}_{\text{Points up to } |S|/2} + \underbrace{\bin(1/2-\varepsilon,c+m/2)}_{\text{Points after } |S|/2}\\
%&= \frac{c(c-1)}{2} + \bin \left(1/2, \frac{c(c-1)+cm}{2} \right)+\bin \left(\frac{1}{2}-\varepsilon,c^2+\frac{cm}{2} \right)
%\end{align*}
A Chernoff bound gives
\[
\Pr\left[V(c) > c|S'|(1/2-\varepsilon/4)\right] \leq e^{-\frac{\varepsilon^2c(c+m)}{24}}.
\]
%and then use an averaging argument to show at least one point $x$ must have low $v(x)$.
Then an averaging argument shows that \[
\Pr\left[\exists x ~\text{s.t}~ v(x) < |S'|(1/2-\varepsilon/4)\right] \geq \Pr\left[V(c) < c|S'|(1/2-\varepsilon/4)\right] > 1 - e^{-\frac{\varepsilon^2c(c+m)}{24}}
\]
\end{proof}
We are not quite done with our cluster detection algorithm, as our goal will be to test for clusters sublinear in the size of our main sample $S$. Lemma~\ref{test} is enough to show that if such a cluster exists some subset will measure as equitable, but we need to prove that any equitable subset of $S$ contains a cluster. For large enough $c$, this is true with high probability.
\begin{corollary}
\label{cor:equitable}
Let $S$ be a sample of size $n$, and $\varepsilon \leq \frac{g_L(\varepsilon_0)}{4}$. For all subsets $S' \subseteq S$ of size $|S'|=(2c+m)$ satisfying:
\begin{align*}
c &\geq \frac{48\log(n)+\log(1/\delta)}{\varepsilon^2}
\end{align*}
the following guarantees hold:
\begin{enumerate}
    \item If $S$ contains a $g_U^{-1}(\varepsilon/2)$-cluster of size $2c+m$, then at least one $S'$ is $\varepsilon$-equitable with probability at least $1-\delta$.
    \item For all $\varepsilon$-equitable $S'$, $C$, the middle $m$ elements of $S'$ with respect to the true order, is a $2g_L^{-1}(4\varepsilon)$-cluster with probability at least $1-\delta$.
\end{enumerate}
\end{corollary}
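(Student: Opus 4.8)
The plan is to derive both parts as union-bound consequences of Lemma~\ref{test}, which already controls a single fixed subset of size $2c+m$; beyond bookkeeping on the constants, no new probabilistic machinery is required.

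For part~(1) no union bound is needed. Take $S'$ to be the promised $g_U^{-1}(\varepsilon/2)$-cluster $S^\star$ of size $2c+m$ itself. Since $\varepsilon \le g_L(\varepsilon_0)/4 \le g_L(\varepsilon_0)$, Lemma~\ref{test}(1) applies and says $S^\star$ fails to be $\varepsilon$-equitable with probability at most $e^{-\Omega(\varepsilon^2 |S^\star|)} = e^{-\Omega(\varepsilon^2 (2c+m))} \le e^{-\Omega(\varepsilon^2 c)}$, and the hypothesis $\varepsilon^2 c \ge 48\log n + \log(1/\delta)$ (with the leading constant $48$ chosen to swallow the hidden constant in the exponent) makes this at most $\delta$. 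Hence at least one subset of size $2c+m$ --- namely $S^\star$ --- is $\varepsilon$-equitable with probability $\ge 1-\delta$.

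For part~(2) I would apply the contrapositive of Lemma~\ref{test}(2) with its internal parameter set to $4\varepsilon$; this is permitted precisely because $4\varepsilon \le g_L(\varepsilon_0)$ by the hypothesis $\varepsilon \le g_L(\varepsilon_0)/4$. Fix any subset $S' \subseteq S$ of size $2c+m$ whose middle $m$ elements $C$ in the true order --- which is determined by $h^\star$ and is independent of the oracle's noisy answers, so $C$ is well-defined --- do \emph{not} form a $2g_L^{-1}(4\varepsilon)$-cluster; then Lemma~\ref{test}(2) bounds the probability that $S'$ is $\varepsilon$-equitable by $e^{-\Omega(\varepsilon^2 c(2c+m))}$. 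Union bounding this bad event over all $\binom{n}{2c+m} \le n^{2c+m}$ subsets, the total failure probability is at most $\exp\!\big((2c+m)\log n - \Omega(\varepsilon^2 c(2c+m))\big)$, and the assumed lower bound $\varepsilon^2 c \ge 48\log n + \log(1/\delta)$ drives the exponent below $-\log(1/\delta)$, so the union bound is at most $\delta$. On the complementary event every $\varepsilon$-equitable $S'$ has its middle $m$ elements forming a $2g_L^{-1}(4\varepsilon)$-cluster, which is the claim.

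The argument is essentially routine, and I do not anticipate a genuine obstacle; the only point requiring care is the constant chasing in the exponents --- one has to check that the single slack factor (the $48$) simultaneously absorbs the hidden constants from both invocations of Lemma~\ref{test} and still leaves room for the $\binom{n}{2c+m}$ entropy term in part~(2), and one should note that $m$ drops out of the final constraint because $\varepsilon^2 c(2c+m)$ dominates $(2c+m)\log n$ as soon as $\varepsilon^2 c \gtrsim \log n$, independently of $m$.
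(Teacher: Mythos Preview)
Your proposal is correct and follows essentially the same argument as the paper: part~(1) applies Lemma~\ref{test}(1) directly to the assumed cluster, and part~(2) is the contrapositive of Lemma~\ref{test}(2) (with its parameter scaled to $4\varepsilon$) together with a union bound over all $\binom{n}{2c+m}$ subsets. If anything, your write-up is more careful than the paper's about tracking how the constant $48$ absorbs both the hidden exponent constant and the $\binom{n}{2c+m}$ entropy term.
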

\begin{proof}
Both statements follow from applying Lemma~\ref{test} to subsets $S' \subset S$ of size $2c+m$.
\\
\\
Proof of (1). By assumption, $S$ contains at least one subset $S'$ of size $2c+m$ which is a cluster. Applying statement (1) of Lemma~\ref{test} to $S'$ gives that $S'$ is equitable with probability at least $1-\delta$.
\\
\\
Proof of (2). We prove statement (2) by the contrapositive: with probability $1-\delta$, all subsets $S'$ such that $C$ is not a $2g_L^{-1}(4\varepsilon)$-cluster are not equitable. This follows from statement (2) of Lemma~\ref{test} and union bounding over all ${\binom{n}{|S'|}}$ possible subsets.
\end{proof}
We can now explain step 1 of our algorithm, cluster detection, in a bit more detail.
\paragraph{Step 1:} Draw a sample $S \sim D^n_X$, and set $c$ and $m$ corresponding to the desired cluster sizes for testing. For every subset $S' \subset S$ of size $2c+m$, check whether $S'$ is $\varepsilon$-equitable. By the contrapositive of Corollary~\ref{cor:equitable} (1), if no such $S'$ is $\varepsilon$-equitable, then no $g_U^{-1}(\varepsilon/2)$-cluster exists in $S$. Similarly, by Corollary~\ref{cor:equitable} (2) if $S'$ is $\varepsilon$-equitable, then it contains a $2g_L^{-1}(4\varepsilon)$-cluster $C$ of size $m$.
\\
\\
With step 1 out of the way, we will prove that steps 2a and 2b provide reliable learners with good coverage as long as the cluster assumption from step 1 holds. Since our focus has been on clusters, we will begin by showing how to build the learner for step 2a. Recall that to apply the symmetry argument of \cite{kane2017active} for Massart noise, we had to slot a set of extra points. We will adhere to a similar strategy for step 2a in which we slot an extra set of points and find a cluster there rather than in $S$ itself. To find this cluster, our first goal will be to prove that additionally drawn points measure as equitable with $S'$ if and only if they are in the same cluster as $C$.
\begin{lemma}
\label{eq:slotting}
Let $S$ be a $\varepsilon$-equitable set of size $m+2c$ satisfying the conditions of Corollary~\ref{cor:equitable}. Let $C$ be the subset of $S$ which is the $2g_L^{-1}(4\varepsilon)$-cluster specified in Corollary~\ref{cor:equitable}, and let $S'$ be a set of independently drawn points.  Further, choose $\varepsilon,m$ to satisfy:
\begin{align*}
    \varepsilon &\leq  \left ( \frac{g_L\left (\frac{g_U^{-1}\left ( \frac{g_L\left(\frac{\varepsilon_0}{2}\right)}{4}\right )}{2}\right )}{4}\right )\\
    m &\geq \frac{9\log(2|S'|/\delta)}{\np_1^2}.
\end{align*}
The following guarantees hold $\forall x \in S'$ with probability at least $1-\delta$.
\begin{enumerate}
    \item If $C \cup \{x\}$ is a $2g_L^{-1}(4\varepsilon)$-cluster, then $S \cup \{x\}$ is $\np_1=2g_U(2g_L^{-1}(4\varepsilon))$-equitable.
    \item If $S \cup \{x\}$ is $\np_1$-equitable, then $C \cup \{x\}$ is a $g_L^{-1}(2\np_1)+2g_L^{-1}(4\varepsilon)$-cluster.
\end{enumerate}
\end{lemma}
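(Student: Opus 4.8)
The plan is to control, for each point $x\in S'$ slotted into $S$, the random count $v(x)=\#\{z\in S:z\widetilde{<}x\}$ measured inside $S\cup\{x\}$; this is a Poisson binomial with parameters $\eta_C(z,x)$, exactly as in the proof of Lemma~\ref{test}. The first reduction is that adjoining $x$ to $S$ changes the count $v(y)$ of every other $y\in S$ by at most $1$, and since $S$ is $\varepsilon$-equitable while $\np_1=2g_U(2g_L^{-1}(4\varepsilon))\geq 2g_L(g_L^{-1}(4\varepsilon))=8\varepsilon$ (using $g_L\leq g_U$) and $|S|=m+2c\gg 1/\varepsilon$ by the sizing in Corollary~\ref{cor:equitable}, each such $y$ stays well inside the $\np_1$-equitable band of $S\cup\{x\}$. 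So both parts of the lemma reduce to showing $v(x)$ lies (part 1) or does not lie (part 2) in the band $[(\tfrac12-\np_1)|S\cup\{x\}|,(\tfrac12+\np_1)|S\cup\{x\}|]$, and in both cases the final step is a Chernoff bound on $v(x)$ followed by a union bound over the $|S'|$ slotted points; the hypothesis $m\geq 9\log(2|S'|/\delta)/\np_1^2$ is precisely what drives the Chernoff fluctuation of $v(x)$ below $\np_1|S\cup\{x\}|$. I would also record up front the short chain (using $g_L\leq g_U$, monotonicity, and the stated bound on $\varepsilon$) showing $2g_L^{-1}(4\varepsilon)\leq\varepsilon_0/2$ and $g_L^{-1}(2\np_1)\leq\varepsilon_0/2$, so that the GTNC bounds are always applicable in what follows.

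For part (1), assume $C\cup\{x\}$ is a $2g_L^{-1}(4\varepsilon)$-cluster and write $S=L\cup C\cup R$ in true ($h^\star$-) order with $|L|=|R|=c$, $|C|=m$. Since $x$ is within $2g_L^{-1}(4\varepsilon)\leq\varepsilon_0$ of every element of $C$, the GTNC upper bound gives $\eta_C(z,x)\in[\tfrac12-g_U(2g_L^{-1}(4\varepsilon)),\tfrac12+g_U(2g_L^{-1}(4\varepsilon))]=[\tfrac12-\np_1/2,\tfrac12+\np_1/2]$ for $z\in C$, so the $m$ cluster comparisons contribute $\tfrac{m}{2}\pm\tfrac{m\np_1}{2}$ to $\mathbb{E}[v(x)]$, while the $2c$ comparisons of $x$ against $L\cup R$ each lie in $[0,1]$ and hence contribute a total within $c$ of $c$. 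Thus $|\mathbb{E}[v(x)]-\tfrac12|S\cup\{x\}||\leq c+\tfrac{m\np_1}{2}+\tfrac12$, which is below $\np_1|S\cup\{x\}|$ in the regime where this lemma is invoked (Algorithm~\ref{algo:gtnc-algo}, where $m$ is polynomially larger than the equitability block size $c$), and a Chernoff bound on the Poisson binomial then places $v(x)$ inside the $\np_1$-band with probability $1-e^{-\Omega(\np_1^2 m)}\geq 1-\delta/(2|S'|)$.

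For part (2) I argue the contrapositive. If $C\cup\{x\}$ is not a $(g_L^{-1}(2\np_1)+2g_L^{-1}(4\varepsilon))$-cluster, then since $C$ already has $h^\star$-diameter at most $2g_L^{-1}(4\varepsilon)$, the point $x$ must sit more than $g_L^{-1}(2\np_1)\leq\varepsilon_0$ outside the $h^\star$-range of $C$; by symmetry assume $h^\star(x)>\max_{z\in C}h^\star(z)+g_L^{-1}(2\np_1)$ (the case below $C$ is mirror-image and throws $v(x)$ below the band instead). Then every $z\in L\cup C$ is truly below $x$ by at least $g_L^{-1}(2\np_1)$, so the GTNC lower bound gives $\eta_C(z,x)=\beta_C(z,x)\geq\tfrac12+2\np_1$, whence $\mathbb{E}[v(x)]\geq(c+m)(\tfrac12+2\np_1)$, which exceeds the upper equitability threshold $(\tfrac12+\np_1)|S\cup\{x\}|$ by $\np_1(m-1)-\tfrac{c+1}{2}\approx\np_1 m-O(c)$. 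Once more in the regime $m\gg c/\np_1$ this gap dominates the $O(\np_1\sqrt{m\log(|S'|/\delta)})$ Chernoff fluctuation (the stated lower bound on $m$), so $v(x)$ overshoots the band with probability $1-\delta/(2|S'|)$ and $S\cup\{x\}$ fails to be $\np_1$-equitable; union bounding over $x\in S'$ and the two directions finishes the proof.

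The main obstacle is exactly this quantitative balancing. Beyond the $\varepsilon_0$ scale the GTNC model supplies \emph{no} upper bound on the correctness probabilities, so the $2c$ comparisons of a slotted point $x$ against the elements outside the cluster $C$ inject an essentially adversarial bias of magnitude up to $\Theta(c)$ into $\mathbb{E}[v(x)]$, which must be absorbed into the $\Theta(\np_1 m)$ of slack the equitability threshold affords; this is the reason the cluster size $m$ (taken $=d\log(d+1)\cdot n$ in Algorithm~\ref{algo:gtnc-algo}) is chosen polynomially larger than the equitability block size $c=\tilde\Theta(\varepsilon^{-2})$. Pinning down this inequality together with the exact composed cluster radius $g_L^{-1}(2\np_1)+2g_L^{-1}(4\varepsilon)$ with sharp enough constants, uniformly over arbitrary monotone $g_L\leq g_U$ (so that all arguments handed to $g_L^{-1}$ and $g_U^{-1}$ stay within their valid ranges), is the delicate part and is what forces the precise constraints on $\varepsilon$ and $m$ in the statement.
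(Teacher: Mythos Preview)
Your approach is essentially the same as the paper's: both proofs control the Poisson binomial $v(x)$ via GTNC bounds on the individual $\eta_C(z,x)$, use Chernoff to show $v(x)$ lands inside (part 1) or outside (part 2, via the contrapositive) the $\np_1$-band, and union bound over $S'$. The paper writes the stochastic domination as $v(x)\le \mathrm{Bin}(1/2+\np_1/2,m+c)+c$ (resp.\ $\mathrm{Bin}(1/2-2\np_1,m+c)+c$) and applies Chernoff directly, whereas you compute $\mathbb E[v(x)]$ and then apply Chernoff; this is a cosmetic difference.

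Two small remarks. First, you are actually more careful than the paper in one place: you explicitly check that the counts $v(y)$ for $y\in S$ remain in the $\np_1$-band after adjoining $x$, using $\np_1\ge 8\varepsilon$; the paper silently restricts attention to $v(x)$. Second, the condition you flag, namely $m\gg c/\np_1$ (so that the adversarial $\pm c$ bias from $L\cup R$ is absorbed into the $\Theta(\np_1 m)$ slack), is indeed needed in the paper's proof as well---their Chernoff step ``$\Pr[v(x)>|S|(1/2+\np_1)]\le e^{-\np_1^2 m/9}$'' only goes through when the threshold exceeds the mean of the dominating binomial, which requires exactly this regime. The paper does not state it in the lemma but relies on it in the application (in the proof of Theorem~\ref{TNC}, $m=c_1^{1/2}n^{3/4}$ is polynomial in $n$ while $c=\tilde O(\varepsilon_T^{-2}\log n)$), so your explicit acknowledgment of this is appropriate rather than a gap.
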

\begin{proof}
Proof of (1). Assume that $C \cup \{x\}$ is a $2g_L^{-1}(4\varepsilon)$-cluster. Note that since our assumption on $\varepsilon$ implies $\varepsilon \leq \frac{g_L(\varepsilon_0/2)}{4}$, we have:
\[
2g_L^{-1}(4\varepsilon) \leq \varepsilon_0.
\]
Then GTNC allows us to bound $\eta_C(x,y)$ for all $y \in C$:
\[
|\eta_C(x,y) - 1/2| \leq g_U(2g_L^{-1}(4\varepsilon))=\np_1/2.
\]
To show that $v(x) \leq |S|(1/2 + \np_1)$, we assume the worst case -- that all elements of $S$ are smaller than $x$. Since $C \cup \{x\}$ is a cluster, we can bound $v(x)$ by the following Binomial:
\[
v(x) \leq Bin(1/2+\np_1/2,m+c)+c.
\]
The probability that $v(x) > |S|(1/2+\np_1)$ is then given by a Chernoff bound as
\[
Pr[v(x) > |S|(1/2+\np_1)] \leq e^{-\frac{\np_1^2m}{9}}.
\]
We can bound the probability that $v(x) < |S|(1/2-\np_1)$ by rehashing the same argument for $|S|-v(x)$, the number of elements $x$ is less than. Thus the probability that $C \cup \{x\}$ is not $\np_1$-equitable is
\[
Pr[C \cup \{x\} \ \text{is not} \ \np_1 \text{-equitable}] \leq 2e^{-\frac{\np_1^2m}{9}}.
\]
Union bounding over $S'$ completes the proof.
\\
\\
Proof of (2). Similar to the proof of statement (2) of Corollary~\ref{cor:equitable}, we prove the contrapositive: that all $C \cup \{x\}$ which are not $g_L^{-1}(2\np_1)+2g_L^{-1}(4\varepsilon)$-clusters are not $\lambda_1$-equitable with high probability. Assume $C \cup \{x\}$ is not a $g_L^{-1}(2\np_1)+2g_L^{-1}(4\varepsilon)$-cluster. Since $C$ is a $2g_L^{-1}(4\varepsilon)$-cluster, $\forall y \in C$ we have 
\[
|h^\star(x)-h^\star(y)|> g_L^{-1}(2\np_1)
\]
Since we have assumed $g_L^{-1}(2\np_1) < \varepsilon_0$, GTNC gives $\forall y \in C$:
\[
\left | \eta_C(x,y) - 1/2 \right | > 2\np_1.
\]
Since $C \cup \{x\}$ is not a cluster, it must either be the case that $\forall y \in C, x>y$, or $\forall y \in C, x<y$. Assume the latter without loss of generality. We can bound $v(x)$ by a Binomial:
\[
v(x) \leq Bin(1/2-2\np_1,m+c)+c.
\]
A Chernoff bound then gives:
\[
Pr[v(x) > |S|(1/2-\np_1)] \leq e^{-3\np_1^2m}.
\]
Union bounding over $S'$ proves the contrapositive, completing the proof.
\end{proof}
Knowing that additionally drawn points which measure as equitable with $S$ come from a cluster, we can feed them into an inference LP based on this assumption. However, to infer remaining points in the instance space, the LP must also know the label of the cluster we feed in. Since we are assuming our points have some margin $\gamma$, we can solve for the label of the cluster with high probability by majority vote.
\begin{lemma}[Cluster Labeling]
\label{cluster:label}
Assume that a set $S$, $|S|\geq \frac{2\log(1/\delta)}{g_L(\gamma)^2}$, consists entirely of one label and has margin $\gamma$ with respect to the decision boundary. The probability that this true label differs from the majority label measured by the oracle $Q_L$ is at most $\delta$.
\end{lemma}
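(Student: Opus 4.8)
The plan is a one-shot Chernoff argument. First I would translate the margin assumption into a per-point correctness bound via the Generalized Tsybakov Low Noise Condition: every $x\in S$ has $|h^\star(x)|\ge\gamma$, so by monotonicity of $g_L$ together with the convention $g_L(x)=g_L(\varepsilon_0)$ for $x>\varepsilon_0$, Equation \eqref{eqn:gtnc-label-1} (or its ``else'' branch when $\gamma>\varepsilon_0$) gives $\beta_L(x)\ge \tfrac12 + g_L(\gamma)$ for every $x\in S$. Since $S$ consists entirely of one label, write $\sigma=\sign(h^\star(x))$ for the common true label; then each measured label $Q_L(x)$ equals $\sigma$ with probability at least $\tfrac12+g_L(\gamma)$, and the realizations across the distinct points of $S$ are independent.

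Next I would set $N=\sum_{x\in S}\id{Q_L(x)=\sigma}$, so $\E[N]\ge |S|\bigl(\tfrac12+g_L(\gamma)\bigr)$, and observe that the majority label over $S$ disagrees with $\sigma$ only if $N\le |S|/2$, i.e.\ only if $N$ falls below its mean by at least $|S|\,g_L(\gamma)$. A standard Chernoff (or Hoeffding) bound on a sum of independent Bernoulli variables then yields $\Pr[N\le |S|/2]\le \exp\!\bigl(-\Omega(|S|\,g_L(\gamma)^2)\bigr)$, and substituting $|S|\ge \tfrac{2\log(1/\delta)}{g_L(\gamma)^2}$ drives this to at most $\delta$. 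This sample-size bound is in fact comfortably larger than what a tight concentration inequality requires, so no sharpness is needed.

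There is no genuine obstacle here — this is a warm-up lemma whose only role is to recover the label of a certified cluster for the inference LP in Step 2a. The two points that merit a moment of care are (i) checking that the margin-to-noise translation is uniform regardless of whether $\gamma\le\varepsilon_0$ or $\gamma>\varepsilon_0$, which the monotonicity of $g_L$ and the stated convention handle cleanly, and (ii) noting that although $Q_L$ is deterministic on repeated queries to the \emph{same} point, the noise across \emph{distinct} points is independent, which is exactly what legitimizes the Chernoff step.
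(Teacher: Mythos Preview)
Your proposal is correct and matches the paper's approach exactly: the paper's proof is a single line stating that the result follows from a Chernoff bound applied to the fact that each point has at least a $\tfrac12 + g_L(\gamma)$ probability of being labeled correctly. Your write-up is more careful than the paper's (handling the $\gamma\lessgtr\varepsilon_0$ cases and the independence-across-distinct-points point explicitly), but the underlying argument is identical.
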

\begin{proof}
This follows from applying a Chernoff bound to the fact that each point has at least a $1/2 + g_L(\gamma)$ probability of being correct.
\end{proof}
Finally, we need to show that the LP based upon the structure and label of clustered points has good coverage. We do this by an argument inspired by inference dimension: that given a $\gamma/d$-cluster $C$ of large enough size, there exists a point in $x \in C$ such that the knowledge that $C-\{x\}$ is a cluster is sufficient to infer the label of $x$. This will allow us to use the symmetry argument of \cite{kane2017active} to show that step 2a has good coverage.
\begin{lemma}
\label{cluster:id}
Let $X$ be a set, and $H_{d,\gamma}$ the set of hyperplanes with margin $\gamma$ with respect to $X.$ Consider a query set $Q$ containing a \textit{cluster query} along with the standard label queries. Given a subset $S$, a cluster-query returns 1 if $S$ is a $\gamma/d$-cluster, and $0$ otherwise. Then for any $\gamma/d$-cluster $C \subseteq X$ of size at least $24d\log(d+1)$:
\[
\forall h \in H_{d,\gamma}, \exists x \in C \ \text{s.t.} \ Q(C \setminus \{x\}) \yields{h} x
\]
\end{lemma}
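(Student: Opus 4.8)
The plan is to split the statement into an essentially trivial ``labelling'' part and a purely geometric core, and to spend all the effort on the latter. First, note that the label of every $x\in C$ is already pinned down by $C$ alone: since $C$ is a $\gamma/d$-cluster with respect to $h$ while $h\in H_{d,\gamma}$ forces $|h(z)|\ge\gamma$ for all $z$, and since $\gamma/d<2\gamma$, no two points of $C$ can have opposite sign under $h$; so all of $C$ carries one label, say $\sign(h(z))=+1$ with $h(z)\in[m_0,m_0+\gamma/d]$, $m_0=\min_{z\in C}h(z)\ge\gamma$. Hence proving $Q(C\setminus\{x\})\yields{h} x$ for some $x$ is equivalent to: there is $x\in C$ such that every $h'=(w',b')\in H_{d,\gamma}$ consistent with the answers on $C\setminus\{x\}$ — namely $h'(z)\ge\gamma$ for all $z\in C\setminus\{x\}$ (label $+1$ together with margin $\gamma$) and $\max_{z,z'\in C\setminus\{x\}}|h'(z)-h'(z')|\le\gamma/d$ (the cluster query on $C\setminus\{x\}$, whose ``yes'' answer is inherited from $C$) — also satisfies $h'(x)>0$.

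Suppose this fails, so for each $x\in C$ there is a bad witness $h'_x=(w'_x,b'_x)$, consistent on $C\setminus\{x\}$ yet with $h'_x(x)\le-\gamma$. Pass to the affine map $f_x:=h-h'_x$. Combining the $\gamma/d$-cluster bound on $C$ under $h$, the $\gamma/d$-cluster bound together with $h'_x\ge\gamma$ on $C\setminus\{x\}$, and $h'_x(x)\le-\gamma$ yields three facts: $f_x$ oscillates by at most $2\gamma/d$ over $C\setminus\{x\}$; $f_x(x)\ge f_x(z)+(2\gamma-\gamma/d)$ for every $z\in C\setminus\{x\}$; and $\nabla f_x=w-w'_x$ has norm at most $2$. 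Together with ``$C$ lies in a slab of width $\gamma/d$ in direction $w$'', the lemma becomes the statement that no such family $\{f_x\}_{x\in C}$ of affine functions can exist once $|C|\ge 24d\log(d+1)$. (Under the normalization in which $X$ is bounded, the jump $f_x(x)-f_x(z)\ge 2\gamma-\gamma/d$ forces the component of $w'_x$ orthogonal to $w$ to have length bounded below by $\Omega(\gamma/\mathrm{diam}(X))$; this is what rules out the trivial escape $w'_x\approx\pm w$, and it is used below.)

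For the geometric core I would project everything onto $w^{\perp}\cong\R^{d-1}$, which the thin-slab hypothesis makes lossless up to $O(\gamma/d)$: each bad witness then says the projection $x^{\perp}$ is cut off from $C^{\perp}\setminus\{x^{\perp}\}$ by a cap of depth $\Omega(\gamma)$ in some unit direction $u_x\in\R^{d-1}$, while $C^{\perp}\setminus\{x^{\perp}\}$ lies in a slab of width $O(\gamma/d)$ in that \emph{same} direction $u_x$. Now run a greedy/Gram--Schmidt selection of bad points $x_1,x_2,\dots$ whose directions $u_{x_1},u_{x_2},\dots$ stay macroscopically long after projecting off the span of the earlier ones; a volume (equivalently, $\varepsilon$-net) argument shows one can extract $\Theta(d)$ such directions as soon as $|C|=\Theta(d\log d)$, the $\log d$ being precisely the loss in upgrading ``long residual component'' to ``well-conditioned near-orthonormal system''. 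For those $\Theta(d)$ indices, the shared points $C\setminus\{x_1,\dots,x_{\Theta(d)}\}$ are simultaneously thin in $\Theta(d)$ near-independent directions, hence confined to a box of diameter $o(\gamma)$; but any unselected bad point $x$ lies in that box \emph{together with} $C\setminus\{x\}$, directly contradicting $f_x(x)-f_x(z)\ge 2\gamma-\gamma/d$ (i.e. the $\Omega(\gamma)$-deep cap). Propagating constants through the selection is what yields the explicit threshold $24d\log(d+1)$. I expect the one genuinely delicate step to be this geometric collapse: one must control the conditioning of the selected $u_{x_i}$ \emph{and} the a-priori-unknown slab widths $O(\gamma/d)/\beta_{x_i}$ at the same time, so that ``thin in many near-independent directions'' really does squeeze the common points below the cap depth. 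Everything outside that step — the labelling observation, the passage to $f_x$, and the eventual appeal to Lemma~\ref{cluster:label} and to the symmetry argument of \cite{kane2017active} that the algorithm actually uses — is routine.
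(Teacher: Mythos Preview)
Your approach is genuinely different from the paper's, and the core step you flag as ``delicate'' is in fact a real gap.

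The paper does not work with bad witnesses $h'_x$ and direction selection at all. Instead it gives a clean LP characterization of inference: the labelled $\gamma/d$-cluster $C=\{x_1,\ldots,x_n\}$ infers $y$ iff one can write $y=\sum a_i x_i$ with $\sum a_i=1$ and $\sum|a_i|\le d+1$. If this LP is infeasible, LP duality produces an affine functional $L$ with $L(y)-L(x_{\max})>\tfrac{d}{2}(L(x_{\max})-L(x_{\min}))$, which says $y$ is far outside the convex hull of $C$ in the $L$-direction. A cone construction then shows that adjoining $y$ multiplies $\vol\ConvHull(C)$ by at least $e^2/(e^2-1)$. Iterating from the largest simplex and comparing with the trivial triangulation bound $\vol\ConvHull(C)\le V_{\max}\binom{n}{d}$ forces $(e^2/(e^2-1))^{n-d-1}\le\binom{n}{d}$, hence $n\le 24d\log(d+1)$. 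Everything is intrinsic to $\ConvHull(C)$; no projection to $w^\perp$, no bound on $\mathrm{diam}(X)$, no direction selection.

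Your route, by contrast, hinges on two unproved claims. First, that from the family $\{u_x\}_{x\in C}$ one can greedily extract $\Theta(d)$ directions with macroscopic residuals once $|C|=\Theta(d\log d)$: but the $u_x$ are not generic unit vectors---they are determined by the bad witnesses, and nothing you have written rules out their concentrating near a low-dimensional subspace in a way that defeats the stated count. Second, even granting near-orthogonal $u_{x_1},\ldots,u_{x_k}$, the slab widths you must intersect are $(2\gamma/d)/\|u_{x_i}\|$, not $O(\gamma/d)$; the box diameter therefore scales with $\max_i \|u_{x_i}\|^{-1}$, while the cap depth $D_x$ for the contradicting point scales with $\|u_x\|^{-1}$. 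You need the former to be smaller than the latter, which requires simultaneous control of the conditioning of the selected system \emph{and} the relative sizes of all the $\|u_{x_i}\|$. You note this is the crux but give no mechanism for it, and ``propagating constants'' to the specific threshold $24d\log(d+1)$ looks optimistic. The paper's volume argument sidesteps both issues entirely.
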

\begin{proof}
A $\gamma/d$-cluster $C=\{x_1,\ldots,x_n\}$ infers a point y if there is a solution to the following system of linear equations:
\begin{align}
    \sum a_i &= 1\nonumber\\
    \sum a_i x_i &= y \label{eq:y}\\
    \sum |a_i| &\leq d + 1 \label{eq:d}
\end{align}
Informally, because $C$ is a $\gamma/d$-cluster and all points have margin $\gamma$, it infers the labels not just of points in its convex hull, but in a $d$ times expansion of the hull. We will show that a large enough cluster $C$ must contain some point $y$ s.t. $C \setminus \{y\}$ infers $y$. Our strategy relies on the fact that if $C$ does not infer $y$, adding $y$ to $C$ expands the volume of its convex hull by a multiplicative factor. Since we can upper bound the volume of the convex hull of $C$ by the volume of the largest simplex times the size of a decomposition of $C$ into simplices (a triangulation), this multiplicative volume expansion contradicts the upper bound for large enough $C$.
\\
\\
In order to prove that adding a point multiplicatively expands the volume of the convex hull, we will need to prove the existence of a certain affine linear function. In particular, if $C$ and $y$ are such that this system of equations has no solution, then there exists an affine function $L$ such that:
\begin{align}\label{claim1}
    L(y) - L(x_{\max}) > \frac{d}{2} (L(x_{\max}) - L(x_{\min})) \geq 0,
\end{align}
where $x_{max}$ is the $\underset{i}{\text{argmax}} \ L(x_i)$, and $x_{min}$ is the corresponding argmin.
\\
\\
\begin{figure}\centering
	\includegraphics[width=0.5\linewidth]{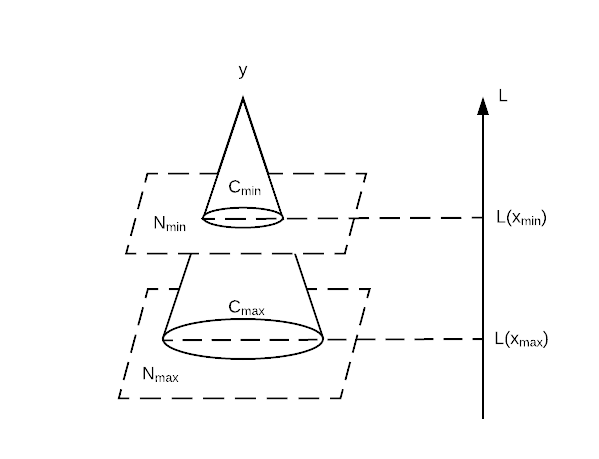}
	\caption{The above image illustrates the construction of sets $C_{\min}$ and $C_{\max}$ which sandwich the cluster $C$.}
	\label{fig:cluster}
\end{figure}
\textit{Proof of \eqref{claim1}:} Since we have assumed the system has no solution, there must be a positive real linear combination of the inequalities and real linear combination of the equalities that sum to the contradiction $1 \leq 0$ by LP-duality. Since $a_i,x_i,$ and $y$ do not appear in this contradiction, the linear combinations of Equation~\eqref{eq:y} and \eqref{eq:d} must cancel. To see this explicitly, let the linear combination of \ref{eq:y} be denoted $T$, then the equality becomes:
\[
\sum a_i T(x_i) = T(y).
\]
Note that Equation \eqref{eq:d} in a truly linear form is a set of $2^d$ equations $\sum a_ie_i$ for $e \in \{-1,1\}^d$. The positive real linear combination of these terms is then of the form
\[
\sum a_i b_i,
\]
for some $b \in \mathbb{R}^d$.  Since these two sums must cancel, we get that the $b_i$ are in fact $-T(x_i)$. Summing the two equations then gives:
\begin{align*}
&\sum a_i T(x_i) + \sum a_i b_i = 0 \leq T(y) + (d+1)\max\limits_i|T(x_i)|.
\end{align*}
Now define $L=-T$, which remains an affine linear function. This sign only affects the left-hand term, and thus we get:
\begin{align*}
L(y) &\geq (d+1)\max\limits_i|L(x_i)|\\
\implies L(y) - L(x_{\max}) &\geq (d+1)\max\limits_i|L(x_i)| - L(x_{\max})\\
\implies L(y) - L(x_{\max}) &\geq d \max\limits_i|L(x_i)|.
\end{align*}
Noting that $L(x_{max})-L(x_{min})$ is at most $2\max\limits_i|L(x_i)|$ proves the claim.
\\
\\
Using the function $L$ we can show how $C$ expands in volume when adding an un-inferred point:
\begin{align}\label{claim2}
\frac{\vol \ConvHull(C,y)}{\vol \ConvHull(C)} \geq \frac{e^2}{e^2-1}.
\end{align}
\textit{Proof of \eqref{claim2}:} Our strategy will be to sandwich the convex hull of $C$ in the difference of two cones defined by $L$ with apex $y$. For arbitrary points $x \in C$, let $h(x,y)$ be the line passing through $x$ and $y$, $N(x_{\min})$ be the plane given by $L(x) = L(x_{\min})$ and $N(x_{\max})$ be the plane given by $L(x) = L(x_{\max})$. The cone which does not contain $C$ is then defined by its apex $y$ and base $C_{max}$:
\[
C_{max} = \{x' : x'\in h(x,y) \cap N(x_{\max}) \text{ for } x \in \ConvHull(C)\}
\]
Likewise, we define the cone that contains both Cone($C_{max},y$) and ConvHull($C$) as the cone with apex y and base $C_{min}$:
\[
C_{min} = \{x' : x'\in h(x,y) \cap N(x_{\min}) \text{ for } x \in \ConvHull(C) \}.
\]
We refer to these cones respectively as Cone($C_{max},y$) and Cone($C_{min},y$). Note that $C_{max}$ is similar to $C_{min}$ and that Equation~\eqref{claim1} bounds the ratio in volume of these cones:
\[
\frac{\vol \cone(C_{min},y)}{\vol \cone(C_{max},y)} = \left(\frac{L(y) - L(x_{\min})}{L(y) - L(x_{\max})}\right)^d = \left(1 + \frac{L(x_{\max}) - L(x_{\min})}{L(y) - L(x_{\max})}\right)^d \leq \left(1 + \frac{2}{d}\right)^d \leq \euler^2
\] 
Further, since $C$ is sandwiched between the two cones we have $\ConvHull(C) \subset \cone(C_{min},y) - \cone(C_2,y)$, and can bound the ratio in volume between the Convex Hull of $C$ and the smaller cone:
\begin{align*}
    \vol \ConvHull(C) &\leq \vol \cone(C_{min},y) - \vol \cone(C_{max},y)\\
    \frac{\vol \ConvHull(C)}{\vol \cone(C_{max},y)} &\leq \frac{\vol \cone(C_{min},y)}{\vol \cone(C_{max},y)} - 1\\ 
    &\leq \euler^2 - 1
\end{align*}
Finally, because the Convex Hull of $C \cup \{y\}$ contains both Cone($C_{max}, y$) and ConvHull($C$), this allows us to lower bound the expansion factor of including $y$ into $C$:
\begin{align*}
    \frac{\vol \ConvHull(C,y)}{\vol \ConvHull(C)}  &\geq \frac{\vol \cone(C_{max},y) + \vol \ConvHull(C)}{\vol \ConvHull(C)}\\
    &= \frac{\vol \cone(C_{max},y)}{\vol \ConvHull(C)} + 1\\
    &\geq \frac{1}{\euler^2-1} + 1\\
    &\geq \frac{\euler^2}{\euler^2-1}
\end{align*}
\\
\\
Using Equation \eqref{claim2}, we can build our contradiction on the volume of the convex hull for large enough $C$. For analysis, we denote the size of $C$ by $n$. To start, we note a simple upper bound on the volume of the convex hull of any $n$ point set $C \in \mathbb{R}^d$:
\[
\vol \ConvHull(C) \leq V_{max}{\binom{n}{d}},
\]
where $V_{max}$ is the volume of the largest simplex with vertices in $C$. This follows from choosing any vertex $x \in C$ and noting that choosing every simplex which contains $x$ is a triangulation of ConvHull($C$). While this triangulation is certainly not optimal, it is sufficient for our purposes.
\\
\\
Since there exists some $h$ s.t. no point in $C$ can be inferred from the rest, every point added to $C$ after the largest simplex multiplies the volume by $\frac{e^2}{e^2-1}$. This gives a lower bound on the volume of ConvHull($C$) of:
\[
\vol \ConvHull(C) \geq V_{max}\left (\frac{e}{e-1} \right)^{n-d-1}
\]
Together, these bounds give the equation:
\[
\left (\frac{e^2}{e^2-1} \right)^{n-d-1} \leq {\binom{n}{d}}
\]
Setting $n>24d\log(d+1)$ gives a contradiction. 
\end{proof}
\noindent With Lemmas \ref{eq:slotting}, \ref{cluster:label}, and \ref{cluster:id} in hand, we can now give a more detailed explanation of step 2a:
\paragraph{Step 2a (high noise):} It is assumed by step 1 that we have detected an $\varepsilon$-equitable subset $S'$. Draw an additional set of points $\{x_1,\ldots,x_m\}$, and for each point test whether $S' \cup \{x_i\}$ is $\np_1$-equitable. By Lemma~\ref{eq:slotting}, the points which measure as equitable with $S'$ make up a cluster. Using Lemma~\ref{cluster:label} to label these points, build an LP based on the labels and cluster structure. Applying Lemma~\ref{cluster:id} and the symmetry argument of \cite{kane2017active} shows that this LP has good coverage.
\\
\\
It is left to show that step 2b has good coverage. Step 2b will follow a similar strategy to the Massart case, using points well-separated in an MLE ordering to build our LP. However, since we are still in the regime of unbounded error, we will need to exploit the fact that our sample has no large clusters to show that this LP infers correctly with high probability. Notice that a sample with no clusters consists mostly of pairs of points whose comparisons are bounded in error. With this in mind, we modify the pointwise movement bounds of \cite{braverman2009sorting} to differentiate between pairs of points with bounded and unbounded comparison error.
\begin{definition}
Let $S$ be a set with a noisy comparison oracle $Q_C$. We call a comparison between points $x,y \in S$ $\np$-far if the probability that $Q_C$ returns the correct comparison is at least $1/2+\np$. Otherwise we call the comparison $\np$-close.
\end{definition}
To prove a point-wise movement bound, we will follow exactly the strategy of \cite{braverman2009sorting}. First, we prove that it is unlikely that an ordering which disagrees on many far comparisons from the true order is an MLE ordering. Second, we use this to upper bound the total number of wrong far comparisons in any MLE order with high probability. Finally, we prove that as long as no large cluster exists, a single point cannot move too far without contradicting the upper bound on total far errors.
\begin{lemma}
\label{BM:8}
Let $\sigma$ be a permutation which differs from the true order on $\sigma_c$ $\np$-close comparisons, and $\sigma_f$ $\np$-far comparisons. The probability that $\sigma$ is an MLE order is
\[
Pr[\sigma \ \text{is MLE}] \leq e^{-\frac{\np^2\sigma_f^2}{2\sigma_f+\sigma_c}}
\]
\end{lemma}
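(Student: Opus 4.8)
The plan is to adapt Braverman and Mossel's analogous bound, refining their argument to account for the new $\np$-far versus $\np$-close dichotomy. Recall that an MLE order minimizes the number of discrepancies with the observed comparisons $Q_C(S)$; in particular it has no more discrepancies than the true order $\pi$. So it suffices to upper bound the probability of the weaker event $\mathrm{disc}(\sigma)\le\mathrm{disc}(\pi)$, where $\mathrm{disc}(\tau)$ denotes the number of pairs on which the order $\tau$ disagrees with $Q_C$.

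The first step is a telescoping identity: $\mathrm{disc}(\sigma)-\mathrm{disc}(\pi)$ only receives contributions from the set $D(\sigma)$ of pairs on which $\sigma$ and $\pi$ themselves disagree. For such a pair $\{i,j\}$ — say $i$ precedes $j$ in $\pi$ but $j$ precedes $i$ in $\sigma$ — let $X_{ij}=1$ if $Q_C$ reports $i$ before $j$ (agreeing with $\pi$) and $0$ otherwise; then this pair contributes $X_{ij}$ to $\mathrm{disc}(\sigma)$ and $1-X_{ij}$ to $\mathrm{disc}(\pi)$, for a net contribution of $2X_{ij}-1\in\{-1,+1\}$, whereas pairs on which $\sigma$ and $\pi$ agree contribute $0$ to the difference. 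Therefore
\[
\mathrm{disc}(\sigma)-\mathrm{disc}(\pi)=\sum_{\{i,j\}\in D(\sigma)}\bigl(2X_{ij}-1\bigr).
\]
The pairs in $D(\sigma)$ are distinct, so the oracle's answers on them are independent, making the summands independent $\pm1$ variables. Since $g_L\ge 0$ forces $\beta_C\ge \frac{1}{2}$ everywhere, each summand has nonnegative mean, and on the $\sigma_f$ many $\np$-far pairs the mean is at least $2\np$ by definition; hence $\mathbb{E}\bigl[\mathrm{disc}(\sigma)-\mathrm{disc}(\pi)\bigr]\ge 2\np\,\sigma_f$.

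It remains to bound the lower tail: the probability that a sum of $\sigma_f+\sigma_c$ independent $[-1,1]$ variables falls at least $2\np\sigma_f$ below its mean. A Hoeffding bound immediately yields $\exp\!\bigl(-2\np^2\sigma_f^2/(\sigma_f+\sigma_c)\bigr)$, and since $3\sigma_f+\sigma_c\ge 0$ gives $2/(\sigma_f+\sigma_c)\ge 1/(2\sigma_f+\sigma_c)$, this is already at least as strong as the claimed $e^{-\np^2\sigma_f^2/(2\sigma_f+\sigma_c)}$ (the claim being vacuous when $\sigma_f=0$); a more careful Bernstein-type estimate as in \cite{braverman2009sorting} recovers the stated constant exactly. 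I do not anticipate a genuine obstacle here — the content is in the bookkeeping: verifying the telescoping identity, checking that the relevant comparisons lie on distinct (hence independent) pairs, and observing that $\np$-close comparisons only help, since their contribution has mean $\ge 0$ rather than negative. No union bound is needed since $\sigma$ is a single fixed permutation; the union over all orders with many wrong far comparisons is precisely what the following lemma addresses.
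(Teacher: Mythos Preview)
Your proposal is correct and takes essentially the same approach as the paper: both observe that for $\sigma$ to be an MLE order it must match the observed comparisons on at least half of the pairs where $\sigma$ and the true order disagree, then bound this tail event via a concentration inequality on a sum of independent (Poisson) Bernoulli variables. Your telescoping identity is just an explicit rewriting of the paper's one-line claim that ``$\sigma$ must beat the true order on half or more of the comparisons on which they differ,'' and your Hoeffding bound in fact yields a slightly sharper constant than the paper's stated Chernoff bound, as you note.
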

\begin{proof}
To be an MLE order, $\sigma$ must beat the true order on half or more of the comparisons on which they differ. We can bound this probability by the Poisson Binomial:
\[
Pr\left[Bin(1/2,\sigma_c) + Bin(1/2+\np,\sigma_f) \leq \frac{\sigma_c+\sigma_f}{2}\right].
\]
A Chernoff bound then gives the desired result.
\end{proof}
Using this upper bound, we show that any order which disagrees with the true ordering on more than $\tilde{\Omega}(n^{3/2})$ comparisons is not an MLE ordering with high probability. 
\begin{lemma}[Total Far Movement]
\label{avg:far}
The probability that an MLE order disagrees with the identity on $c_1n^{3/2}$ $\np$-far comparisons, where
\[
c_1 = \sqrt{\frac{\log(1/\delta)+n\log(n)}{\np^2 n}},
\]
is $\leq \delta$
\end{lemma}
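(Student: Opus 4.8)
The plan is to combine Lemma~\ref{BM:8} with a union bound over all permutations that disagree with the true (identity) order on many $\np$-far comparisons. Set the threshold $t = c_1 n^{3/2}$, and fix any permutation $\sigma$ differing from the identity on $\sigma_f \ge t$ $\np$-far comparisons and on some number $\sigma_c \ge 0$ of $\np$-close comparisons. Crucially, $\sigma_f$ and $\sigma_c$ depend only on $\sigma$ and the true order, not on the oracle's answers, so Lemma~\ref{BM:8} applies verbatim to each such $\sigma$. Since there are only $\binom{n}{2}$ pairs in all, $2\sigma_f + \sigma_c \le 2(\sigma_f + \sigma_c) \le n(n-1) < n^2$, and hence
\[
\Pr[\sigma \text{ is an MLE order}] \;\le\; e^{-\np^2 \sigma_f^2/(2\sigma_f+\sigma_c)} \;\le\; e^{-\np^2 t^2/n^2}.
\]

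Next I would substitute the definition of $c_1$: by construction $\np^2 c_1^2 n = \log(1/\delta) + n\log n$, and since $t^2 = c_1^2 n^3$ this gives $\np^2 t^2/n^2 = \np^2 c_1^2 n = \log(1/\delta) + n\log n$, so the displayed bound becomes $\Pr[\sigma \text{ is MLE}] \le \delta\, n^{-n}$ for every such $\sigma$. There are at most $n! \le n^n$ permutations with at least $t$ far disagreements, so a union bound yields that with probability at least $1-\delta$ no such permutation is an MLE order --- equivalently, every MLE order disagrees with the identity on fewer than $c_1 n^{3/2}$ $\np$-far comparisons, which is exactly the claim.

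The argument is essentially bookkeeping; the one place needing a little care is calibrating $t$ so that the single-permutation bound $e^{-\np^2 t^2/n^2}$ beats the $n^n$ factor from the union bound with room left over for the $e^{-\log(1/\delta)}$ term. Using the crude estimate $2\sigma_f + \sigma_c < n^2$, the exponent is at least $\np^2 t^2/n^2$, which must dominate $n\log n$; this is precisely why the threshold scales like $n^{3/2}$ (up to the $\sqrt{\log n}/\np$ factor absorbed into $c_1$). A tighter bound on $2\sigma_f + \sigma_c$ would only improve constants and is not needed, and one should also note the statement is vacuous (probability $0$) if $c_1 n^{3/2}$ exceeds $\binom{n}{2}$, so no edge case is lost.
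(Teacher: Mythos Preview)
Your proof is correct and follows essentially the same approach as the paper: apply Lemma~\ref{BM:8} to each permutation with $\sigma_f \ge c_1 n^{3/2}$, bound the denominator $2\sigma_f+\sigma_c$ by $n^2$, and union bound over the at most $n^n$ permutations. Your write-up is in fact slightly more careful than the paper's (which jumps directly to $e^{-c_1^2 n \np^2}$), and your remark about the vacuous edge case is a nice addition.
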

\begin{proof}
For a given permutation $\sigma$, assume $\sigma_f > c_1n^{3/2}$. By Lemma~\ref{BM:8}, the probability that $\sigma_f$ is an MLE order is at most:
\[
Pr[\sigma \ \text{is an MLE}] \leq e^{-\frac{c_1^2n^3\np^2}{2\sigma_f+\sigma_c}} \leq e^{-c_1^2n\np^2}
\]
To get the probability that there exists such a $\sigma$ that is an MLE order, we union bound over all permutations, giving:
\[
Pr[\exists \sigma : \sigma_f > cn^{3/2} \land \sigma \ \text{is an MLE order}] \leq 2^{n\log(n)}e^{-c_1^2n\np^2} \leq \delta
\]
\end{proof}
Finally, we show a bound on point-wise movement by proving that any point which moves more than $\tilde{\Omega}(n^{3/4})$ from its true position creates $\tilde{\Omega}(n^{3/2})$ total far errors.
\begin{lemma}[Point-wise Far Movement]
\label{pointwise}
Given a sample $S$ of size $n$ and $\np \leq g_L(\varepsilon_0)$, assume that the sample does not have a $g_L^{-1}(\np)$-cluster of  size $m$. Let $l = (2c_1)^{1/2}n^{3/4}$. Then with probability at least $1-2\delta$, no point moves by further than $c_2m_2$ in an MLE order, where
\[
c_2 = 5/\np, m_2 = \max \left(m,l,\frac{20\log(n^2/\delta)}{\np} \right)
\]
\end{lemma}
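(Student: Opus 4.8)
The plan is to adapt the point-wise movement argument of Braverman and Mossel (Theorem~\ref{order}, and its modification Observation~\ref{order2}) to the setting where only the \emph{far} comparisons are guaranteed to be informative, using the no-large-cluster hypothesis to show that any large displacement forces many disagreeing far comparisons. First, suppose toward a contradiction that in some MLE order $\sigma$ a point $x$ has moved by more than $c_2 m_2$ positions; say it has moved to the right (the left case is symmetric). I would partition the positions between $x$'s true position and its position in $\sigma$ into consecutive blocks of length $m_2$. Because $m_2 \geq m$ and the sample contains no $g_L^{-1}(\np)$-cluster of size $m$, within each such block of $m_2$ true-consecutive points there must be two points $u,v$ with $|h^\star(u)-h^\star(v)| > g_L^{-1}(\np)$; since the sample is ordered, in fact the endpoints of a block differ by more than $g_L^{-1}(\np)$ in $h^\star$-value, hence by the GTNC condition (using $\np \leq g_L(\varepsilon_0)$ to stay in the regime $|h^\star(\cdot)-h^\star(\cdot)| \le \varepsilon_0$ or else the far branch) the comparison of such a pair is $\np$-far. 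For each of the $\Omega(c_2)$ blocks strictly between $x$'s two positions, $x$ has crossed the entire block, so $x$ is now mis-ordered relative to at least one point in that block whose comparison with $x$ is $\np$-far. This produces $\Omega(c_2) = \Omega(1/\np)$ far disagreements involving $x$ alone.

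The next step is to amplify this local bound into a global count of $\tilde\Omega(n^{3/2})$ far disagreements, which will contradict Lemma~\ref{avg:far}. Here I would reuse the standard Braverman--Mossel counting trick: if a single point can be displaced by $D$, then by a summation / averaging argument over all points (or over a maximal ``cascade'' of displaced points) the total number of disagreeing comparisons is at least on the order of $D^2 / m_2$ far comparisons, because a point displaced by $D$ forces order $D/m_2$ far disagreements \emph{per block} and the blocks telescope. Plugging in $D = c_2 m_2 = 5 m_2/\np$ would only give $\Omega(m_2/\np)$ far errors from one point, so to reach $n^{3/2}$ I instead argue as in \cite{braverman2009sorting}: take the point $x$ with maximal displacement $D^\star$; all points lying (in the true order) between $x$'s two positions are themselves displaced, and each contributes $\Omega(D^\star/m_2 \cdot$ (far density)$)$ disagreements, for a total of $\Omega(D^\star{}^2/m_2)$. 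Choosing the threshold $l = (2c_1)^{1/2} n^{3/4}$ is exactly calibrated so that $l^2/m_2 \geq l^2/(\text{poly factors}) \gtrsim c_1 n^{3/2}$, matching the bound in Lemma~\ref{avg:far}: if $D^\star > c_2 m_2 \ge l$ then the MLE order has more than $c_1 n^{3/2}$ far disagreements, an event of probability $\le \delta$ by Lemma~\ref{avg:far}. The extra $\delta$ in the ``$1-2\delta$'' accounts for a separate union bound: I also need each of the $\Omega(n^2)$ candidate far pairs to actually behave as $\np$-far (i.e.\ the GTNC lower bound holds), which holds deterministically, plus the high-probability event from Lemma~\ref{avg:far} and a Chernoff union bound over the $n^2$ pairs that no small collection of far comparisons is ``too lucky'' for the MLE order, justified by Lemma~\ref{BM:8}; taking $m_2 \ge 20\log(n^2/\delta)/\np$ makes each such bad-pair probability $\le \delta/n^2$.

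The main obstacle I expect is the bookkeeping that converts ``$x$ moved far'' into ``$\tilde\Omega(n^{3/2})$ \emph{far} disagreements'' rather than merely $\tilde\Omega(n^{3/2})$ total disagreements: in the original Braverman--Mossel argument all comparisons are symmetric and equally informative, so one just counts inversions, but here inversions among $\np$-close pairs are cheap and cannot be used, so I must certify that a displaced point accumulates disagreements specifically with points that are \emph{far} from it. This is precisely where the no-cluster hypothesis does the work — it guarantees that within any window of $m$ true-consecutive points the $h^\star$-spread exceeds $g_L^{-1}(\np)$, so a point traversing such a window must land on the wrong side of some far pair. Making this quantitative, in particular getting the constants in $c_2 = 5/\np$ and $m_2$ right so that the block-crossing count telescopes correctly into the $c_1 n^{3/2}$ threshold of Lemma~\ref{avg:far}, is the delicate part; everything else is the usual Chernoff-plus-union-bound machinery already developed in Lemmas~\ref{BM:8} and~\ref{avg:far}.
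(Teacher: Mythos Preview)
Your overall plan --- use the no-cluster hypothesis to certify that most comparisons are $\np$-far, then invoke Lemma~\ref{avg:far} --- is correct, but the amplification step from ``one point moved far'' to ``$\tilde\Omega(n^{3/2})$ far disagreements'' does not go through as written. You argue a cascade: take the point $x$ of maximal displacement $D^\star$, claim that every point lying (in true order) between $x$'s two positions is itself displaced, and that each contributes $\Omega(D^\star/m_2)$ far disagreements, totalling $\Omega(D^{\star2}/m_2)$. This is false. If $x$ moves from true position $1$ to MLE position $n$, the remaining points can each shift left by exactly one; every one of them has displacement $1$ and contributes zero far disagreements, so the total far-disagreement count is only $O(D^\star)$, not $\Omega(D^{\star2}/m_2)$. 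There is no telescoping that rescues this.

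The paper avoids the cascade entirely by a case split that you are missing. It defines $A_{ij}$ to be the event that $i$ moves to $j$ \emph{and} at most $l$ elements whose true position lies outside $[i-l-m,\,j+l+m]$ land at MLE positions in $[i,j]$. If this side condition fails --- more than $l$ ``outsiders'' map into $[i,j]$ --- then each such outsider has moved at least $l+m$ positions, of which at most $m$ crossings can be $\np$-close (by the no-cluster hypothesis applied to the $m$ true-neighbours of that outsider), giving at least $l$ far disagreements per outsider and hence $\ge l^2/2 = c_1 n^{3/2}$ far disagreements globally; Lemma~\ref{avg:far} then applies directly. If the side condition holds, then all but $l$ of the elements at MLE positions $[i,j]$ have true position in $[i-l-m,\,j+l+m]$, so for $i$ to be optimally placed at $j$ it must win at least $c_2m_2/2 - O(l+m)$ of its comparisons against elements in that true range; by no-cluster, all but $m$ of those comparisons are $\np$-far, and a single Chernoff bound on this one Poisson binomial gives $\Pr[A_{ij}]\le \delta/n^2$. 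The union bound over $(i,j)$ pairs --- this is what the $20\log(n^2/\delta)/\np$ in $m_2$ is calibrated for --- finishes the argument. Your proposal never isolates the ``few outsiders'' case, which is exactly what makes the per-point Chernoff step possible without any cascade.
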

\begin{proof}
Assume without loss of generality that the true order on $S$ is the identity $1,\ldots,n$. Denote by $A_{ij}$ the event that $i$ maps to $\sigma(i)=j$ in an MLE order, $|i-j|>c_2m_2$, and at most $l$ elements from outside the range $[i-l-m,j+l+m]$ map into $[i,j]$. Note that if more than $l$ of such elements map into $[i,j]$ then the order must differ on at least $\frac{l^2}{2}$ $\np$-far comparisons from the identity. This follows from the fact that each such element must shift $m+l$ places towards $[i,j]$, but has at maximum $m$ $\np$-close comparisons in that direction, and that each comparison is counted at most twice.
\\
\\
For $i$ to be in slot $j$ in an MLE order, it must beat the identity on more than half of elements in between. Since we have assumed all but $l$ of the elements between $i$ and $j$ in the order are from $[i-l-m,j+l+m]$, then this range must contain at least $c_2m_2/2-l-1$ incorrect comparisons with $i$. This further implies that at least $c_2m_2/2-2l-m-1$ comparisons with $i$ must be incorrect in the range $[i,j+l+m]$. By our assumption on cluster size, all but $m$ of these comparisons are $\np$-far, so we can bound the probability of $A_{ij}$ by the Poisson Binomial:
\[
Pr[Bin(1/2,m)+Bin(1/2-\np,c_2m_2+l) > c_2m_2/2-2l-m-1
\]
Combining our assumptions on $m_2$ with a Chernoff bound then gives:
\[
Pr[A_{ij}] \leq e^{-\frac{\np m_2}{20}} \leq \frac{\delta}{n^2}.
\]
Union bounding over pairs $i,j$ then gives that if any point moves by more than $c_2m_2$ in an MLE ordering, the total number of wrong $\np$-far comparisons are more than $c_1n^{3/2}$ with probability $1-\delta$. By Lemma~\ref{avg:far}, the probability that this occurs is $\leq \delta$, giving the desired result.
\end{proof}
With a point-wise movement bound in hand, step 2b essentially follows the same strategy as Lemma~\ref{lemma:weak-learner-massart} with a different set of parameters. 
\paragraph{Step 2b (low noise):} Draw an additional sample of $m$ points, and use the labels and comparisons of all pairs of points separated by $\tilde{\Omega}(n^{3/4})$ in $S$ to build an inference LP. This LP correctly infers points with high probability by Lemma~\ref{pointwise}, and has large coverage due to the space's finite inference dimension.
\\
\\
All that remains is step 3, which repeats steps 1 and 2 until reaching the desired coverage. Tying all of these together, we present the proof of Theorem~\ref{TNC}: learning margin $\gamma$, finite inference dimension non-homogeneous linear separators with GTNC noise.

\begin{proof} (Proof of Theorem~\ref{TNC})
\\
Let $S$ be the subsample described in step 1 of size $n$, and $c$ and $m$ the parameters defining the size of subsets we check for $\varepsilon_T$-equitability. Further, in the case that some subset tests as equitable, let $S'$ be the additionally drawn points. To begin, we set $\varepsilon_T$ such that if we measure an equitable subset $S_{eq} \subset S$, points $x\in S'$ s.t. $S_{eq} \cup \{x\}$ is $2g_U(2g_L^{-1}(4\varepsilon_T))$-equitable make up a $\gamma/d$-cluster (see Lemma~\ref{eq:slotting}):
\begin{align}
\label{eqn:equitable-constants}
\varepsilon_T &\coloneqq \left (\frac{g_L \left(\frac{g_U^{-1}\left(\frac{g_L\left(\gamma' \right)}{4} \right)}{2} \right)}{4} \right ), \gamma' = \min\left(\frac{\gamma}{2d},\frac{\varepsilon_0}{2}\right)
\end{align}
Note that this also satisfies the requirement on $\varepsilon_T$ from Lemma~\ref{eq:slotting}. To satisfy Lemmas \ref{cor:equitable}, \ref{eq:slotting}, and \ref{cluster:label}, we set $c$, $m$, and $|S'|$ to:
\[
c = \frac{48\log(n)+\log(1/\delta_r)}{\varepsilon_T^2},m= c_1^{1/2}n^{3/4}, |S'| = d\log(d+1)n, c_1 = \frac{\sqrt{\log(n/\delta_r)}}{g_L(g_U^{-1}(\varepsilon_T/2)} %\sqrt{\frac{\log(1/\delta_r)+n\log(n)}{(g_L(g_U^{-1}(\varepsilon_T/2))^2n}}.
\]
Note that $c_1$ is a simplified (and somewhat larger) version of the parameter from Lemma~\ref{avg:far} where $\np$ has been set to $(g_L(g_U^{-1}(\varepsilon_T/2))$. We must further set parameters $c_2$ and $m_2$ to satisfy Lemma~\ref{pointwise}:
\begin{align*}
c_2 = \frac{5}{g_L(g_U^{-1}(\varepsilon_T/2))},m_2 = O(m)
%|S'| &= \frac{2ID(d)\varepsilon_T^3n}{\log(1/\delta_r)}\\
\end{align*}
%\begin{align}
%m &= c_1^{1/2}n^{3/4} \geq  \frac{9\log(2|S'|/\delta_r)}{g_L(\gamma')^2}, m_2 = O(m) \geq \max\left(2c+m,c_1^{1/2}n^{3/4},\frac{20\log(n^2/\delta_r)}{g_L(g_U^{-1}(\varepsilon_T/2))}\right) \label{conditions:m}.
%\end{align}
Finally, we must select the sample size $n$ itself. To employ the same slotting strategy as Theorem \ref{massart}, we need $\Omega(k)$ blocks of size $c_2m_2$. This gives the requirement on $n$:
\begin{align*}
n=\Omega\left( kc_2m_2 \right) &=\Omega\left( \frac{kc_1^{1/2}n^{3/4}}{g_L(g_U^{-1}(\varepsilon_T/2))}\right)\\
\implies n &\geq \Omega \left (\frac{k^4\log(n/\delta_r)}{g_L(g_U^{-1}(\varepsilon_T/2))^6} \right)
\end{align*}
To satisfy this condition, it is enough let $n$ be:
\[
n = \theta \left (\frac{k^4\log\left(\frac{k}{g_L(g_U^{-1}(\varepsilon_T/2))\delta_r}\right)}{g_L(g_U^{-1}(\varepsilon_T/2))^6} + \log^{4/3}(d) \right),
\]
where the additional factor in $d$ ensures that $m$ and $m_2$ satisfy the constraints of Lemmas~\ref{eq:slotting} and \ref{pointwise}.
\\
\\
We will now structure our analysis as in the 3 step informal explanation.
\paragraph{Step 1:} Draw the sample $S\sim D^n_X$, where in later iterations $D$ is restricted to un-inferred points by rejection sampling. Check $S$ for $\varepsilon_T$-equitable subsets of size $2c+m$.
\paragraph{Step 2a (high noise):}
Assume that at least one subset, $S_{eq}$, is equitable with true cluster $C$. Draw an additional set $S'$ and test for each $x \in S'$ whether $S_{eq} \cup x$ is $\frac{g_L(\gamma')}{2}$-equitable. With probability $1-\bigo(\delta_r)$, we can identify by Lemma~\ref{eq:slotting} and correctly label by Lemma~\ref{cluster:label} at least $96d\log(d+1)+\frac{2\log(1/\delta_r)}{g_L(\gamma)^2}$ points of $S'$ which are in a $\gamma/d$ cluster. We build our learner based off of this cluster. Recall that the expected coverage of the learner is given by the probability that an additional point is inferred. To compute this, we first note that the probability an additional point lands inside the cluster is at least $\Omega(m/n)$. Assuming this occurs, Lemma~\ref{cluster:id} and the symmetry argument of \cite{kane2017active} give the point a $3/4$'s probability of being inferred. Together with our high probability assumptions, this gives an expected coverage of $\Omega(m/n)$ for small enough $\delta_r$. Thus, the probability that the coverage of our weak learner is $\Omega(m/n)$ is at least $\Omega(m/n)$ by the Markov inequality.
\paragraph{Step 2b (low noise):}
Assume instead that no subset was $\varepsilon_T$-equitable. By statement 1 of Corollary \ref{cor:equitable}, this implies that no $g_U^{-1}(\varepsilon_T/2)$-cluster of size $2c+m$ exists in $S$. Sort $S$ into an MLE order. By Lemma~\ref{pointwise}, no point in $S$ has moved by further than $c_2m_2$ from its true position with probability at least $1-\delta_r$. $S$ is of the appropriate size to apply the argument from Lemma~\ref{slot-bound}, so slotting $\bigo(k)$ extra points gives constant coverage with constant probability.
%Note that due to the size of our sample $S$, we cannot directly apply the technique from Lemma~\ref{slot-bound}. Instead, we will split the MLE order into $\theta(k)$ blocks, and slot $\theta(k log(k/\delta_r))$ points to ensure that at least one slots into each block (this follows from the same arguments employed in Lemma~\ref{slot-bound}). The resultant chain of $\Omega(k)$ points then gives coverage $\Omega\left (\frac{1}{\log(k/\delta_r)} \right )$ with probability $\Omega\left (\frac{1}{\log(k/\delta_r)} \right )$.
\paragraph{Step 3:}
Steps 1 and 2 build a weak learner which we must string together to get coverage $1-\varepsilon$ mirroring Theorem~\ref{massart}. Our worst case per-step coverage is $\Omega(m/n)$ with probability $\Omega(m/n)$. After repeating the learner $t$ times, the coverage becomes:
\[
Pr\left[\text{Coverage} > \Omega(m/n)\right] \geq (1-\Omega(m/n))^t.
\]
Denoting the reliability and usefullness parameters again as $\delta_r^w$ and $\delta_u^w$, setting $t = \tilde{\bigo} \left(\frac{n\log(1/\delta_u^w)}{m}\right)$ is then sufficient to give this coverage with probability at least $1-\delta_u^w$.
\\
\\
Restricting to the distribution of un-inferred points via rejection sampling, repeating the above $\bigo \left(\frac{n\log(1/\varepsilon)}{m}\right)$ times will have coverage $1-\varepsilon$ with probability $1-\bigo \left (\frac{n\log(1/\varepsilon)}{m}\delta_u^w \right )$, and correctness $1-\bigo \left (\frac{n^2\log(1/\varepsilon)\log(1/\delta_u^w)}{m^2}\delta_r^w\right )$. Thus setting $\delta_r^w$ and $\delta_u^w$ of our weak learner to:
\begin{align*}
\delta_u^w &\to \bigo \left (\frac{m\delta_u}{n\log(1/\varepsilon)}\right)\\
\delta_r^w &\to \bigo \left ( \frac{\delta_r}{\alpha\log\left(\frac{1}{\delta_u'}\right)} \right )\\
\alpha &= \frac{n^2\log(1/\varepsilon)}{m^2},
\end{align*}
gives the desired coverage and error by union bounding over the number of applications.
\paragraph{Query Complexity:} Now we compute the Query complexity of our algorithm. Because we check equitability for every subset, at each iteration our algorithm must make $\bigo(n^2)$ comparisons. This is dominated by the slotting complexity, which we upper bound as $\tilde{\bigo}(dn^2)$ for simplicity. The worst-case number of iterations for our algorithm is $\alpha\log(\alpha/\delta_u),$ giving a total query complexity of:
\[
q(\varepsilon,\delta_r,\delta_u) = \tilde{\bigo}\left(\frac{n^{5/2}}{c_1}d\log(1/\varepsilon)\log(1/\delta_u)\right)
\]
For sample complexity, we follow the same argument of Theorem~\ref{massart}, ending our algorithm if we reject too many samples in a row. Letting $N=\bigo\left( d\log(d)n\alpha\log\left(\frac{\alpha}{\delta_u}\right)\right )$, the sample complexity is then:
\[
\bigo\left(\frac{N\log(N/\delta_u)}{\varepsilon} \right)
\]
Our time complexity, however, diverges from the Massart case due to our need to test all subsets for equitability. In particular, we check all ${\binom{n}{2c+m}}$ subsets, which is exponential in inference dimension and noise parameters, and quasi-polynomial in the error parameter $\delta_r$. Further, with unbounded error we cannot employ the sorting algorithm from \cite{braverman2009sorting}, making sorting an exponentially expensive step as well.
\end{proof}
As a direct corollary, we show that this gives us a query efficient\footnote{query efficiency for $\gamma^{-1} = \polylog(1/\varepsilon)$} algorithm for the special case of TNC.
\begin{corollary}
Let the hypothesis class $(X,H_{d,\gamma})$ have inference dimension $k$. Then $(X,H_{d,\gamma})$ is ARPU-learnable under model (TNC$(m,M,\kappa,\varepsilon_0)$,$\mathcal{C}_X$) with query complexity:
\begin{align*}
q(\varepsilon,\delta_r,\delta_u) &=\tilde{\bigo} \left (\frac{k^{10}M^{28}2^{14\kappa}}{m^{42}\gamma'^{14(\kappa-1)}}d\log^{2}\left( \frac{1}{\delta_r} \right)\log\left(\frac{1}{\varepsilon}\right)\log\left(\frac{1}{\delta_u}\right)\right ).\\
\end{align*}
\end{corollary}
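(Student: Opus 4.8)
The plan is to obtain this statement as an immediate instantiation of Theorem~\ref{TNC}. First I would observe that any pair of oracles $(Q_L,Q_C) \in \tnc(m,M,\kappa,\varepsilon_0)$ lies in $\gtnc(g_L,g_U,\varepsilon_0)$ for the explicit choice $g_L(x) = m x^{\kappa-1}$ and $g_U(x) = M x^{\kappa-1}$: these are monotone increasing on $[0,\varepsilon_0]$ (with the usual constant extension past $\varepsilon_0$), satisfy $g_L \le g_U$ since $m < M$, and map into $[0,1/2]$ by the implicit normalization $M\varepsilon_0^{\kappa-1} \le 1/2$ built into the TNC definition. (The borderline case $\kappa = 1$ degenerates to Massart noise on the whole space and is handled by Theorem~\ref{massart}; I would dispatch it in one line.) Hence $(X,H_{d,\gamma})$, having inference dimension $k$, is ARPU-learnable under $(\tnc(m,M,\kappa,\varepsilon_0),\mathcal C_X)$ with the query complexity given by Theorem~\ref{TNC}, and all that remains is to evaluate the quantity $g_L \circ G_8 \circ \tfrac{G_4(\gamma')}{2}$ for these particular $g_L,g_U$.

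Next I would carry out the composition. Because $g_U^{-1}(y) = (y/M)^{1/(\kappa-1)}$, the map $G_c$ becomes linear:
\[
G_c(x) = g_U^{-1}\!\left(\frac{g_L(x)}{c}\right) = \left(\frac{m\,x^{\kappa-1}}{cM}\right)^{1/(\kappa-1)} = \left(\frac{m}{cM}\right)^{1/(\kappa-1)} x .
\]
Composing gives $\tfrac{G_4(\gamma')}{2} = \tfrac12\!\left(\tfrac{m}{4M}\right)^{1/(\kappa-1)}\!\gamma'$ and then $G_8\!\left(\tfrac{G_4(\gamma')}{2}\right) = \tfrac12\!\left(\tfrac{m^2}{32M^2}\right)^{1/(\kappa-1)}\!\gamma'$, so applying $g_L$ yields
\[
g_L\!\left(G_8\!\left(\tfrac{G_4(\gamma')}{2}\right)\right) = m\left(\tfrac12\!\left(\tfrac{m^2}{32M^2}\right)^{1/(\kappa-1)}\!\gamma'\right)^{\kappa-1} = \frac{m^3\,\gamma'^{\,\kappa-1}}{2^{\kappa-1}\cdot 32\cdot M^2}.
\]
Raising to the $14$th power, inverting, and multiplying by $k^{10}$ turns the leading factor of Theorem~\ref{TNC} into $\dfrac{k^{10}\, 2^{14(\kappa-1)}\, 32^{14}\, M^{28}}{m^{42}\,\gamma'^{\,14(\kappa-1)}}$; since $2^{14(\kappa-1)} 32^{14} = 2^{56}\cdot 2^{14\kappa}$ and $2^{56}$ is an absolute constant, this is $\tilde{\bigo}\!\left(\dfrac{k^{10} 2^{14\kappa} M^{28}}{m^{42}\,\gamma'^{\,14(\kappa-1)}}\right)$. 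Substituting back into the bound of Theorem~\ref{TNC} (keeping the $d\log^2(1/\delta_r)\log(1/\varepsilon)\log(1/\delta_u)$ factors unchanged, and retaining $\gamma' = \min(\gamma/2d,\varepsilon_0/2)$) gives exactly the claimed expression.

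There is essentially no serious obstacle here; the content of the corollary is the choice of $g_L,g_U$ and a bookkeeping exercise in constants. The only mildly delicate points I would be careful about are (i) verifying that the power functions are admissible parameters for $\gtnc$ (monotonicity, the range $[0,1/2]$, and $g_L \le g_U$), and (ii) making sure the exponent arithmetic $2^{14(\kappa-1)}32^{14} = 2^{56+14\kappa}$ is stated correctly so the $2^{14\kappa}$ appears cleanly in the final bound with the remaining power of two absorbed into $\tilde{\bigo}$.
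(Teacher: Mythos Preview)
Your proposal is correct and takes essentially the same approach as the paper, which presents this as a direct corollary of Theorem~\ref{TNC} by instantiating $g_L(x)=mx^{\kappa-1}$ and $g_U(x)=Mx^{\kappa-1}$ and simplifying. Your explicit computation of $G_c$, the resulting composition, and the exponent bookkeeping are exactly the intended substitution, carried out in more detail than the paper bothers to show.
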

As an example of an explicit concept class, consider the query complexity of half-spaces with fixed minimal-ratio (the ratio between the closest and furthest points from the decision boundary), a case studied in \cite{kane2017active}.
\begin{example}
Let $X \subseteq \mathbb{R}^d$ be an instance space, and $H_{d,\gamma,\eta}$ the class of hyperplanes with margin $\gamma$ and minimal ratio $\eta$ with respect to $X$. Then $(X,H_{d,\gamma,\eta})$ is ARPU-learnable under model $(TNC(m,M,\kappa,\varepsilon_0),\mathcal{C}_X)$ with query complexity:
\[
q(\varepsilon,\delta_r,\delta_u) = \poly\left(d,\frac{1}{\varepsilon_0},\frac{1}{\gamma},\log\left(\frac{1}{\eta}\right),\log\left(\frac{1}{\delta_r}\right),\log\left(\frac{1}{\delta_u}\right),\log\left(\frac{1}{\varepsilon}\right) \right )
\]
\end{example}
%\maxh{Would bounded bit complexity be a better example here?}

\subsection{GTNC with Weak Distributional Conditions}

Our algorithm for learning with GTNC noise introduced an additional restrictive condition on the set system: margin $\gamma$. We will show that this assumption and the assumption of finite inference dimension may be replaced with weak concentration and anti-concentration conditions on the distribution. In this case, however, it is difficult to show a gap between label only and comparison ARPU-learning for two reasons. The first is that learning in this regime in simply harder--it is the first case we show where comparisons do not provide an exponential improvement in the active PAC setting over its passive counterpart. The second is that in the membership query setting, label queries in the TNC model can give comparison like information, making it difficult to apply our lower bounding techniques. We will begin by proving this first statement by showing a lower bound polynomial in $\varepsilon^{-1}$ for active PAC learning with labels and comparisons.
\begin{lemma}
	\label{lemma:lower-bound-gtnc}
	Let $s = \min \left (1, g_L^{-1}(1/8) \right )$, and
	$c_1 = \underset{a \in [2\varepsilon,s]}{\max}(8g_L(4\varepsilon),2(g_L(a)-g_L(a-2\varepsilon)))$.
	The query complexity of actively PAC-learning $(\R^2, H_2)$ under model $(\gtnc(g_L, g_U, \varepsilon_0), \mathcal{SC}_{2})$ is at least 
	\[
	q(\varepsilon, 1/8) = \Omega\left(\frac{1}{c_1}\right)
	\] 
	for $\varepsilon \leq \frac{g_L^{-1}(1/16)}{4}$.
\end{lemma}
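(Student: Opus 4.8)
The plan is to apply Yao's minimax principle together with a two‑hypothesis indistinguishability argument, over a fixed $\mathcal{SC}_2$ distribution and a pair of \emph{parallel} halfspaces. Concretely I would let $D_X$ be the uniform distribution on a suitably scaled convex body in $\R^2$ (a disk or a thin box; being log‑concave it lies in $\mathcal{SC}_2$, since $0\ge -1/7$), and take the two candidate Bayes boundaries to be the parallel lines $\langle e_1,x\rangle=-\delta$ and $\langle e_1,x\rangle=+\delta$ for a shift $\delta=\Theta(\varepsilon)$. The adversary additionally commits to the \emph{noisiest} admissible oracles, setting $\beta_L$ and $\beta_C$ to the lower endpoints of their $\gtnc(g_L,g_U,\varepsilon_0)$‑ranges, which is a legal choice. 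The body is scaled so that (i) the two hypotheses disagree on a set of $D_X$‑measure at least $2\varepsilon$, so that by the triangle inequality for the symmetric‑difference pseudometric a single output hypothesis can agree with $h^\star$ off a set of measure $<\varepsilon$ for at most one of the two instances; hence a learner that is correct with probability $\ge 7/8$ must, with probability $\ge 7/8$, correctly name which instance it faces. It is also scaled so that (ii) the nonzero values of $h^\star$ on $\operatorname{supp}(D_X)$, and their $\Theta(\varepsilon)$‑shifts, all lie in $[2\varepsilon,s]$; this uses the hypothesis $\varepsilon\le g_L^{-1}(1/16)/4$, which forces $4\varepsilon\le\varepsilon_0$ (so GTNC's ``$\le\varepsilon_0$'' branch applies to all relevant points) and $4\varepsilon\le s$ (for $\varepsilon$ in the nontrivial range). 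By Yao's principle it then suffices to lower bound the queries a deterministic learner needs against the uniform prior over these two instances.

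The crucial feature of this construction is that it renders comparison queries \emph{useless}: because $h_i(x)=\langle e_1,x\rangle-b_i$ differs between the instances only in the additive constant $b_i$, we have $h_1(x_1)-h_1(x_2)=h_2(x_1)-h_2(x_2)$ for all $x_1,x_2$, so a comparison query has the same true value and the same noise level --- hence the same answer distribution --- in both worlds and cannot help distinguish them. It remains to bound the distinguishing power of a single label query, which I would do by cases. If $x$ is in the disagreement strip $\{\,|\langle e_1,x\rangle|\le\delta\,\}$, the true label flips between the worlds while $|h_i(x)|\le 2\delta\le 4\varepsilon$ in both; with the GTNC lower endpoints the total variation between the two label distributions at $x$ equals $g_L(|h_1(x)|)+g_L(|h_2(x)|)\le 2g_L(4\varepsilon)$. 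If $x$ lies outside the strip, the true label agrees in both worlds, $h^\star(x)$ takes two values in $[2\varepsilon,s]$ differing by at most $2\delta\le 2\varepsilon$, and the total variation is at most $\max_{a\in[2\varepsilon,s]}\bigl(g_L(a)-g_L(a-2\varepsilon)\bigr)$ by monotonicity of $g_L$; the degenerate sub‑case of a point just outside the strip but very close to one boundary (where the smaller $h^\star$‑value slips below $2\varepsilon$) is absorbed into the first term, being at most $g_L(2\delta+2\varepsilon)\le g_L(4\varepsilon)$. In all cases a single query has distinguishing power at most $c_1/2$, which is exactly the slack built into the constants $8$ and $2$ in the definition of $c_1$.

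To conclude I would run the standard hybrid/coupling argument for adaptive protocols: couple the two worlds so their transcripts agree until the first query whose answer differs; conditioned on agreement so far the next query is the same in both worlds, and (the oracles being persistent, a repeated query is deterministic and contributes nothing) its answers disagree with probability at most the per‑query total variation bound, so after $q$ queries the two transcript distributions differ with probability at most $q c_1/2$. But a learner that is $7/8$‑successful on each instance gives, by reading off its output, a test of the instance with advantage at least $7/8-1/8=3/4$, forcing the transcript distributions to have total variation at least $3/4$; hence $q c_1/2\ge 3/4$, i.e.\ $q=\Omega(1/c_1)$. The step I expect to be most delicate is the joint calibration in (i) and (ii) --- choosing $\delta=\Theta(\varepsilon)$ and the shape and scale of the body so that the disagreement set simultaneously has measure $\ge 2\varepsilon$ and contains only points of $h^\star$‑value $O(\varepsilon)$, while the far points sweep out $h^\star$‑values throughout $[2\varepsilon,s]$ --- together with the constant bookkeeping in the boundary sub‑case; the remainder is routine.
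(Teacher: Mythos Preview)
Your proposal is correct and lands on the same bound, but the route differs from the paper's in two notable ways.

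\textbf{Information measure.} The paper runs a Bayesian argument: with a uniform prior on the two hyperplanes it bounds the posterior $P(h\mid Q_1,\dots,Q_n)$ via the product of likelihood ratios $\prod_i P(Q_i\mid h)/P(Q_i\mid h_\varepsilon)$, showing this product stays in $(1/7,7)$ for $n=O(1/c_1)$. You instead couple the two worlds and bound the per-query total variation by $c_1/2$, then use the standard hybrid argument. Both are valid; your TV/coupling version is arguably the more transportable template, while the paper's likelihood-ratio calculation makes the constants slightly more transparent.

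\textbf{Handling comparisons.} Your observation that comparisons are \emph{information-free} here is sharper than the paper's treatment. Because the two candidate hyperplanes are parallel translates, $h_1(x_1)-h_1(x_2)=h_2(x_1)-h_2(x_2)$, so both the true comparison and its GTNC noise level coincide across the two worlds; the adversary can pick identical $\beta_C$ and the comparison ratio is exactly $1$. The paper, by contrast, conservatively bounds comparison queries on pairs in $2\Delta$ by $1+8g_L(4\varepsilon)$, which is where the first term of $c_1$ comes from. Your label-in-strip TV of $2g_L(4\varepsilon)$ is still comfortably $\le c_1/2$, so the stated bound follows either way.

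The geometric setup is essentially the same (the paper uses the uniform law on $[0,s]^2$ with boundaries at $a=0$ and $a=2\varepsilon$, which handles your calibration (i)--(ii) with no fuss: the disagreement strip has mass $2\varepsilon/s\ge 2\varepsilon$, and outside it $|h^\star(x)|$ ranges over $[0,s]$). Your flagged ``delicate'' step thus dissolves if you adopt that concrete body rather than a generic disk or box.
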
 
\begin{proof}
	The adversary begins by choosing the distribution over $\mathbb{R}^2$ to be uniform over the square $S=[0,s]^2$. We will use $(a,b)$ to denote points in $\R^2$. Consider two parallel hyperplanes $h,h_\varepsilon$ defined as: 
	\begin{align*}
	h\text{: } a =0 \quad \text{and} \quad h_\varepsilon\text{: } a =2\varepsilon.
	\end{align*}
	We denote the region between the two hyperplanes by $\Delta := \{(a,b)\in S: 0\leq a\leq 2\varepsilon\}$, and twice the region as $2\Delta := \{(a,b): 0\leq a\leq 4\varepsilon \}$
	
	By Yao's minimax principle it is enough to show that the adversary may pick a distribution over hyperplanes such that no learner can learn the labels with $<\varepsilon$ error with probability $\geq 7/8$. In particular, the adversary considers a uniform distribution over hyperplanes $h$ and $h_\varepsilon$. Note that any algorithm which correctly labels more than half of the points between $h$ and $h_\varepsilon$ (i.e. at least $\varepsilon$ mass of $S$) can be seen as identifying the hyperplane $h$ or $h_\varepsilon$. We now show how to lower bound the number of label or comparison queries needed to identify the target hyperplane $h$ or $h_\varepsilon$.

	Given a set of $n$ query responses $Q_1,\ldots,Q_n$ from the learner, we argue that the learner cannot succeed with probability greater than:
	\[
	\max(P(h | Q_1,\ldots,Q_n),P(h_\varepsilon | Q_1,\ldots,Q_n)),
	\]
	since it can do no better than simply picking the more likely hyperplane given the set of queries. Taking the maximum over all possible sets of query responses then gives a lower bound on the number of samples. In other words, to show that the learner must make at least $n$ queries, it suffices to show that this maximum is less than $7/8$:
	\begin{align}\label{eq:TNC-LB}
	\max_{Q_1,\ldots,Q_n} \left (P(h | Q_1,\ldots,Q_n),P(h_\varepsilon | Q_1,\ldots,Q_n) \right ) < 7/8
	\end{align}
	Using Bayes theorem, we can rewrite these probabilities as:
	\begin{align*}
	P(h | Q_1,\ldots,Q_n) &= \frac{1}{1 + \prod_{i=1}^n\frac{P(Q_i|h_{\varepsilon},Q_{i-1},\ldots,Q_1)}{P(Q_i|h,Q_{i-1},\ldots,Q_1)}}\\
	P(h_{\varepsilon} | Q_1,\ldots,Q_n) &= \frac{1}{1 + \prod_{i=1}^n\frac{P(Q_i|h,Q_{i-1},\ldots,Q_1)}{P(Q_i|h_\varepsilon,Q_{i-1},\ldots,Q_1)}}
	\end{align*}
	Note in this case that query response $Q_i,$ which rolls together both the point or pair of points being queried and the value which the oracle returns, is dependent on $Q_{i-1},\ldots,Q_{1}$ due to being in an active setting--the chosen point or pair is dependent on the previous responses $Q_{i-1},\ldots,Q_{1}$. We can now rewrite Equation~\eqref{eq:TNC-LB} as:
	\[
	\forall Q_1,\ldots,Q_n: 7 > \prod_{i=1}^n\frac{P(Q_i|h,Q_{i-1},\ldots,Q_1)}{P(Q_i|h_\varepsilon,Q_{i-1},\ldots,Q_1)} > 1/7.
	\]
	To analyze this, note that each term in the product is simply the ratio of probabilities that a label query on some point $x$ or comparison on pair of points $x,y \in S$ (where $x,y$ are determined by $Q_{i-1},\ldots,Q_1$) will return $Q_i$. Then we can bound this product from above and below by looking at the maximum and minimum such ratio across all points and pairs in $S$. Recall that these probabilities are chosen by the adversary from a range defined by the GTNC parameters. For simplicity, when the ranges on a query for $h$ and $h_\varepsilon$ overlap, we let the adversary choose the same probability, but otherwise always choose the lower bound $g_L$.

	To begin, we consider maximizing the ratio. In this case we only need to consider $Q_i$ as the correct label or comparison for $h$, as this will always have the larger ratio. For a point $(a,b) \in S$, the ratio for the correct label $(Q_i=+)$ for $h$ is given by:
	\[
	\begin{cases}
	\frac{1/2+g_L(a)}{1/2-g_L(2\varepsilon-a)} \leq 1+8g_L(2\varepsilon) & (a,b) \in \Delta \\
	\frac{1/2+g_L(a)}{1/2+g_L(a-2\varepsilon)} \leq 1 + 2(g_L(a)-g_L(a-2\varepsilon)) & (a,b) \in  S \setminus \Delta
	\end{cases}
	\]
	For comparisons, we only have to consider pairs $(a_1,b_1),(a_2,b_2) \in 2\Delta$, since the adversary will otherwise pick a ratio of $1$. In this case, the maximum is given by the correct comparison with ratio:
	\[
	\frac{1/2+g_L(|a_1-a_2|)}{1/2-g_L(|a_1-a_2|)} \leq 1+8g_L(4\varepsilon)
	\]
	Thus we can bound the product of the ratios from above by: 
	\[
	\forall Q_1,\ldots,Q_n: \prod_{i=1}^n\frac{P(Q_i|h,Q_{i-1},\ldots,Q_1)}{P(Q_i|h_\varepsilon,Q_{i-1},\ldots,Q_1)} \leq \underset{a \in [2\varepsilon,s]}{\max}((1+8g_L(4\varepsilon))^n, (1 + 2(g_L(a)-g_L(a-2\varepsilon)))^n)
	\]
	To bound the ratio from below, we look at the probability for the incorrect label or comparison. For labels, this is:
	\[
	\begin{cases}
	\frac{1/2-g_L(a)}{1/2+g_L(2\varepsilon-a)} \geq 1-4g_L(2\varepsilon) & (a,b) \in \Delta \\
	\frac{1/2-g_L(a)}{1/2-g_L(a-2\varepsilon)} \geq 1-4(g_L(a)-g_L(a-2\varepsilon))& (a,b) \in  S \setminus \Delta
	\end{cases}
	\]
	Likewise, the minimum ratio for comparisons is: 
	\[
	\frac{1/2-g_L(|a_1-a_2|)}{1/2+g_L(|a_1-a_2|)} \geq 1-4g_L(4\varepsilon)
	\]
	Thus we can also bound the product of the ratios from below as: 
\[
\forall Q_1,\ldots,Q_n: \prod_{i=1}^n\frac{P(Q_i|h,Q_{i-1},\ldots,Q_1)}{P(Q_i|h_\varepsilon,Q_{i-1},\ldots,Q_1)} \geq \underset{a \in [2\varepsilon,s]}{\min}((1-4g_L(4\varepsilon))^n, (1 - 4(g_L(a)-g_L(a-2\varepsilon)))^n)
\]

	Let $c_1 = \underset{a \in [2\varepsilon,s]}{\max}(8g_L(4\varepsilon),4(g_L(a)-g_L(a-2\varepsilon)))$, then it is sufficient to pick $n$ such that:
	\[
	(1-c_1)^n > 1/7 \ \text{and} \ (1+c_1)^n < 7
	\]
	Recalling that $c_1 < 1/2$ due to the initial values of $s$ and $\varepsilon$, setting $n$ to:
	\[
	n = \frac{\log(7)}{2c_1}
	\]
	satisfies this and in turn Equation~\eqref{eq:TNC-LB}, completing the proof.
\end{proof}
Note that for notational simplicity the adversary has chosen a non-isotropic distribution, but the bound is easily modified to hold for a distribution in $\mathcal{ISC}_2$. Specifying to the Tsybakov Low Noise condition gives the following lower bound.
\begin{corollary}[Restatement of Lemma~\ref{intro:dd:TNC-lower-bound}]\label{dd:TNC-lower-bound}
	The query complexity of actively PAC-learning $(\R^2, H_2)$ under model $(\tnc(m,M,\kappa,\varepsilon_0), \mathcal{SC}_2)$ is at least 
	\[
	q(\varepsilon, 1/8) = \Omega\left(\frac{1}{\max\{\epsilon, \epsilon^{\kappa - 1}\}}\right)
	\] 
	where $\varepsilon \leq \frac{\left(\frac{1}{16m}\right)^{\frac{1}{\kappa-1}}}{4}$.
\end{corollary}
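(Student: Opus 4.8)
The plan is to obtain this purely as a specialization of Lemma~\ref{lemma:lower-bound-gtnc}. First I would note that $\tnc(m,M,\kappa,\varepsilon_0)$ is exactly $\gtnc(g_L,g_U,\varepsilon_0)$ with $g_L(x)=mx^{\kappa-1}$ and $g_U(x)=Mx^{\kappa-1}$ (each extended by its value at $\varepsilon_0$ for $x>\varepsilon_0$, per the $\gtnc$ definition), so Lemma~\ref{lemma:lower-bound-gtnc} applies verbatim and yields a query lower bound of $\Omega(1/c_1)$, where $c_1$ is the maximum of the two quantities $8g_L(4\varepsilon)$ and $4\bigl(g_L(a)-g_L(a-2\varepsilon)\bigr)$ over $a\in[2\varepsilon,s]$, with $s=\min(1,g_L^{-1}(1/8))$, valid for $\varepsilon\le g_L^{-1}(1/16)/4$. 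Since $g_L^{-1}(y)=(y/m)^{1/(\kappa-1)}$, we have $g_L^{-1}(1/16)=(1/(16m))^{1/(\kappa-1)}$, so the hypothesis on $\varepsilon$ matches the one in the statement; and since the hard distribution in Lemma~\ref{lemma:lower-bound-gtnc} is supported on $[0,s]^2$ with $s\le 1$, every argument handed to $g_L$ inside $c_1$ lies in $[0,s]$. In the main regime $s\le\varepsilon_0$ (which one may assume; otherwise $g_L$ simply saturates, which only decreases $c_1$ and helps) we may therefore substitute $g_L(x)=mx^{\kappa-1}$ throughout. Note that $M$ never enters $c_1$, so the bound depends only on $m$ and $\kappa$. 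It remains to show $c_1=\bigo\!\left(\max\{\varepsilon,\varepsilon^{\kappa-1}\}\right)$, with the implied constant depending on $m,\kappa$.

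For the first term, $8g_L(4\varepsilon)=8\cdot 4^{\kappa-1}m\,\varepsilon^{\kappa-1}=\bigo(\varepsilon^{\kappa-1})$. For the second term I would bound $\phi(a):=a^{\kappa-1}-(a-2\varepsilon)^{\kappa-1}$ over $a\in[2\varepsilon,s]$ by splitting on the curvature of $t\mapsto t^{\kappa-1}$. When $1<\kappa\le 2$ this function is concave, so $\phi$ is non-increasing on $[2\varepsilon,s]$ and is maximized at $a=2\varepsilon$, giving $\phi(a)\le(2\varepsilon)^{\kappa-1}$ and hence a second term of order $\varepsilon^{\kappa-1}$. When $\kappa\ge 2$ the function is convex, so $\phi$ is non-decreasing and is maximized at $a=s$; by the mean value theorem $s^{\kappa-1}-(s-2\varepsilon)^{\kappa-1}=(\kappa-1)\xi^{\kappa-2}\cdot 2\varepsilon$ for some $\xi\in(s-2\varepsilon,s)$, and since $\xi<s\le 1$ and $\kappa-2\ge 0$ we get $\xi^{\kappa-2}\le 1$, so $\phi(a)\le 2(\kappa-1)\varepsilon$ and the second term is of order $\varepsilon$. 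Combining: for $1<\kappa\le 2$, $c_1=\bigo(\varepsilon^{\kappa-1})=\bigo(\max\{\varepsilon,\varepsilon^{\kappa-1}\})$ since $\varepsilon^{\kappa-1}\ge\varepsilon$ for $\varepsilon\le 1$; for $\kappa\ge 2$, $c_1=\bigo(\varepsilon)=\bigo(\max\{\varepsilon,\varepsilon^{\kappa-1}\})$ since $\varepsilon^{\kappa-1}\le\varepsilon$. Inverting gives $q(\varepsilon,1/8)=\Omega(1/c_1)=\Omega\!\left(1/\max\{\varepsilon,\varepsilon^{\kappa-1}\}\right)$, as claimed; up to $\poly(m,M,\kappa)$ factors this matches the $\varepsilon$-dependence of the upper bound in Corollary~\ref{intro:dd:TNC}.

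Finally, I would record the small well-definedness checks needed for Lemma~\ref{lemma:lower-bound-gtnc} to be invoked: the interval $[2\varepsilon,s]$ must be nonempty and $s-2\varepsilon\ge 0$. From $\varepsilon\le g_L^{-1}(1/16)/4$ and $g_L^{-1}(1/16)=(1/2)^{1/(\kappa-1)}g_L^{-1}(1/8)\le g_L^{-1}(1/8)$ one gets $2\varepsilon\le s/2$ whenever $s=g_L^{-1}(1/8)$, and when $s=1$ one checks $2\varepsilon\le 1$ directly (which can fail only when $m$ is so small that $\varepsilon\gtrsim 1$, in which case the asserted bound $\Omega(1)$ is trivial). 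The one genuinely non-routine step is the curvature case split at $\kappa=2$, and in particular remembering to use $s\le 1$ in the convex branch to control the $\xi^{\kappa-2}$ factor — without it one would not recover the clean $\bigo(\varepsilon)$ bound that the $\kappa>2$ case requires. Everything else is direct substitution into the already-established Lemma~\ref{lemma:lower-bound-gtnc}.
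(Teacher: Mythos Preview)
Your proof is correct and follows the same overall strategy as the paper: specialize Lemma~\ref{lemma:lower-bound-gtnc} with $g_L(x)=mx^{\kappa-1}$, observe that the first term of $c_1$ is $\Theta(\varepsilon^{\kappa-1})$, and bound the second term $g_L(a)-g_L(a-2\varepsilon)$ over $a\in[2\varepsilon,s]$.

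The difference lies in how the second term is handled. The paper simply asserts $|f'(x)|\le m(\kappa-1)$ on $[0,s]$ and applies the mean value theorem uniformly to get $g_L(a)-g_L(a-2\varepsilon)\le 2m(\kappa-1)\varepsilon$. But $f'(x)=m(\kappa-1)x^{\kappa-2}$, so this bound is only valid when $\kappa\ge 2$ (using $s\le 1$); for $1<\kappa<2$ the derivative blows up near $0$ and the paper's one-line argument does not literally go through. Your case split on the curvature of $t\mapsto t^{\kappa-1}$ handles both regimes cleanly: in the concave case $1<\kappa\le 2$ you locate the maximum at $a=2\varepsilon$ and read off $\phi(a)\le(2\varepsilon)^{\kappa-1}=\bigo(\varepsilon^{\kappa-1})$, while in the convex case $\kappa\ge 2$ you recover exactly the paper's MVT bound using $\xi^{\kappa-2}\le 1$. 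So your argument is slightly longer but actually patches a gap in the paper's proof for the $\kappa<2$ range, and the well-definedness checks you include at the end are appropriate hygiene that the paper omits.
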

\begin{proof}
	Observe that for $f(x) = m x^{\kappa - 1}$, $|\nabla f(x)| \leq m (\kappa - 1)$ for all $x \in [0,s]$. By the mean value theorem, \[
		|f(x) - f(y)| \leq  m (\kappa - 1) |x - y| \quad \forall ~ x,y \in [0,s].
	\] 
	Specifying to the TNC model from GTNC, we have $g_L(x)=f(x)$, and thus that $g_L(x) - g_L(x - 2\varepsilon) = f(x)-f(x-2\varepsilon) \leq 2 m (\kappa - 1) \varepsilon$ for $x \in [2\varepsilon, s]$, and $8g_L(4\varepsilon)=\Theta(\varepsilon^{\kappa-1})$. Plugging this into Lemma \ref{lemma:lower-bound-gtnc} then gives the desired bound.
\end{proof}
Note that this bound is tight with respect to $\varepsilon$ for $\kappa >2$, and not far off for $1 < \kappa < 2$, as Hanneke and Yang \cite{hanneke2015minimax} provide a label only active PAC-learning algorithm with $\tilde{O}_d(\frac{1}{\varepsilon})$ queries and $\tilde{O}_d( \left (\frac{1}{\varepsilon} \right)^{2-2/\kappa})$ queries respectively. However, while comparison queries alone may not enough to exponentially improve the query complexity over passive PAC-learning (which is also polynomial in $\varepsilon^{-1}$ \cite{massart2006risk}), we will show that they are sufficient for ARPU-learning.
\begin{theorem}[Restatement of Theorem~\ref{intro:TNC:aid}]
\label{TNC:AID}
The hypothesis class $(\R^d,H_d)$ is ARPU-learnable under model $(\gtnc(g_L,g_U,\varepsilon_0), \mathcal{ACC}_{d,c_1,c_2})$ with query complexity:
\[
q(\varepsilon,\delta_r,\delta_u) =\tilde{\bigo} \left (\frac{d^{11}}{\left(g_L \circ G_8 \circ \frac{G_2\circ\frac{G_4(\varepsilon')}{4d}}{2}\right)^{14}}
\log^{2}\left( \frac{1}{\delta_r} \right)\log\left(\frac{1}{\delta_u}\right)\log^2\left(\frac{1}{\varepsilon}\right)\right )
\]
for small enough $\delta_r$, where
\begin{align*}
\varepsilon' = \min\left(\frac{\varepsilon}{4c_2},\frac{\varepsilon_0}{2}\right), G_c(x) = g_U^{-1}\left( \frac{g_L(x)}{c}\right).
\end{align*}
\end{theorem}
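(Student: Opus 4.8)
The plan is to reduce Theorem~\ref{TNC:AID} to the finite-inference-dimension-with-margin result Theorem~\ref{TNC} (running essentially the algorithm of Algorithm~\ref{algo:gtnc-algo}, with parameters chosen distributionally), using the two $\mathcal{ACC}_{d,c_1,c_2}$ conditions to manufacture, respectively, an effective margin and a finite inference dimension. For the margin: by the anti-concentration bound, the band $N:=\{x:|h^\star(x)|<\varepsilon'\}$ with $\varepsilon'=\min(\varepsilon/(4c_2),\varepsilon_0/2)$ has $\Pr_{x\sim D}[x\in N]\le c_2\varepsilon'\le\varepsilon/4$, so we may let the learner answer $\bot$ on $N$ at a cost of only $\varepsilon/4$ in coverage; its complement then behaves as a space with margin $\varepsilon'$ with respect to $h^\star$. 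For the inference dimension: by the result of Hopkins et al.\ \cite{AID} that $\mathcal{ACC}_{d,c_1,c_2}\subseteq\mathcal A_{(\R^d,H_d),1,d\log d}$, the triple $(D_X,\R^d,H_d)$ has average inference dimension $g(n)\le 2^{-\Omega(n^2/(d\log d))}$, so by Observation~\ref{avg-worst} any sample of size $N$ the algorithm draws (with $N=\poly(d,1/\varepsilon,1/\delta_r,\ldots)$) has inference dimension $k=\tilde\bigo(d\log N)=\tilde\bigo(d)$ with high probability, the $\log N=\polylog(d,1/\varepsilon)$ factor absorbed into $\tilde\bigo$. (Alternatively one can truncate to a ball of radius $R=\Theta(dc_1/\varepsilon)$ via the concentration bound and invoke the minimal-ratio inference-dimension bound of \cite{kane2017active}, but the average-case route avoids an explicit truncation.)

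With $k=\tilde\bigo(d)$ and effective margin $\gamma=\varepsilon'$ in hand, I would run Algorithm~\ref{algo:gtnc-algo} and, exactly as in Theorem~\ref{massart:aid}, argue via Lemma~\ref{sample} that learning all but a small fraction of a large enough sample suffices to learn $D_X$, the near-boundary band $N$ contributing an unlearned fraction of at most $\varepsilon/4$ by anti-concentration plus a Chernoff bound on the finite sample. The analysis of one weak-learning round then parallels the proof of Theorem~\ref{TNC} essentially verbatim: cluster detection via equitability (Lemma~\ref{cor:equitable}), cluster slotting (Lemma~\ref{eq:slotting}), cluster labelling by majority (Lemma~\ref{cluster:label}), cluster inference (Lemma~\ref{cluster:id}), and, in the low-noise branch, the point-wise far-movement bounds (Lemmas~\ref{BM:8}--\ref{pointwise}), followed by rejection-sampling boosting.

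The one genuinely new ingredient is a reliability safeguard for clusters that straddle $N$: a $\gamma/d$-cluster detected in Step~2a may sit within $\varepsilon'$ of the boundary, in which case the majority-vote labelling of Lemma~\ref{cluster:label} is unsound. Before using a detected cluster $C$ I would therefore test that it is $\varepsilon'$-far from the boundary by checking that $C\cup\{\zo\}$ is strongly \emph{non}-equitable in the $\zo$-coordinate (when $C$ lies $\varepsilon'$ past the boundary, all of $C$ compares the same way against $\zo$ with high probability, by GTNC); clusters failing this test are discarded. Discarding them is harmless for coverage since their total measure is at most $\varepsilon/4$, but it is what keeps the learner reliable. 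Tracking the effective margin through this extra equitability test (which contributes the nested $G_2\circ(\cdot/4d)$ layer), through $\gamma'=\min(\gamma/(2d),\varepsilon_0/2)$ into $G_4$, and through the low-noise point-wise-movement parameter into the final $g_L\circ G_8$, and plugging $k=\tilde\bigo(d)$ into the $k^{10}d$ of Theorem~\ref{TNC}'s bound, yields the stated query complexity, with the extra $\log(1/\varepsilon)$ over Theorem~\ref{TNC} arising from the $\log N=\Theta(\log(1/\varepsilon))$ dependence of $k$.

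The main obstacle I expect is the bookkeeping of the second paragraph combined with the boundary test: first, verifying that nothing in the coverage or reliability analysis of Algorithm~\ref{algo:gtnc-algo} secretly requires a margin with respect to hypotheses other than the fixed $h^\star$ — the class-wide margin is used only to obtain finite inference dimension (which here comes from average inference dimension instead), whereas the $h^\star$-margin alone suffices for cluster labelling (Lemma~\ref{cluster:label}) and for label propagation through the cluster-inference LP (Lemma~\ref{cluster:id}, whose feasibility condition is purely geometric); and second, calibrating the boundary-test threshold so that every $\varepsilon'$-far cluster passes while every near-boundary cluster fails with high probability, which is precisely what forces the somewhat baroque nested $G_c$ composition in the denominator of the bound.
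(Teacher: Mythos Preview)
Your proposal is correct and essentially matches the paper's proof: reduce to Theorem~\ref{TNC} via Lemma~\ref{sample}, obtain $k=\tilde\bigo(d)$ from average inference dimension (the paper likewise invokes $\mathcal{ACC}_{d,c_1,c_2}\subset\mathcal A_{(X,\Hcal),1}$), manufacture an effective margin from anti-concentration, and add a boundary-proximity test on detected clusters to preserve reliability. Your ``non-equitability with $\zo$'' test is exactly the paper's Lemma~\ref{margin-test}, which checks the label imbalance $|L_{\text{Dif}}(C)|$ within the cluster --- these are the same statistic phrased two ways --- and your accounting of how the nested $G_c$ compositions and the $d^{11}$, $\log^2(1/\varepsilon)$ factors arise is accurate.
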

The margin condition is necessary for Lemmas~\ref{cluster:label} and \ref{cluster:id}--we cannot reliably label points or infer from clusters lying close to the decision boundary. If we were only interested in keeping our guarantee on coverage, it would be enough to set a fake margin $\gamma$ such that anti-concentration gives that the set of points with such a margin has $ O(\varepsilon)$ probability mass. However, we also require that our algorithm is reliable, and thus with high probability cannot err on points close to the decision boundary. This suggests the following strategy: if a cluster is found in step 1, before using it for inference, test whether it is too close to the decision boundary. Because the error on our labels is proportional to their distance from the decision boundary, we can build a test similar to Lemma~\ref{test} to detect this by measuring the relative sizes of the subsets with different labels.
\begin{lemma}[Margin Detection]
\label{margin-test}
Let $C$ be a $\gamma/d$-cluster with respect to the hyperplane $f$ of size at least $\frac{16\log(4/\delta)}{g_U(2\gamma)^2}$, and
\[
\gamma<\frac{g_U^{-1}\left(\frac{g_L(\varepsilon_0)}{4}\right)}{2}.
\] 
Further, let $L_{\text{Dif}}(C)$ denote the difference in size between the sets $\{x \in C: Q_L(x)=1\}$ and $\{x \in C: Q_L(x)=0\}$. With probability at least $1-\delta$:
\begin{enumerate}
\item If $\exists x \in C$ with $f(x)<\gamma$, then $|L_{\text{Dif}}(C)| < (1/2+2g_U(2\gamma))|C|$
\item If $\forall x \in C$, $f(x)>g_L^{-1}(4g_U(2\gamma))$, then $|L_{\text{Dif}}(C)| \geq (1/2+2g_U(2\gamma))|C|$
\end{enumerate}
\end{lemma}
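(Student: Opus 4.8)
The plan is a concentration argument; its only real work is to exploit the cluster structure to pull \emph{every} point of $C$ into the same GTNC noise regime, after which $|L_{\text{Dif}}(C)|$ is read off the GTNC envelopes and a Hoeffding bound finishes. Write $Z_x = 2\cdot\mathbbm{1}\{Q_L(x)=1\}-1\in\{-1,+1\}$, so that $L_{\text{Dif}}(C)=\sum_{x\in C}Z_x$ and $\E[Z_x]=2\eta_x-1$, where $\eta_x=\Pr[Q_L(x)=1]$ equals $\beta_L(x)$ or $1-\beta_L(x)$ according to $\sign(f(x))$. The $Z_x$ are independent and lie in $[-1,1]$, so the whole argument reduces to (a) bounding $\E[L_{\text{Dif}}(C)]$ in the two cases and (b) the Hoeffding tail $\Pr[\,|L_{\text{Dif}}(C)-\E[L_{\text{Dif}}(C)]|>t\,]\le 2e^{-t^2/(2|C|)}$, which with $t$ a fixed multiple of $g_U(2\gamma)|C|$ and $|C|\ge 16\log(4/\delta)/g_U(2\gamma)^2$ is at most $\delta$.

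For (a) I would first unpack the hypothesis $\gamma<g_U^{-1}(g_L(\varepsilon_0)/4)/2$: applying $g_U$ gives $g_U(2\gamma)\le g_L(\varepsilon_0)/4$, hence both $2\gamma\le\varepsilon_0$ and $g_L^{-1}(4g_U(2\gamma))\le\varepsilon_0$. In case (1), if some $x_0\in C$ has $|f(x_0)|<\gamma$, then since $C$ is a $\gamma/d$-cluster every $x\in C$ has $|f(x)|<\gamma+\gamma/d\le 2\gamma\le\varepsilon_0$, so the GTNC envelopes \eqref{eqn:gtnc-label-1} give $\tfrac12\le\beta_L(x)\le\tfrac12+g_U(2\gamma)$ and thus $|\E[Z_x]|\le 2g_U(2\gamma)$ for each $x$, i.e. $|\E[L_{\text{Dif}}(C)]|\le 2g_U(2\gamma)|C|$. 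In case (2), every $x\in C$ has $f(x)>g_L^{-1}(4g_U(2\gamma))>0$, so all Bayes labels in $C$ equal $+1$, $Z_x=2\beta_L(x)-1$, and the GTNC lower bound (using $g_L^{-1}(4g_U(2\gamma))\le\varepsilon_0$ and the ``$g_L(x)=g_L(\varepsilon_0)$ for $x>\varepsilon_0$'' convention together with monotonicity of $g_L$) gives $\beta_L(x)\ge\tfrac12+g_L\!\big(g_L^{-1}(4g_U(2\gamma))\big)=\tfrac12+4g_U(2\gamma)$, hence $\E[L_{\text{Dif}}(C)]\ge 8g_U(2\gamma)|C|$. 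Combining with (b), on the good event $|L_{\text{Dif}}(C)|$ stays near its mean in both cases — below $\approx 3g_U(2\gamma)|C|$ in case (1) and above $\approx 7g_U(2\gamma)|C|$ in case (2) — so a single threshold placed between the two $g_U(2\gamma)$-scale bands (the form $(\tfrac12+2g_U(2\gamma))|C|$ appearing in the statement plays this role) certifies the dichotomy, and the stated lower bound on $|C|$ is exactly what keeps the two bands separated with failure probability $\le\delta$.

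The only genuinely delicate step is setting up this dichotomy, which is where the cluster radius $\gamma/d$ and the cap on $\gamma$ are used: a single near- or far-boundary point must drag all of $C$ into one regime, every argument fed to $g_U,g_U^{-1},g_L,g_L^{-1}$ must land inside $[0,\varepsilon_0]$ and inside the relevant domain/range, and — if the case~(2) hypothesis is read with absolute values — the far-boundary condition must actually be consistent with a tight cluster; it is, since $g_L^{-1}(4g_U(2\gamma))\ge g_L^{-1}(4g_L(2\gamma))\ge 2\gamma>\gamma/d$ forbids such a cluster from straddling the boundary and forces a common sign. Everything after that — the Hoeffding computation and the choice of $t$ — is routine bookkeeping.
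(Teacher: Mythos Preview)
Your approach is essentially the paper's: exploit the cluster radius to place every point of $C$ into the same GTNC regime, read off uniform bounds on $\beta_L(x)$, and finish with a Chernoff/Hoeffding tail. The paper works directly with the count $N_1=|\{x\in C: Q_L(x)=1\}|$ (assuming without loss of generality that the majority true label is $+1$) rather than your $\pm 1$ encoding, but the substance is identical, and your closing remark that the far-from-boundary hypothesis forces a common sign across $C$ is a point the paper's proof glosses over.

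One wrinkle worth naming: in case~(2) you correctly compute $\E[L_{\text{Dif}}(C)]\ge 8g_U(2\gamma)|C|$, but then assert that the threshold $(\tfrac12+2g_U(2\gamma))|C|$ from the statement separates this from the case-(1) band. It does not: with $L_{\text{Dif}}$ read literally as the signed difference $N_1-N_0$, the case-(2) mean sits at the $g_U(2\gamma)$ scale, far below $|C|/2$. The paper's own proof has exactly the same mismatch --- its text argues that $N_1\gtrless(\tfrac12+2g_U(2\gamma))|C|$ in the two cases, which under $L_{\text{Dif}}=2N_1-|C|$ corresponds to a threshold of $4g_U(2\gamma)|C|$ on $|L_{\text{Dif}}|$, not $(\tfrac12+2g_U(2\gamma))|C|$. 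So the statement as written conflates the count with the difference; both your argument and the paper's establish the operationally correct dichotomy (a $g_U(2\gamma)$-scale threshold on $|L_{\text{Dif}}|$, equivalently a $(\tfrac12+\Theta(g_U(2\gamma)))|C|$ threshold on $\max(N_1,N_0)$), which is all the application in Theorem~\ref{TNC:AID} needs.
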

\begin{proof}
Assume without loss of generality that the true label of the majority of points in $C$ is 1. 
\\
\\
\noindent Proof of (1): If there exists a point $x \in C$ with $f(x)<\gamma$, then the entire entire cluster lies within margin $\gamma+\gamma/d<2\gamma$. By assumption $2\gamma<\varepsilon_0$, so the probability that a point measures as 1 is at most $1/2+g_U(2\gamma)$ using Equation \eqref{eqn:gtnc-label-1}. The probability that more than $(1/2+2g_U(2\gamma))|C|$ points label as 1 is then given by a Chernoff bound:
\[
Pr[L_{\text{Dif}}(C) \geq (1/2+2g_U(2\gamma))|C|-1] \leq e^{-\frac{g_U(2\gamma)^2|C|}{16}} \leq \delta/4
\]
Since we have assumed the majority label is 1, the probability that more than $(1/2+2g_U(2\gamma))|C|$ label as 0 is upper bounded by this as well.
\\
\\
Proof of (2): Assume $\forall x \in C$ we have $f(x)<g_L^{-1}(4g_U(2\gamma))$. Since $g_L^{-1}(4g_U(2\gamma)) < \varepsilon_0$ by assumption, the probability that any point measures as 1 is at least $1/2+4g_U(2\gamma)$ using Equation \eqref{eqn:gtnc-label-1}. The probability that less than $(1/2+2g_U(2\gamma))|C|$ points label as 1 is then given by a Chernoff bound:
\[
Pr[L_{\text{Dif}}(C) \leq (1/2+2g_U(2\gamma))|C|] \leq e^{-\frac{g_U(2\gamma)^2|C|}{2}} \leq \delta/2
\]
\end{proof}
The idea is now to follow the structure of Theorems~\ref{TNC} and \ref{massart:aid} with the one exception that we will check the closeness of every cluster to the decision boundary by checking whether $|L_{Dif}(C)| \geq (1/2+2g_U(2\gamma))$. If a cluster measures as too close, we will avoid labeling the points, preserving the reliability of the algorithm.
\begin{proof}(Proof of Theorem~\ref{TNC:AID})
\\
To ensure that our coverage is wide enough, we will need to set the margin parameter such that for any hyperplane, the probability mass of points within margin $2g_L^{-1}(4g_U(2\gamma))$ is at most $\varepsilon/2$. By our anti-concentration bound, it is enough to let $\gamma$ be:
\[\gamma = \frac{g_U^{-1}\left(\frac{g_L\left(\varepsilon'\right)}{4}\right)}{2}, \varepsilon' = \min\left(\frac{\varepsilon}{4c_2},\frac{\varepsilon_0}{2}\right).
\]
Our goal is to learn the rest of the space up to $\varepsilon/2$ error via Theorem~\ref{TNC}, assuming for the moment that the modification from Lemma~\ref{margin-test} will cause at most an overall loss of $\varepsilon/2$ coverage. Noting that our space has good average inference dimension, i.e. $\mathcal{ACC}_{d,c_1,c_2} \subset \mathcal{A}_{(X,\mathcal{H}),1}$ \cite{AID}, we will achieve this by applying Lemma~\ref{sample}. Thus we need to prove that Thereom~\ref{TNC} can be used to learn a $(1-\varepsilon/6)$ fraction of random samples $S$ with probability at least $(1-\varepsilon/6)$ while querying only $(1-\varepsilon/6)$ points.
\\
\\
To begin, we must set $\varepsilon_T$ to detect $\gamma/d$-clusters: 
\begin{align*}
4\varepsilon_T &= g_L \left(\frac{g_U^{-1}\left(\frac{g_L\left(\frac{g_U^{-1}\left(\frac{g_L\left(\varepsilon'\right)}{4}\right)}{4d} \right)}{2} \right)}{2} \right),
\end{align*}
and set the size of $S$ such that the algorithm in Theorem~\ref{TNC} only queries an $\varepsilon/6$ fraction of points.
Letting $N$ be the total number of points queried as given in Theorem~\ref{TNC}, it is then sufficient for $|S|$ to satisfy:
\begin{align*}
|S|&= \Theta \left(\frac{N}{\varepsilon}\right)\\
N &= \tilde{\bigo} \left (\frac{k^6}{g_L(g_U^{-1}(\varepsilon_T/2))^8}\log^{2}\left(\frac{1}{\delta_r}\right)\log^2\left(\frac{1}{\varepsilon}\right) \right ).\\
%N &= \bigo \left (\frac{k^5}{g_L(g_U^{-1}(\varepsilon_T/2))^{15/2}}\log^{5/4}\left(\frac{k}{g_L(g_U^{-1}(\varepsilon_T/2))\delta_r}\right)\log\left(\frac{k\log(1/\varepsilon)\log(1/\delta_r)}{\delta_ug_L(g_U^{-1}(\varepsilon_T/2))} \right )\log(1/\varepsilon) \right )
%&= \Theta \left(\frac{n^{5/4}\log(N)\log(n\log(1/\varepsilon)/\delta')}{\varepsilon}+\frac{1}{\delta} \right)\\
%&= \Theta \left(\frac{a^{5/4}\log^{5/4}(ab\log(1/\delta')\log(1/\varepsilon)/\delta)\log(N)\log(n\log(1/\varepsilon)/\delta')}{\varepsilon}+\frac{1}{\delta} \right)\\
%a&=\frac{d^4\log^4(d)\log^4(N)}{\gamma_1^{7/3}\varepsilon_T^{8/3}}, b= 4ID(d)+\frac{\log(1/\delta)}{g_L(\gamma)^2}
\end{align*}
Note that due to the distributional conditions, the inference dimension $k$ of our sample is $O(d\log(d)\log(|S|))$ with probability at least $1-\varepsilon/12$ \cite{AID}. Applying the same argument from Theorem~\ref{massart:aid} then gives that the learner of Theorem~\ref{TNC} satisfies the conditions of Lemma~\ref{sample}. Thus to have coverage $1-\varepsilon/2$ with probability $1-\delta_u$ and reliability $1-\delta_r$, it is sufficient to set our $\delta_r$ to $\bigo\left (\frac{\delta_r}{\log(1/\delta_u)} \right)$ and run the algorithm from Theorem~\ref{TNC} $\bigo(\log(1/\delta_u))$ times.
\\
\\
We have ignored, up until now, the modification to Theorem~\ref{TNC} in the cluster step. If a subset measures as equitable, after slotting our extra points to obtain the $\gamma/d$-cluster $C$, we use Lemma~\ref{margin-test} to test the margin of $C$. If the cluster has margin at least $g_L^{-1}(4g_U(2\gamma))$, the test passes with high probability. Likewise, if the cluster has margin less than $\gamma$, the test fails with high probability. If the test fails, we skip the iteration of the weak learner. 
%Note that the sample size needed to guarantee that our test holds with probability $1-\delta_r$ over all iterations is less than that for Lemma~\ref{cluster:label}, meaning that the probability that our test fails is less than $\delta_r$ in total.
\\
\\
How does this modification affect our reliability and coverage? A point can only be mislabeled if the test passes on a cluster with margin less than $\gamma$. Over all iterations of the learner, the probability of this occurring is less than $1-\delta_r$, so our reliability guarantee is maintained up to a constant. To analyze coverage, note that Lemma~\ref{cluster:id} only infers points within the $d$-convex hull of the cluster C. Thus, if C is $\gamma/d$-cluster which does not have margin $g_L^{-1}(4g_U(2\gamma))$, it infers points within at most a $2g_L^{-1}(4g_U(2\gamma))$ margin. Since we set $\gamma$ such that this region has at most $\varepsilon/2$ probability mass, we lose at most $\varepsilon/2$ coverage for skipping clusters with margin less than $g_L^{-1}(4g_U(2\gamma))$. We are left then with the loss in coverage caused by our test failing on a cluster with margin at least $g_L^{-1}(4g_U(2\gamma))$. Since this only occurs with probability $1-\delta_r$ by Lemma~\ref{margin-test}, for small enough $\delta_r$ this only changes the constant on the coverage probability of our weak learner, and thus has no asymptotic affect.
\\
\\
%However, we have already set the reliability parameter to deal with this. Recall that the initial guarantee on the probability of our weak learner in Lemma~\ref{lemma:weak-learner-massart} before amplification is $(1-\delta_r/2)/2$--an argument which carries over to Theorem~\ref{TNC}. The additional failure probability of $\delta_r/2$ from the margin test returns this to its original state of $(1-\delta_r)/2$, which then gives the same analysis for the remainder of the proof.
The total query complexity is then given by the complexity for running Theorem~\ref{TNC} $O(\log(1/\delta_u))$ times with the appropriate parameters:
\[
q(\varepsilon,\delta_r,\delta_u) =\tilde{\bigo} \left (\frac{d^{11}}{g_L(g_U^{-1}(\varepsilon_T/2))^{14}}\log^{2}\left( \frac{1}{\delta_r} \right)\log\left(\frac{1}{\delta_u}\right) \log^2\left(\frac{1}{\varepsilon}\right)\right )
%q(\varepsilon,\delta,\delta') =\tilde{\bigo} \left (\left(\frac{d\log\left(\frac{1}{\varepsilon}+\frac{1}{\delta_u}\right)}{g_L(g_U^{-1}(\varepsilon_T/2))^{3/2}}\right)^9\log^{9/4}\left( \frac{1}{\delta_r} \right)\log\left (\frac{1}{\delta_u} \right)\log\left(\frac{1}{\delta_r\delta_u}\right)\log\left(\frac{1}{\varepsilon}\right)\right ).
\]
\end{proof}
Since s-concave distributions satisfy the requisite distributional properties \cite{s-concave},
\begin{corollary}\label{dd:cor:TNC}
The hypothesis class $(\R^d,H_d)$ is ARPU-learnable under model $(TNC(m,M,\kappa,\varepsilon_0), \mathcal{ISC}_d)$ with query complexity:
\begin{align*}
q(\varepsilon,\delta_r,\delta_u) &=\tilde{\bigo} \left( \frac{2^{14\kappa} M^{42} d^{14 \kappa-3}}{m^{56}\varepsilon'^{14(\kappa-1)}}\log^{2}\left( \frac{1}{\delta_r} \right)\log\left (\frac{1}{\delta_u} \right)\right )\\
\varepsilon' &= \min\left(\frac{\varepsilon}{16},\frac{\varepsilon_0}{2}\right).
\end{align*}
\end{corollary}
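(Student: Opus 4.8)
The plan is to obtain Corollary~\ref{dd:cor:TNC} as a direct specialization of Theorem~\ref{TNC:AID}. Two things need to be checked: first, that every isotropic $s$-concave distribution lies in $\mathcal{ACC}_{d,c_1,c_2}$ for absolute constants $c_1,c_2$, so that Theorem~\ref{TNC:AID} applies; and second, that instantiating the generalized Tsybakov bound with the ordinary Tsybakov functions $g_L(x)=mx^{\kappa-1}$, $g_U(x)=Mx^{\kappa-1}$ and simplifying the composition of the $G_c$'s yields the stated closed form.

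For the first point I would recall the standard geometry of isotropic $s$-concave measures. Isotropy forces $\mathbb{E}_{x\sim D}\|x\|^2=d$, hence $\mathbb{E}\|x\|\le\sqrt d$, and Markov's inequality gives $\Pr_{x\sim D}[\|x\|>d\alpha]\le 1/(\sqrt d\,\alpha)\le 1/\alpha$, which is condition~1 of $\mathcal{ACC}$ with $c_1=\bigo(1)$ (for $s$-concave this holds for $\alpha>16$, which suffices). For condition~2 one uses that, for $s\ge-\frac{1}{2d+3}$, the one-dimensional marginal of an isotropic $s$-concave distribution along any unit vector (shifted arbitrarily) has density bounded by an absolute constant, so $\Pr_{x\sim D}[|\langle x,v\rangle+b|\le\alpha]\le c_2\alpha$ with $c_2=\bigo(1)$; both facts are proved in \cite{s-concave} and restated in \cite{AID}. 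Thus $\mathcal{ISC}_d\subseteq\mathcal{ACC}_{d,c_1,c_2}$, so Theorem~\ref{TNC:AID} applies with $\varepsilon'=\min(\varepsilon/(4c_2),\varepsilon_0/2)=\Theta(\min(\varepsilon,\varepsilon_0))$, which we absorb into the stated $\varepsilon'=\min(\varepsilon/16,\varepsilon_0/2)$.

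For the second point, the key observation is that on $[0,\varepsilon_0]$ the TNC bounds are exactly the GTNC bounds with the $g_L,g_U$ above (increasing for $\kappa>1$; the $\kappa=1$ case is Massart noise, covered by Theorem~\ref{massart:aid}), so $g_U^{-1}(y)=(y/M)^{1/(\kappa-1)}$ and the operator $G_c(x)=g_U^{-1}(g_L(x)/c)=(m/(cM))^{1/(\kappa-1)}x$ is \emph{linear in $x$}. Consequently the nested quantity $g_L\circ G_8\circ\frac{G_2\circ\frac{G_4(\varepsilon')}{4d}}{2}$ in the denominator of Theorem~\ref{TNC:AID} collapses to $g_L$ of a constant multiple of $\varepsilon'/d$; explicitly it equals $m\left(\left(\frac{m^3}{64M^3}\right)^{1/(\kappa-1)}\frac{\varepsilon'}{8d}\right)^{\kappa-1}=\Theta\left(\frac{m^4}{M^3}\left(\frac{\varepsilon'}{d}\right)^{\kappa-1}\right)$, where the hidden factor is absolute up to a $2^{\bigo(\kappa)}$ term. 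Raising this to the $14$th power and forming $d^{11}$ over it, and using $d^{11}\cdot d^{14(\kappa-1)}=d^{14\kappa-3}$, gives the query complexity $\tilde{\bigo}\left(\frac{2^{\bigo(\kappa)}M^{42}d^{14\kappa-3}}{m^{56}\varepsilon'^{14(\kappa-1)}}\log^2\frac{1}{\delta_r}\log\frac{1}{\delta_u}\right)$, which is the claimed bound (the $\log$-in-$\varepsilon^{-1}$ factors from Theorem~\ref{TNC:AID} being absorbed by $\tilde{\bigo}$).

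The only real obstacle is bookkeeping: one must carefully track how the three divisions ($/4d$, $/2$, and the internal constants $2,4,8$ inside $G_8,G_2,G_4$) and the three linear maps $G_4,G_2,G_8$ combine, so that the $d$-exponent lands at $14\kappa-3$, the $M$-exponent at $42$, and the $m$-exponent at $56$. There is no conceptual difficulty beyond the distributional inclusion of step one, which is already available from \cite{s-concave,AID}. I would also double-check that the $2^{\bigo(\kappa)}$ constant power is absorbed into $\tilde{\bigo}(\cdot)$ consistently with how the paper states the analogous specialization following Theorem~\ref{TNC}.
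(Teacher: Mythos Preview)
Your approach is correct and is exactly the paper's: the corollary is obtained by noting that isotropic $s$-concave distributions lie in $\mathcal{ACC}_{d,c_1,c_2}$ (the paper cites \cite{s-concave,AID} for this) and then plugging $g_L(x)=mx^{\kappa-1}$, $g_U(x)=Mx^{\kappa-1}$ into Theorem~\ref{TNC:AID}. In fact you supply more detail than the paper, which gives only the one-line justification ``since $s$-concave distributions satisfy the requisite distributional properties.''

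One small correction on your last remark: the $2^{\bigo(\kappa)}$ factor is \emph{not} absorbed by $\tilde{\bigo}$---the paper's notation (Section~\ref{sec:notation}) reserves $\tilde{\bigo}$ for hiding only polylogarithmic factors, and indeed $2^{14\kappa}$ appears explicitly in the stated bound. Your computation already produces such a factor explicitly, so just carry it through rather than absorbing it.
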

When compared with the query complexity of the label only PAC-learning algorithm of \cite{hanneke2015minimax}, Corollary~\ref{dd:cor:TNC} only shows improvement for a small range of parameters $1 < \kappa < \frac{15}{14}$. However, it is not clear to the authors that Hanneke and Yang's algorithm can be extended to an ARPU learner without substantially increasing the query complexity with respect to dimension.

\bibliographystyle{unsrt}  
\bibliography{references} 

%\section{Algorithms}
%\input{algo}
\end{document}